\theoremstyle{plain}
\newtheorem{theorem}{Theorem}[section]
\newtheorem{lemma}[theorem]{Lemma}
\theoremstyle{definition}
\theoremstyle{remark}
\title{A Unified Framework for Forward and Inverse Problems in Subsurface Imaging using Latent Space Translations}
\author{Naveen Gupta$^{1}$\footnotemark[1] , \ Medha Sawhney$^{1}$\footnotemark[1] , \ Arka Daw$^{2}$\thanks{Equal Contribution. Corresponding authors: \textit{dawa@ornl.gov, \{naveengupta, medha\}@vt.edu}} , \ Youzuo Lin$^{3}$,   Anuj Karpatne$^{1}$ \\
$^{1}$Virginia Tech $^{2}$Oak Ridge National Lab $^{3}$UNC at Chapel Hill\\
}
\begin{document}
\maketitle

\begin{abstract}
In subsurface imaging, learning the mapping from velocity maps to seismic waveforms (forward problem) and waveforms to velocity (inverse problem) is important for several applications. While traditional  techniques for solving forward and inverse problems are computationally prohibitive, there is a growing interest in leveraging recent advances in deep learning to learn the mapping between velocity maps and seismic waveform images directly from data.
Despite the variety of architectures explored in previous works, several open questions remain unanswered such as the effect of latent space sizes, the importance of manifold learning, the complexity of translation models,  and the value of jointly solving forward and inverse problems.
We propose a unified framework to systematically characterize prior research in this area termed the Generalized Forward-Inverse  (GFI) framework, building on the assumption of manifolds and latent space translations. 
We show that GFI encompasses previous works in deep learning for subsurface imaging,  which can be viewed as specific instantiations of GFI.
We also propose two new model architectures within the framework of GFI: Latent U-Net and Invertible X-Net, leveraging the power of U-Nets for domain translation and the ability of IU-Nets to simultaneously learn  forward and inverse translations, respectively.
We show that our proposed models achieve state-of-the-art performance for forward and inverse problems on a wide range of synthetic datasets and also investigate their zero-shot effectiveness on two real-world-like datasets. The code is available at \url{https://github.com/KGML-lab/Generalized-Forward-Inverse-Framework-for-DL4SI}
\end{abstract}

\section{Introduction}
The goal of subsurface imaging is to identify the structure and geophysical properties of layers underneath the Earth's surface such as \textit{velocity maps} ($v$), which is useful for a number of applications including energy exploration, carbon capture and sequestration, and developing earthquake early warning systems \citep{zhang2013time}. 
A typical approach for subsurface imaging is to conduct seismic surveys, where the elastic wave energy generated from one or more controlled sources is made to propagate through the Earth's layers that are refracted, reflected, and received as \textit{seismic waveforms} ($p$)  on the surface by multiple receivers. 
{Mathematically, the relation between velocity maps and seismic waveforms is governed by the following acoustic wave equation:
\begin{equation}
    \nabla^2 p(x, z, t) - \frac{1}{v^2(x, z)}\frac{\partial^2}{\partial t^2}p(x, z, t) = s(x,z,t), \label{eq:wave_eq}
\end{equation}
where $x$, $z$, and $t$ represent the horizontal direction, depth, and time, respectively, and $\nabla^2$ is the Laplacian operator. Using Equation \ref{eq:wave_eq}, the \textit{forward problem} in subsurface imaging is to compute seismic waveforms $p$ given velocity maps $v$ by learning the forward mapping $f_{v \rightarrow p}: \mathcal{V} \rightarrow \mathcal{P}$. The \textit{inverse problem}, also referred to as Full Waveform Inversion (FWI), is then to learn the reverse mapping $f_{p \rightarrow v}: \mathcal{P} \rightarrow \mathcal{V}$ for inferring velocity maps $v$ given observations of $p$.



While solving the forward problem requires expensive numerical solutions \citep{tago2012modelling} of Equation \ref{eq:wave_eq}, learning the inverse is even more computationally demanding as it involves iterating over velocity maps $v$ to minimize the difference between simulated waveforms from the forward model and ground-truth observations of $p$ \citep{tarantola1988theoretical}. 
As a result, traditional techniques for solving forward and inverse problems in subsurface imaging are difficult to scale in operational settings at the desired resolutions.
In response, there is a growing interest in the geophysics community to leverage deep learning  for subsurface imaging (DL4SI), in particular, using image-to-image-translation methods to map seismic waveforms to velocity maps and back directly from data.

\begin{wrapfigure}{r}{0.5\textwidth}
    \centering
    \includegraphics[width=0.5\textwidth]{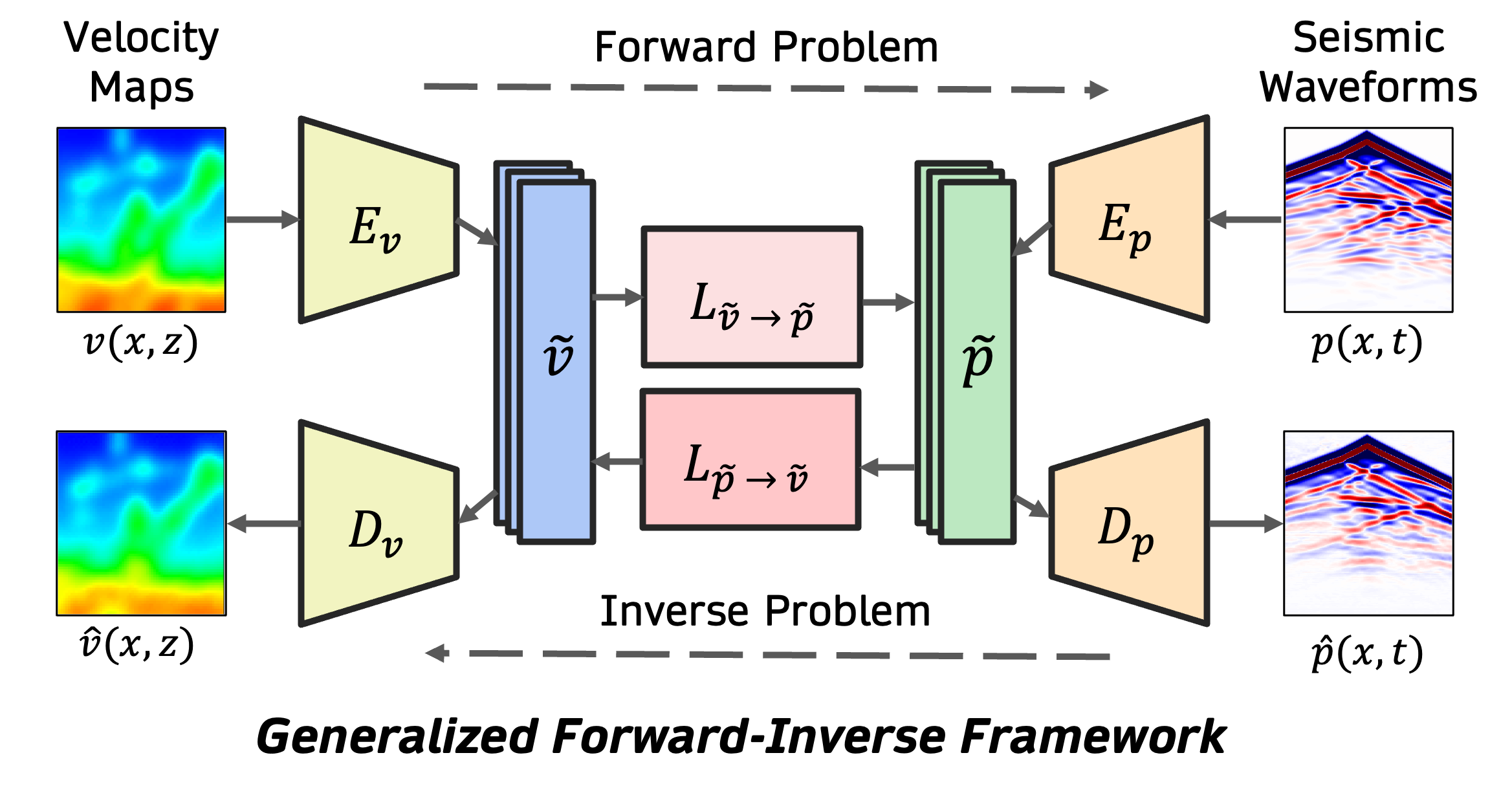}
    \caption{A unified framework for solving forward and inverse problems in subsurface imaging.}
    \label{fig:GFI_framework}
\end{wrapfigure}


\textcolor{black}{For example, several deep learning methods have recently been proposed for FWI using encoder-decoder architectures such as InversionNet \citep{wu2019inversionnet} and VelocityGAN \citep{zhang2019velocitygan}, where the encoder maps seismic data to latent feature representations that are decoded back to the velocity domain \citep{lin2023physics}. Techniques for solving the forward problem include the seminal work on physics-informed neural networks (PINNs) \citep{raissi2019physics} and recent works such as Fourier Neural Operators \citep{li2020fourier}. 
A recent work has also pursued solving both forward and inverse problems together using a common architecture, Auto-Linear \citep{pmlr-v235-feng24a}, where two auto-encoders are first separately trained to project velocity and waveform fields to higher-dimensional manifolds, which are then linked via linear transforms for manifold translation.}


Despite the variety of architectures explored in previous works in DL4SI, several open questions still remain unanswered such as: 
(1) How does the \textit{size of  latent spaces} in encoder-decoder frameworks influence the quality of domain translations in DL4SI? While previous works such as InversionNet  used lower-dimensional embeddings in their bottleneck layers, recent methods such as Auto-Linear advocate for  higher-dimensional mappings.
(2) Do we need \textit{complex architectures} for latent space translations or are linear transforms sufficient?
(3) What is the role of \textit{manifold learning} in DL4SI where latent spaces are used not just for translation but also for reconstruction? 
(4) Can we simultaneously solve \textit{both forward and inverse problems} using joint frameworks?

To answer these questions, we propose a unified framework to systematically characterize prior research in DL4SI termed the \emph{Generalized Forward-Inverse}  (GFI) framework (see Figure \ref{fig:GFI_framework}). The GFI framework builds on two key philosophies that are at the core of prior research in DL4SI. First, we assume that both velocity $v$ and seismic waveforms $p$ can be projected to latent space manifolds $\tilde{v}$ and $\tilde{p}$ using encoder-decoder pairs that allow reconstruction of the original domains, referred to as the \textit{manifold assumption}. Second, we assume that it is possible to learn bidirectional translations between the two manifolds $\tilde{v}$ and $\tilde{p}$, referred to as the \textit{latent space translation assumption}. We show that GFI encompasses previous works in DL4SI including InversionNet, VelocityGAN, and Auto-Linear, which can be viewed as specific instantiations of GFI. This unifying perspective helps us analyze the field of DL4SI going beyond specific architecture choices and also discover novel formulations in DL4SI such as those explored in this paper.


In particular, we propose two new  architectures within the framework of GFI: \emph{Latent U-Net} (see Figure \ref{fig:latent_unet}) and \emph{Invertible X-Net} (see Figure \ref{fig:invertible_xnet}). Latent U-Net uses two separate U-Nets \citep{ronneberger2015u} to model latent space translations in both directions. This takes inspiration from the success of U-Nets in mainstream computer vision applications to explore their benefits in DL4SI. Invertible-XNet uses Invertible U-Net (IU-Net) \citep{etmann2020iunets} to simultaneously learn both forward and inverse translations in the latent space using a single  architecture. Invertible U-Nets builds on the idea of Invertible Neural Networks (INNs) \citep{ardizzone2018analyzing} to learn bijective mappings between input and output domains in DL4SI.
We show that both Latent U-Net and Invertible X-Net achieve state-of-the-art (SOTA) performance for forward and inverse problems on a wide range of benchmark datasets commonly used in DL4SI. We show that jointly solving forward and inverse problems using Invertible X-Net especially helps in improving the accuracy on the forward problem.
We also investigate the zero-shot effectiveness of our proposed approaches on synthetic and real-world-like datasets, contributing to novel insights in the field of DL4SI.

\begin{figure}[t!]
    \centering
    \begin{subfigure}{0.45\textwidth}
    \centering
        \includegraphics[width=\linewidth]{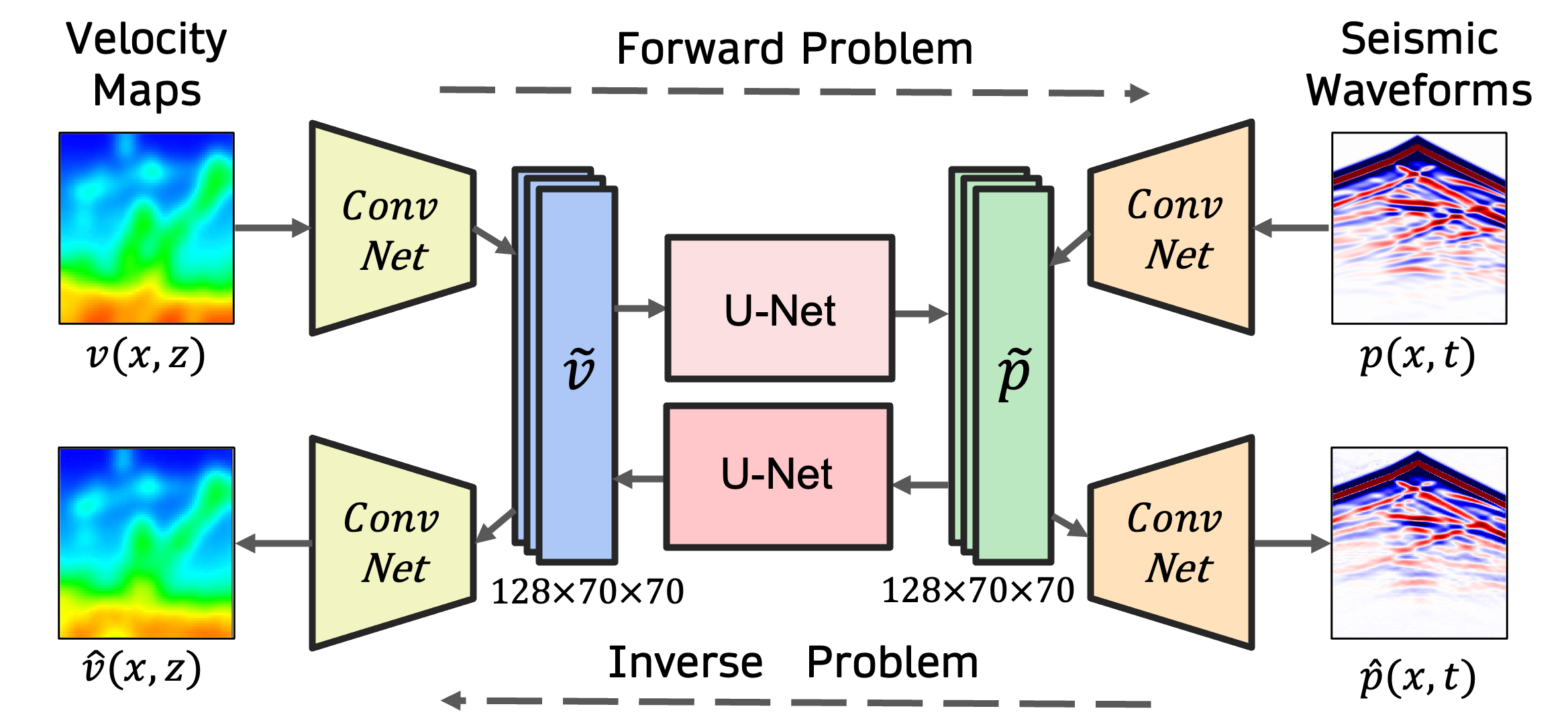}
        \caption{Latent U-Net architecture}
        \label{fig:latent_unet}
    \end{subfigure}
    \hfill
     \begin{subfigure}{0.45\textwidth}
     \centering
         \includegraphics[width=\linewidth]{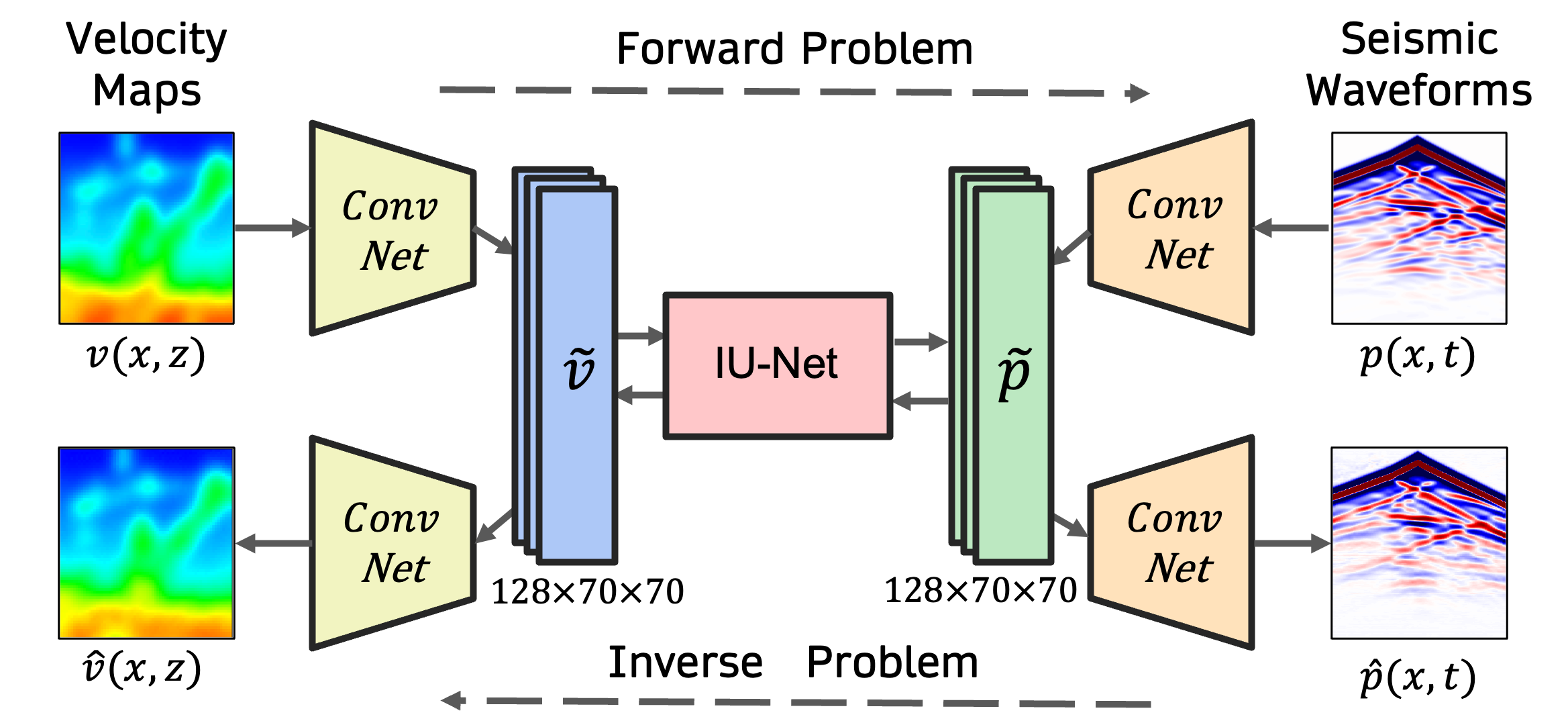}
        \caption{Invertible X-Net architecture}
        \label{fig:invertible_xnet}
    \end{subfigure}
    \caption{Schematic representation of proposed Latent U-Net and Invertible X-Net model.}
    \label{fig:latent_architectures}
\end{figure}

\section{Related Works}
\vspace{-2ex}
\subsection{Deep Learning for Subsurface Imaging (DL4SI)} 
Solving inverse problems in DL4SI has received considerable attention in recent years, thanks to the seminal work by \citet{deng2022openfwi} in creating OpenFWI, an open-access synthetic dataset involving paired examples of velocity and waveforms with varying levels of complexity. This has enabled the formulation of FWI as an image-to-image translation problem, where the relationship between waveforms and velocity is learned through supervised, unsupervised, or self-supervised methods \citep{lin2023physics}. One of the early works in supervised learning for FWI includes InversionNet \citep{wu2019inversionnet}, which uses an encoder-decoder architecture to project waveforms into flattened vector representations acting as the bottleneck layer, which are decoded back to velocity maps.
VelocityGAN \citep{zhang2019velocitygan} builds on the idea of InversionNet by using an adversarial training scheme for learning the model parameters along with prediction loss.
These works have also been extended for estimating 3D velocity fields in \citet{feng2021fwi}.

Since ground-truth observations of velocity maps are difficult to obtain in real-world settings, a number of unsupervised and self-supervised formulations have also been explored for FWI. This includes the framework of UPFWI \citep{jin2021unsupervised} that uses the supervision of a physics-based forward model of the wave equation to enforce cycle-consistency with a data-driven inverse model. While the training of UPFWI is completely unsupervised and achieves comparable performance as InversionNet in some settings, it requires computationally expensive simulations of a forward model at every training epoch, restricting its scalability. \textcolor{black}{Recent advancements, such as IFWI \cite{10562362}, FWIPLR \cite{10274489}, and FWIRLG \cite{10035474}, integrate physics-based constraints and deep learning to tackle challenges like low-quality data and geological complexity.} A recent line of work in semi-supervised learning for FWI involves the framework of Auto-Linear \citep{pmlr-v235-feng24a}. In this  framework, both velocity and waveforms are first mapped to higher-dimensional manifolds using two auto-encoders separately trained to minimize reconstruction loss in their respective domains. This is followed by a supervised learning of translations between the manifolds using two independent linear transforms (one for forward and one for inverse), building on previous observations of linearity in high-dimensional spaces for FWI \citep{feng2022intriguing}.

\subsection{Domain Translation Methods}

The goal in domain translation \citep{isola2017image, zhu2017unpaired, choi2018stargan} is to learn the mapping between input and output domains (e.g., the CT scan of a patient and the organ segmentation from the scan), either in a paired or unpaired setting. 
A recent work on Latent Space Mapping (LSM) \citep{mayet2023domain} has introduced a unifying framework for domain translation based on the manifold assumption (i.e., both input and output domains reside in low-dimensional manifolds learned by minimizing reconstruction loss) and shared latent space assumption (i.e., both manifolds reside in a shared latent space making it possible to translate from one manifold to another). We take inspiration from LSMs in domain translation to develop our proposed framework of GFI.

\section{Proposed Methodologies}

\subsection{Generalized Forward Inverse  Framework}
As shown schematically in Figure \ref{fig:GFI_framework}, our proposed framework of GFI  unifies the learning of the forward problem $f_{v \rightarrow p}: \mathcal{V} \rightarrow \mathcal{P}$ and the inverse problem $f_{p \rightarrow v}: \mathcal{P} \rightarrow \mathcal{V}$ using a common architecture involving latent space translations.
The GFI framework is inspired by two key assumptions. 
First, according to the \textit{manifold assumption}, we assume that the velocity maps $v \in \mathcal{V}$ and seismic waveforms $p \in \mathcal{P}$ can be projected to their corresponding latent space representations, $\tilde{v}$ and $\tilde{p}$, respectively, which can be mapped back to their reconstructions in the original space, $\hat{v}$ and $\hat{p}$. Note that the sizes of the latent spaces can be smaller or larger than the original spaces. Further, the size of $\tilde{v}$ may not match with the size of $\tilde{p}$ (e.g., in Auto-Linear). Second, according to the \textit{latent space translation assumption}, we assume that 
the problem of learning forward and inverse mappings in the original spaces of velocity and waveforms
can be reformulated as learning  translations in their latent spaces.
We provide theoretical justifications for the existence of these latent space translations in the Appendix \ref{sec:theory}. Note that these assumptions can also be relaxed in specific instantiations of GFI. 


Formally, let $\mathtt{E}_v: \mathcal{V} \rightarrow \tilde{\mathcal{V}}$ and $\mathtt{E}_p: \mathcal{P} \rightarrow \tilde{\mathcal{P}}$ be the encoders that map velocity and waveforms to their respective latent representations, $\tilde{v}=\mathtt{E}_v(v)$ and $\tilde{p}=\mathtt{E}_p(p)$. Similarly, let $\mathtt{D}_v: \tilde{\mathcal{V}} \rightarrow \mathcal{V}$ and $\mathtt{D}_p: \tilde{\mathcal{P}} \rightarrow \mathcal{P}$ be the decoders that map the latent  representations back to the original spaces.
Additionally, let $L_{\tilde{v} \rightarrow \tilde{p}}: \tilde{\mathcal{V}} \rightarrow \tilde{\mathcal{P}}$ and $L_{\tilde{p} \rightarrow \tilde{v}}: \tilde{\mathcal{P}} \rightarrow \tilde{\mathcal{V}}$ be the latent space translations between $\tilde{v}$ and $\tilde{p}$ and vice versa.
The forward and inverse mappings between velocity $v$ and waveform $p$ can then be formulated as:
\begin{align}
    &\hat{p} =  f_{v \rightarrow p}(v) =  \mathtt{D}_p \circ L_{\tilde{v} \rightarrow \tilde{p}} \circ \mathtt{E}_v(v) \\ 
    &\hat{v} =  f_{p \rightarrow v}(p) =  \mathtt{D}_v \circ L_{\tilde{p} \rightarrow \tilde{v}} \circ \mathtt{E}_p(p)
\end{align}


\par \noindent \textbf{Learning GFI with Paired Data:} Let us assume that we have a paired i.i.d. dataset $(v, p) \sim q(v, p)$, where $q(v, p)$ is a joint probability distribution. We can then learn the parameters of the GFI framework in a supervised manner by optimizing the following loss functions:
\begin{align}
    &\min_{\theta_{\mathtt{E}_v}, \theta_{L_{\tilde{v} \rightarrow \tilde{p}}}, \theta_{\mathtt{D}_p}} \mathbb{E}_{q(v, p)} \mathcal{L}(\mathtt{D}_p \circ L_{\tilde{v} \rightarrow \tilde{p}} \circ \mathtt{E}_v(v), p) = \mathcal{L}(\hat{p}, p) \label{eq:forward_supervised_loss}\\
    &\min_{\theta_{\mathtt{E}_p}, \theta_{L_{\tilde{p} \rightarrow \tilde{v}}}, \theta_{\mathtt{D}_v}} \mathbb{E}_{q(v, p)} \mathcal{L}(\mathtt{D}_v \circ L_{\tilde{p} \rightarrow \tilde{v}} \circ \mathtt{E}_p(p), v) = \mathcal{L}(\hat{v}, v) \label{eq:inverse_supervised_loss}
\end{align}
where $\mathcal{L}$ denotes a loss function such as the Mean Squared Error (MSE) loss, Mean Absolute Error (MAE) loss, or even an adversarial loss if trained with a  discriminator.


\subsection{Prior Works in DL4SI as Special Cases of GFI}
\textcolor{black}{In this section, we revisit previous works in DL4SI and describe them using GFI terminologies, i.e., \textit{modeling mode} (forward/inverse), \textit{size of latent space}, whether they use \textit{manifold learning} or not, and the nature of \textit{latent space translations}. Table \ref{tab:gfi_table} summarizes prior works as specific realizations of the unifying GFI Framework.} 
Schematic illustrations of all prior works in the framework of GFI are provided in the Appendix Figure \ref{fig:previous_works}.


\begin{table}[h]
\caption{Characterizing prior and proposed works using GFI Framework.}
\label{tab:gfi_table}

\begin{center}
\fontsize{7pt}{7pt}\selectfont 
\begin{tabular}{ccccc}
\toprule[1pt]
Model & Modeling Mode & 
\begin{tabular}[c]{@{}c@{}}Manifold \\ Learning\end{tabular} & \begin{tabular}[c]{@{}c@{}}Latent Space\\ Translation\end{tabular} & \begin{tabular}[c]{@{}c@{}}Sizes of\\ Latent Spaces\end{tabular} \\ \toprule[1pt]
InversionNet  & Inverse   & No   & Identity   & Low   \\ \midrule
VelocityGAN      & Inverse         & No       & Identity    & Low    \\ \midrule
FNO         & Forward              & No         & Fourier Layers  & N/A   \\ \midrule
Auto-Linear   & \begin{tabular}[c]{@{}c@{}}Forward and \\ Inverse Disjointly\end{tabular}    & Yes    & Linear    & \begin{tabular}[c]{@{}c@{}}High and \\ Different\end{tabular}       \\ \midrule
\begin{tabular}[c]{@{}c@{}}Latent U-Net\\ (ours)\end{tabular}     & \begin{tabular}[c]{@{}c@{}}Forward and\\ Inverse Disjointly\end{tabular}     & Possible  & U-Net     & \begin{tabular}[c]{@{}c@{}}Low and \\ Same\end{tabular}                                                                         \\ \midrule
\begin{tabular}[c]{@{}c@{}}Invertible X-Net\\ (ours)\end{tabular} & \begin{tabular}[c]{@{}c@{}}Forward and\\ Inverse Simultaneously\end{tabular} & Implicit   & IU-Net & \begin{tabular}[c]{@{}c@{}}Low and \\ Same\end{tabular}   \\ \bottomrule[1pt]
\end{tabular}
\end{center}
\end{table}

\par \noindent \textbf{InversionNet and VelocityGAN:}  
\textcolor{black}{Both InversionNet and VelocityGAN operate exclusively in the \textit{inverse mode}, mapping seismic waveforms to velocity maps. They utilize the same encoder-decoder architecture, with the encoder $\mathtt{E}_p$ projecting seismic waveforms ($p$) to a lower-dimensional latent space and the decoder $\mathtt{D}_v$ reconstructing the velocity maps ($v$) from this representation. 
The \textit{sizes of the latent spaces} are low-dimensional, determined by the bottleneck layer of the U-Net architecture.}
\textcolor{black}{In terms of \textit{latent space translation}, these models rely on an identity mapping in the bottleneck layer, $L_{\tilde{p} \rightarrow \tilde{v}} = I$. However, we can also interpret them as having a U-Net for translation, if we view the first few and last few layers as the encoder-decoder pair while the intermediate layers perform translation. The sizes of the latent spaces are low-dimensional and they do not perform \textit{manifold learning} since the latent spaces of $\tilde{p}$ and $\tilde{v}$ are solely used for translation and not for reconstruction.}


\par \noindent \textbf{Auto-Linear:} \textcolor{black}{Auto-Linear operates in both \textit{forward and inverse modes}, using two disjoint latent space translations. It follows a two-stage approach: first, encoder-decoder pairs ( $\mathtt{E}_v$, $\mathtt{D}_v$, $\mathtt{E}_p$ and $\mathtt{D}_p$) are independently trained using reconstruction loss to project $v$ and $p$ into high-dimensional latent spaces ($\tilde{v}$ and $\tilde{p}$), where $dim(\tilde{v}) \neq dim(\tilde{p})$. Second, linear transforms $L_{\tilde{v} \rightarrow \tilde{p}}$ and  $L_{\tilde{p} \rightarrow \tilde{v}}$ are trained for translation, keeping the encoders and decoders frozen. Auto-Linear leverages the \textit{manifold assumption} as the latent manifolds $\tilde{v}$ and $\tilde{p}$ are useful for reconstruction. The \textit{sizes of the latent spaces} are higher-dimensional, designed specifically to facilitate linear mappings.}



\subsection{Proposed Model: Latent U-Net}
We propose a novel architecture to solve forward and inverse problems using two latent space translation models implemented using U-Nets, termed \emph{Latent U-Net}. As shown in Figure \ref{fig:latent_unet}, Latent U-Net uses ConvNet backbones for both encoder-decoder pairs: ($\mathtt{E}_v$, $\mathtt{D}_v$) and ($\mathtt{E}_p$, $\mathtt{D}_p$), to project $v$ and $p$ to lower-dimensional representations. We also constrain the sizes of the latent spaces of $\tilde{v}$ and $\tilde{p}$ to be identical, 
i.e., $\text{dim}(\tilde{v})=\text{dim}(\tilde{p})$, so that we can train two separate U-Net models to 
implement the latent space mappings $L_{\tilde{v} \rightarrow \tilde{p}}$ and $L_{\tilde{p} \rightarrow \tilde{v}}$. 
\textcolor{black}{ Note that InversionNet can also be viewed as a special case of Latent U-Net in inverse-only mode with subtle architectural differences such as absence of skip connections, design of encoder/decoder and translation layers, and different dimensions of $\tilde{v}$ and $\tilde{p}$.}  

We train Latent U-Net in an end-to-end manner to directly learn latent spaces useful for translation. This is in contrast to the two-stage training process employed in  Auto-Linear where latent spaces are first pre-trained for reconstruction, followed by translation. 
Specifically, for the forward problem, the components $\mathtt{D}_p$, $L_{\tilde{v} \rightarrow \tilde{p}}$, and $\mathtt{E}_v$ are trained together by optimizing Equation \ref{eq:forward_supervised_loss}, while for the inverse problem, the components $\mathtt{D}_v$, $L_{\tilde{p} \rightarrow \tilde{v}}$, and $\mathtt{E}_p$ are trained together by optimizing Equation \ref{eq:inverse_supervised_loss}. 
Since the training of forward and inverse modeling components of Latent U-Net are disjoint from each other, Latent U-Net does not explicitly learns manifolds useful for reconstruction. However, it is possible to modify the training objective of Latent U-Net by adding reconstruction losses in both forward and inverse problems to perform manifold learning. \textcolor{black}{See Appendix sections \ref{sec:app_latent_unet_architecture} for details}





\subsection{Proposed Model: Invertible X-Net}
We propose another novel architecture within the GFI framework termed \textit{Invertible X-Net} to answer the question: {``can we learn a single latent space translation model that can simultaneously solve both forward and inverse problems?''}
Recently, Invertible Neural Networks (INNs) \citep{ardizzone2018analyzing} have been introduced as a specialized class of neural networks capable of learning bijective mappings between inputs and outputs.
INNs are inherently invertible by design; once the INN is trained to learn a forward mapping $f: \mathcal{X}\rightarrow\mathcal{Y}$ between inputs $x \in \mathcal{X}$ and outputs $y \in \mathcal{Y}$, the inverse mapping $f^{-1}: \mathcal{Y}\rightarrow\mathcal{X}$ can be obtained for free. A variant of INN has also been developed for image-to-image translation problems using the U-Net backbone, termed IU-Net \citep{etmann2020iunets}.
However, one of the key constraints of INN and IU-Net is that they require $\text{dim}(\mathcal{X})=\text{dim}(\mathcal{Y})$, which is difficult to ensure in subsurface imaging problems (since velocity and waveforms reside in entirely different spaces). 

This is where we can leverage the GFI framework to employ IU-Net in the latent spaces of velocity and waveforms, which can be constrained to be of the same size (just like Latent-UNets), 
i.e., $\text{dim}(\tilde{v})=\text{dim}(\tilde{p})$. By using a single IU-Net, we can simultaneously learn both forward and inverse translations in the latent space, $L_{\tilde{p} \rightarrow \tilde{v}}$ and $L_{\tilde{v} \rightarrow \tilde{p}}$. We refer to this formulation as Invertible ``X''-Net (owing to the cross-shape of translations between the latent spaces) defined as follows:
\begin{align}
     &\tilde{p}= L_{\tilde{v} \rightarrow \tilde{p}}(\tilde{v})=f_{\text{IU-Net}}(\tilde{v}); \quad \tilde{v}=L_{\tilde{p} \rightarrow \tilde{v}}(\tilde{p})=f^{-1}_{\text{IU-Net}}(\tilde{p}) \label{eq:bijective}\\
     &\hat{p}=\mathtt{D}_p \circ f_{\text{IU-Net}}(\tilde{v}) \circ \mathtt{E}_v(v); \quad \hat{v}=\mathtt{D}_v \circ f^{-1}_{\text{IU-Net}}(\tilde{p}) \circ \mathtt{E}_p(p) \label{eq:invertible_xnet}
\end{align}
The objective function for training  Invertible X-Net involves prediction losses on both velocity maps and waveforms defined as follows:
\begin{align}
    \mathcal{L}_{\text{X-Net}} &= \mathcal{L}_{\text{forward}}+\mathcal{L}_{\text{inverse}} \\
    &=\mathbb{E}_{q(v, p)} \mathcal{L}(\mathtt{D}_p \circ f_{\text{IU-Net}} \circ \mathtt{E}_v(v), p) + \mathbb{E}_{q(v, p)} \mathcal{L}(\mathtt{D}_v \circ f^{-1}_{\text{IU-Net}} \circ \mathtt{E}_p(p), v)
\end{align}
Note that Invertible X-Nets facilitate \textit{bidirectional training} as both the forward and inverse modeling components are trained simultaneously. As a result, both encoder-decoder pairs of Invertible X-Nets implicitly learn manifolds useful for reconstruction. Also,  Invertible X-Net offers several key advantages over  baselines. First, it simultaneously addresses both the forward and inverse problems within a single model architecture, whereas other baselines typically require training separate models for each task (e.g., Latent U-Net and Auto-Linear), leading to greater parameter efficiency. Additionally, the use of IU-Net ensures that the mappings between the latent spaces of velocity maps and seismic waveforms are bijective, guaranteeing a one-to-one mapping between these representations -- a property not necessarily true for other models such as Latent U-nets and Auto-Linear. Further, the bidirectional training of the forward and inverse problems introduces a strong regularization effect as the gradients of the forward loss $\mathcal{L}_{\text{forward}}$ affects the parameters of $f_{\text{IU-Net}}$, thereby affecting both forward and inverse performance. Similarly, the gradients of the inverse loss $\mathcal{L}_{\text{inverse}}$ also impact both forward and inverse performance by updating the parameters of $f_{\text{IU-Net}}$. \textcolor{black}{Further details about the Invertible X-Net architecture are provided in Appendix Section \ref{sec:app_xnet_architecture}}.

\par \noindent \textbf{Cycle Loss for Training Invertible X-Net:} Another advantage of Invertible X-net is that the architecture can be trained with unpaired examples using cycle-loss consistency \citep{zhu2017unpaired}. We consider both variants of Invertible X-Net (with and without cycle loss) in our experiments to demonstrate its effect on generalization performance.

\section{Experimental Setup}
\subsection{Datasets}

We consider the OpenFWI collection of datasets \citep{deng2022openfwi}, comprising multi-structural benchmark datasets for DL4SI grouped into: Vel, Fault, and Style Families. The Vel Family includes four datasets with simple geological patterns, while the Fault Family contains four datasets with fault-like deformations, presenting greater modeling challenges. The Style Family uses style transfer methods to create complex geological settings. 
The waveform data is structured as (\# source $\times$ recording time $\times$ receiver length), resulting in a shape of ($5 \times 1000 \times 70$) for five uniformly spaced seismic sources and 70 receivers over 1000 milliseconds. In contrast, the velocity maps are represented as  (1 $\times$ depth $\times$ receiver length), with a shape of ($1 \times 70 \times 70$) reflecting spatial dimensions of depth and horizontal coverage.
We also conduct out-of-distribution evaluations on the Marmousi \citep{martin2006marmousi2} and Overthrust datasets \citep{aminzadeh1996three}. They are standard benchmarks that closely mimic real-world scenarios and exceed the complexity of OpenFWI datasets.

\subsection{Model Architectures}
\textbf{Latent U-Net and Invertible X-Net:}  We use two variants for Latent U-Net: (1) (Small) with 17M parameters, and (2) (Large) with 33M parameters. For Invertible X-Net, we used an architecture with 24M parameters trained with and without cycle loss. See Appendix \ref{sec:cycle_loss_inv_x_net} and \ref{sec:model_architecture}for details. 
\noindent \textbf{Baselines:} For the inverse problem, we considered InversionNet, VelocityGAN, and Auto-Linear as baselines. We obtained their trained model checkpoints to reproduce their results. For the forward problem, we implemented FNO \citep{li2020fourier} and a U-Net model (termed WaveformNet) as baseline models (see Appendix \ref{sec:appendix_prev_works} for details), along with using Auto-Linear.

\noindent \textbf{Training Details:} The input seismic waveforms were standardized using a StandardScaler to center the data with a zero mean and unit standard deviation, while the velocity data were scaled through MinMax normalization. See Appendix \ref{sec:additional_training_details} for details on hyper-parameter settings.
To make a fair comparison with baseline methods, we kept the original train-test splits and reported results in the unnormalized space of both velocity and waveforms. We used Mean Absolute Error (MAE), Mean Square Error (MSE), and Structured Similarity (SSIM) as evaluation metrics \textcolor{black}{since neither metric alone is fully comprehensive. MAE captures pixel-level accuracy while SSIM highlights structural similarity.}






\section{Results and Discussions}






\textbf{Quantitative Comparisons:} {Figure \ref{fig:ssim_inverse_modeling_comparison} compares SSIM on velocity predictions while solving the inverse problem across different OpenFWI datasets. We can see that our proposed models, which capture complex non-linear relationships in the latent space using U-Net and IU-Net architectures, consistently outperform baseline methods InversionNet, VelocityGAN, \& Auto-Linear across majority of datasets. Overall, the large Latent U-Net model shows the best performance in inverse modeling followed by the small Latent U-Net model. On complex datasets such as Style family and CFB, Invertible X-Net with and without cycle loss are also comparable to the small Latent U-Net model. 
In addition to SSIM, we also evaluate baselines using other metrics reported in Table \ref{tab:supervised_inverse_metrics}(Appendix \ref{sec:additional_inverse_results}) and compare MAE across models in Appendix figure \ref{fig:ssim_inverse_modeling_comparison}.

Figure \ref{fig:ssim_forward_modeling_comparison} compares the performance of baseline methods on the forward problem (see Appendix \ref{sec:additional_forward_results} Table \ref{tab:supervised_forward_metrics} for other evaluation metrics). We can see that both Latent U-Net (Large) and Invertible X-Net show better performance than most baselines across all datasets. However, in contrast to the trends observed in the inverse problem, we observe that Invertible X-Net with and without cycle loss shows the best performance compared to Latent U-Net models. We further notice that for simple datasets such as FVA and FFA, FNO has significantly higher SSIM compared to other models. We later qualitatively explain this pattern by showing that FNO tends to capture dominant modes of the waveform distribution while neglecting subtler reflection patterns.  

\begin{figure}[ht]
    \centering
    \begin{subfigure}{0.4\textwidth}
        \centering
        \includegraphics[width=\textwidth]{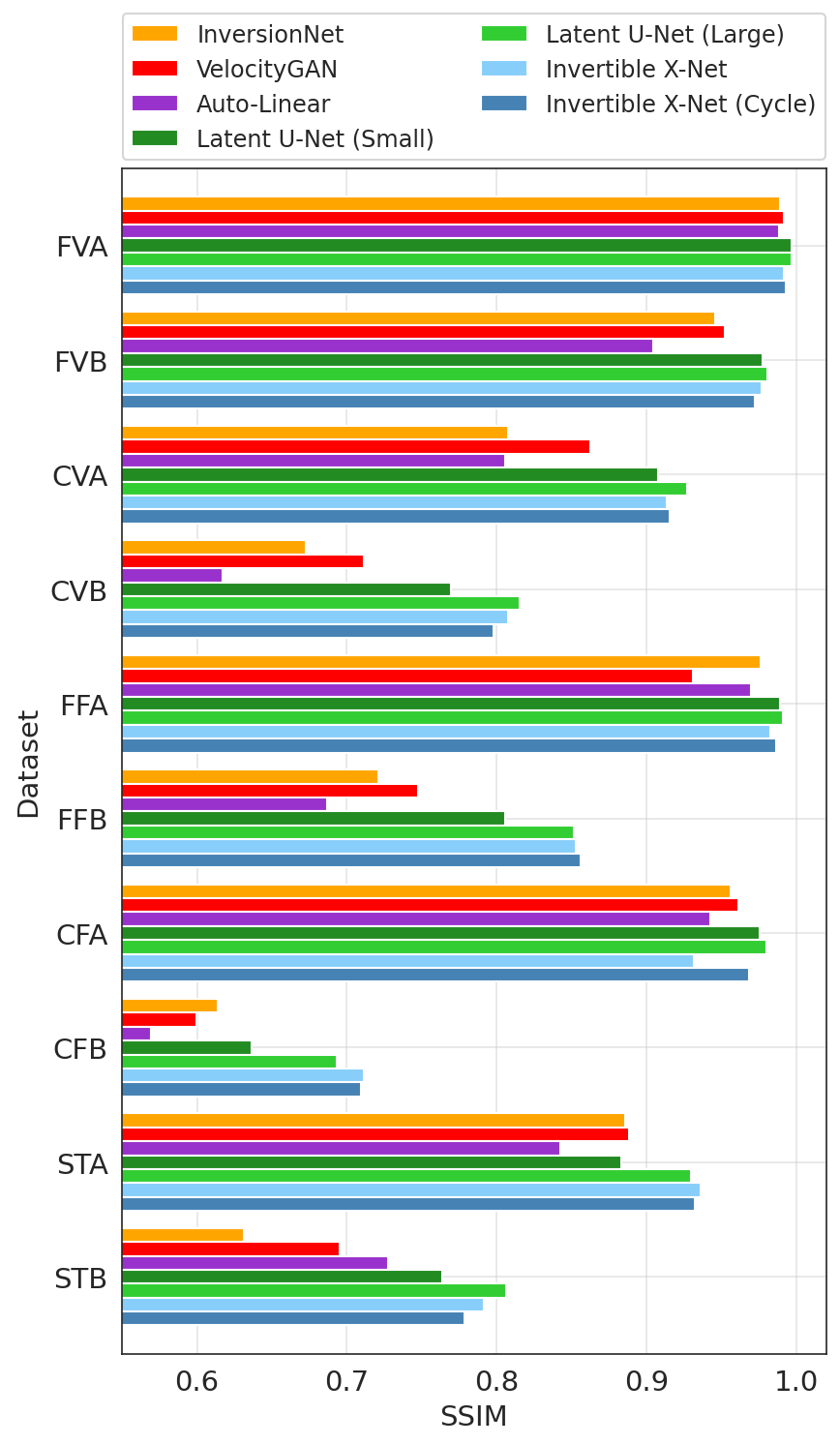}
        \caption{Inverse problem}
        \label{fig:ssim_inverse_modeling_comparison}
    \end{subfigure}
    \begin{subfigure}{0.4\textwidth}
        \centering
         \includegraphics[width=\textwidth]    {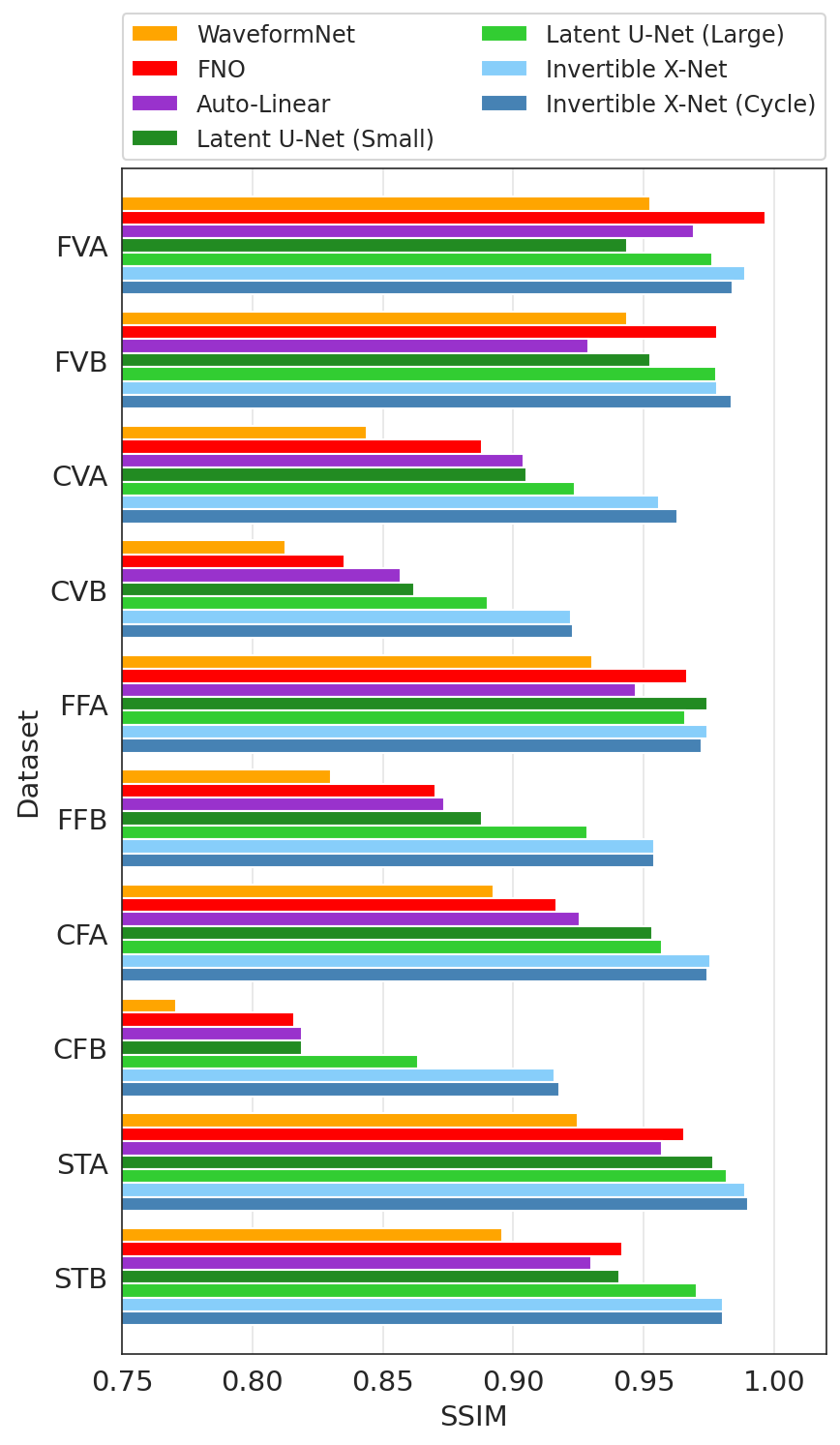}
    \caption{Forward problem}
    \label{fig:ssim_forward_modeling_comparison}
    \end{subfigure}
    \caption{\textcolor{black}{Comparison of Latent U-Nets (Small and Large), Invertible X-Net, Invertible X-Net (Cycle) with different baseline methods across different OpenFWI datasets.}}
\end{figure}

\textbf{Qualitative Comparisons:} In Figure \ref{fig:inverse_visualizations}, we compare inverse model predictions on three datasets, CVB, CFB, and STA, choosing one from each family of OpenFWI. We observe that our proposed methods (Latent U-Net and Invertible X-Net) produce considerably better velocity maps than the two SOTA baselines, InversionNet and Auto-Linear. We chose these 3 datasets as they are complex and have high heterogeneity as the velocity rapidly changes in all directions. For heterogeneous velocity maps, the predictions of baseline methods are smooth and they fail to delineate sharp changes in velocities. On the other hand, Latent U-Net and Invertible X-Net are able to predict better velocity fields capturing rapid variations in both shallow and deeper parts (highlighted using black-boxes in the figure). Between Latent U-Net (Large) and Invertible X-Net, we can see that the performance of Latent U-Net is slightly better especially on CFB and STA datasets. Note that Invertible X-Net uses a smaller number of parameters (24M) compared to Latent U-Net (Large) (33M). We provide additional visualizations of more baseline methods and across other datasets in the Appendix \ref{sec:additional_visualizations}.

Figure \ref{fig:forward_visualizations} shows visualizations of model predictions for the forward problem on three datasets. Since these datasets are heterogeneous, the recorded seismic waveforms exhibit convoluted interference patterns. In particular, we can see that in all ground truth visualizations, there is a direct arrival in the shallower layers represented as a `slanted line' signature. While capturing the direct arrival is relatively simple, the primary challenge in solving forward problems is to capture the subtler reflections in the deeper layers, where even small magnitudes of errors can result in large differences in inferred velocity maps.
We can see that while all baseline methods are able to predict shallow waveforms easily (like direct arrivals), methods such as FNO and Auto-Linear are struggling to predict the seismic waveforms, especially in the deeper regions (highlighted using black-boxes in the Figure). 
On the other hand, our proposed models consistently improve results over baselines across all datasets. Additional visualizations for the forward problem are provided in the Appendix \ref{sec:additional_visualizations}. Building on these quantitative and qualitative results, we provide key insights on four open questions in the field of DL4SI in the following.

\begin{figure}[ht]
    \centering
    \begin{subfigure}{0.48\textwidth}
        \centering
        \includegraphics[width=\textwidth]{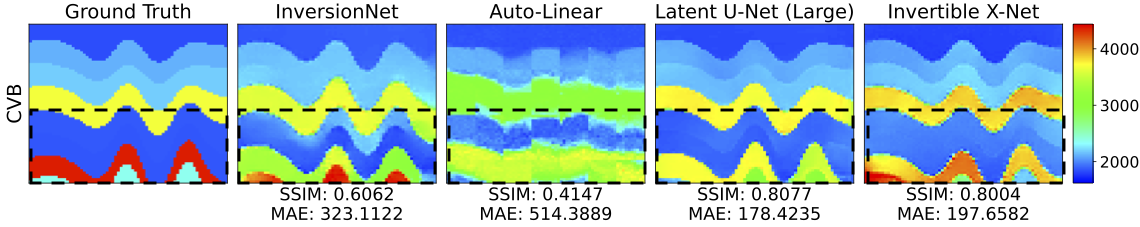}
        \includegraphics[width=\textwidth]{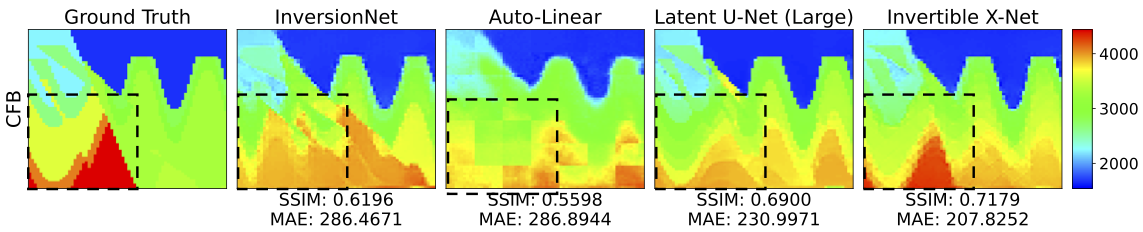}
        \includegraphics[width=\textwidth]{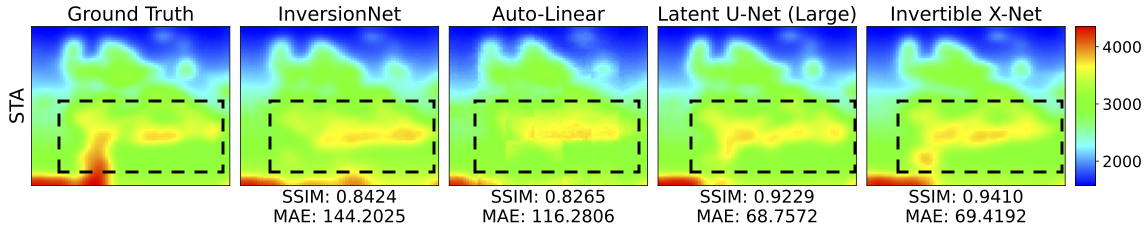}
        \caption{Inverse problem}
        \label{fig:inverse_visualizations}
    \end{subfigure}
    \hfill
    \begin{subfigure}{0.48\textwidth}
        \centering
        \includegraphics[width=\textwidth]{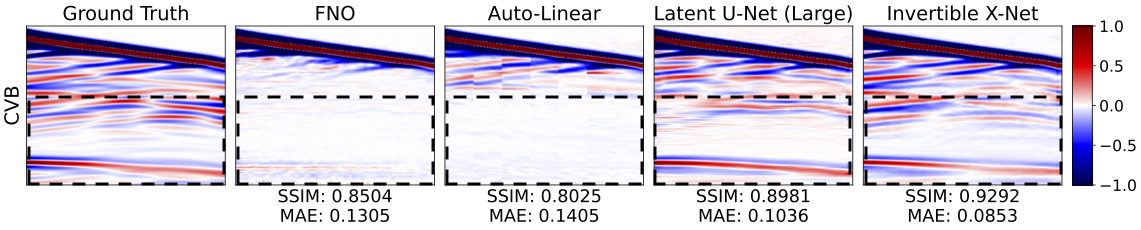}
        \includegraphics[width=\textwidth]{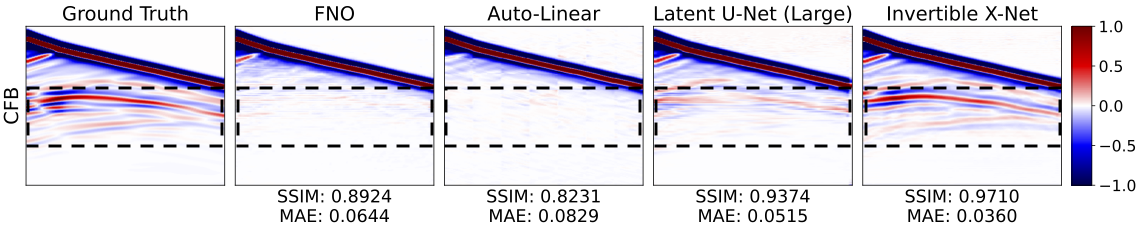}
        \includegraphics[width=\textwidth]{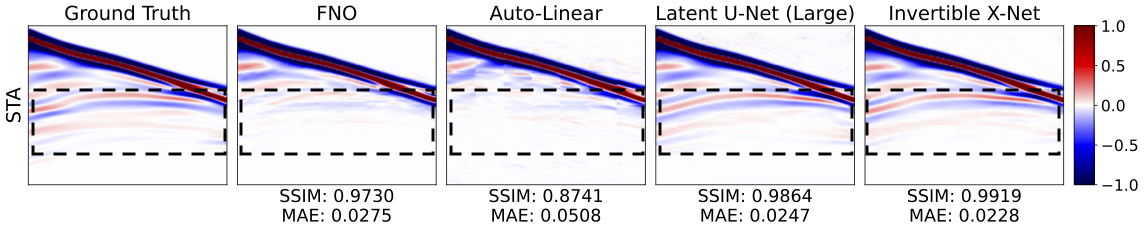}
        \caption{Forward problem}
        \label{fig:forward_visualizations}
    \end{subfigure}
    \caption{\textcolor{black}{Visualization of model predictions for inverse and forward problems across different OpenFWI datasets: CVB, CFB, STA.}}
    \vspace{-1ex}
\end{figure}

\begin{figure}[h]
    \centering
    \vspace{-1ex}
   \includegraphics[width=1.0\textwidth]{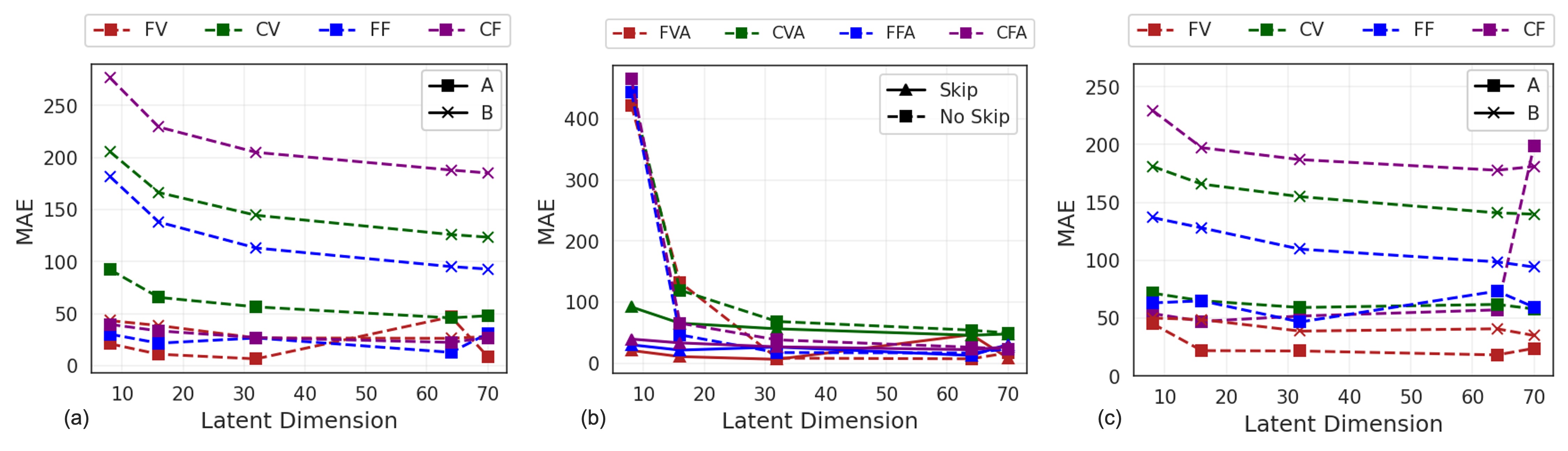}
    \caption{\textcolor{black}{Effect of latent dimensions and skip connections (across latent dimensions) on the performance of Latent U-Net (Large) (a and b) and Invertible X-Net (c). `A' and `B' show `low' and `high' complexity scenarios for each data group.}}
     \label{fig:latent_dim_effects}
\end{figure}

\subsection{Key Insights on Open Questions in DL4SI}

\subsubsection{What is the Effect of  Latent Space Sizes on Translation Performance?}
In Figure \ref{fig:latent_dim_effects}, we investigate the impact of varying latent space sizes on the quality of domain translations using the Latent U-Net (Large) (a) and Invertible X-Net(c) models on the inverse problem. We compare MAE by reducing latent space dimensions from $70 \times 70$ to $8 \times 8$. We can see that the quality of translations is not affected much for `A'-family of OpenFWI datasets, while the `B'-family shows monotonically increasing trend in MAE as we reduce latent dimension. Since the `A'-family was intentionally created to represent simpler velocity distributions, even the smallest latent space ($8 \times 8$) is sufficient to estimate  velocity maps. However, the `B'-family of datasets represent more complex velocity distributions, requiring a sufficiently larger latent space for the translation. \textcolor{black}{ This indicates that the ideal size of the latent space should be decided based on the requirements of the problem being solved. }

\subsubsection{Do We Need Complex Architectures for Translations?} 
We further study the role of model size on translation performance by comparing Latent U-Net (Small) and Latent U-Net (Large) with varying latent dimensions in  Figure \ref{fig:latent_dim_effect_unet_compare} (b).  We find that for a given latent dimension, Latent U-Net (Large) outperforms Latent U-Net (Small), motivating architectures with sufficient complexity for domain translations.
We further study the impact of using skip connections in the latent space translation models on varying latent space dimensions in Figure \ref{fig:latent_dim_effects} (b). We can see that adding skip connections shows marginal improvements in performance for larger latent dimensions. However, when the number of dimensions is low (less than 20), we can see a sharp degradation in performance by removing skip-connections. This indicates that smaller latent representations are weaker and thus insufficient for domain translation in U-Net frameworks without being augmented with skip connections.

\begin{wrapfigure}{r}{0.48\textwidth}
    \vspace{-4ex}
    \centering
    \includegraphics[width=0.48\textwidth]{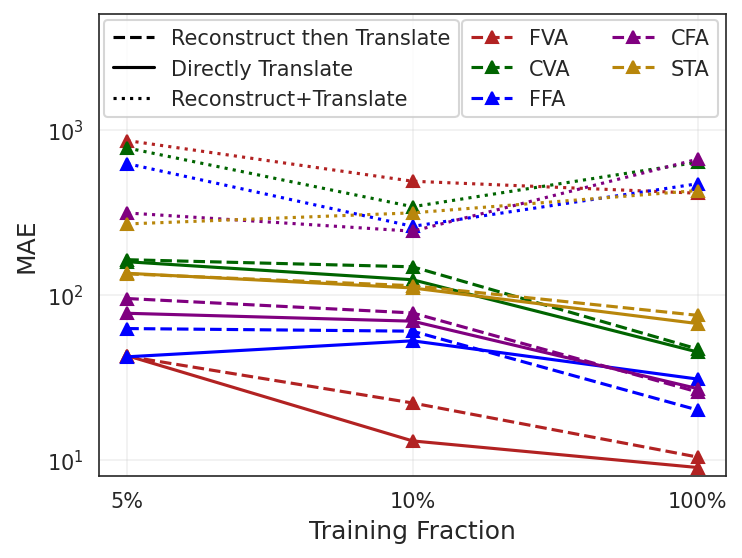}
    \caption{\textcolor{black}{Comparison of Latent U-Net (Large) across three training objectives: direct translation, reconstruction followed by translation, and combined learning of both, evaluated at different fractions of training data.}}
    \label{fig:translation_reconstruction}
    \vspace{-2ex}
\end{wrapfigure}
\subsubsection{\textcolor{black}{What is the role of manifold learning in DL4SI?} }


\textcolor{black}{Manifold learning for latent spaces in domain translation can follow three strategies: (1) the ``Reconstruct then Translate'' (Auto-Linear), where encoder-decoder pairs are first trained for reconstruction, followed by translation on frozen latent spaces; (2)``Directly Translate'' strategy (Latent U-Net), which jointly trains encoder-decoder pairs and translation models to optimize latent spaces for translation; and (3) a hybrid ``Reconstruct and Translate'' approach, where the model is trained with both reconstruction and translation losses. To investigate their impact, we evaluate these approaches across varying training data fractions for OpenFWI families (Figure \ref{fig:translation_reconstruction}). Our findings reveal that the “Directly Translate” approach consistently outperforms the other two, highlighting that latent spaces optimized solely for translation are most effective. Hence, manifold learning, while beneficial for reconstruction, may lead to suboptimal performance when translation is the primary goal. We see a similar trend for Latent U-Net (Small) and Invertible X-Net shown in Appendix \ref{sec:manifold_learning} Figure \ref{fig:appendix_translation_reconstruction}.}

\subsubsection{Is it Useful to Jointly Solve Forward and Inverse Problems?} 

As discussed in the quantitative comparisons of Latent U-Net (Large) and Invertible X-Net on the forward problem in Figure \ref{fig:ssim_forward_modeling_comparison}, we see a surprising trend that Invertible X-Net generally outperforms Latent U-Net (Large) despite having smaller number of parameters. 
Analysis of its training behavior reveals that Invertible X-Net initially focuses on solving the inverse problem, gradually shifting to the forward problem in later epochs.  (see Appendix \ref{sec:combined_loss_function_invertible_xnet} Figures \ref{fig:appendix_training_assymetry_cvb} and \ref{fig:appendix_training_assymetry_cfa} for visualizations). This is facilitated by the combined loss function, where gradients from the inverse problem enhance forward problem optimization once a good velocity solution is achieved. This bidirectional learning enables Invertible X-Net to outperform Latent U-Net (Large). Further comparisons in Appendix \ref{sec:combined_loss_function_invertible_xnet} confirm that Invertible X-Net trained only on the forward problem performs worse than both Latent U-Net (Large) and the jointly trained Invertible X-Net, emphasizing the advantage of joint optimization.\textcolor{black}{With average ranks of 2.7 (MAE) and 1.9 (SSIM) for inverse problems, and 1.4 (MAE) and 1.1 (SSIM) for forward problems (Appendix Tables \ref{tab:supervised_inverse_metrics} and \ref{tab:supervised_forward_metrics}), Invertible X-Net offers a balanced and efficient solution for joint tasks with just 26M parameters. In contrast, Latent U-Net, while slightly better for inverse-only tasks, requires 70M parameters for both forward and inverse problems (35M x2), making Invertible X-Net the more practical and effective choice for the combined optimization.}

\subsection{Zero-shot Generalization Results}
\textbf{Extrapolating across OpenFWI Datasets:}
To assess the out-of-distribution generalizability of best-performing models on unseen datasets, in Figure \ref{fig:zero_shot_generalization}, we evaluate zero-shot performance of models trained across all datasets (shown as rows) when tested across all datasets (shown as columns), for both forward and inverse problems. Here, the color intensity indicates difference in SSIM performance of two competing methods. We can see that Invertible X-Net outperforms other baselines for both forward and inverse problems across majority of train-test cases. The only exception is when the models are trained on FVB, a relatively simpler dataset than other data families such as CurveVel and FlatVel. From a geophysics perspective, we expect to have better generalizability for models trained on complex geological settings and tested on relatively simpler ones, as complex geological structures can be thought of as juxtapositions of simpler geological structures. Additionally, we compare the zero-shot generalization for all methods in detail in Appendix \ref{sec:ood_zero_shot} (Figures \ref{fig:appendix_zero_shot_grids_invertible_xnet_inverse_problems} - \ref{fig:appendix_zero_shot_grids_latent_unet_forward_problems}).

\begin{figure}[ht]
    \centering
    \begin{subfigure}{0.23\textwidth}
        \centering
        \includegraphics[width=\textwidth]{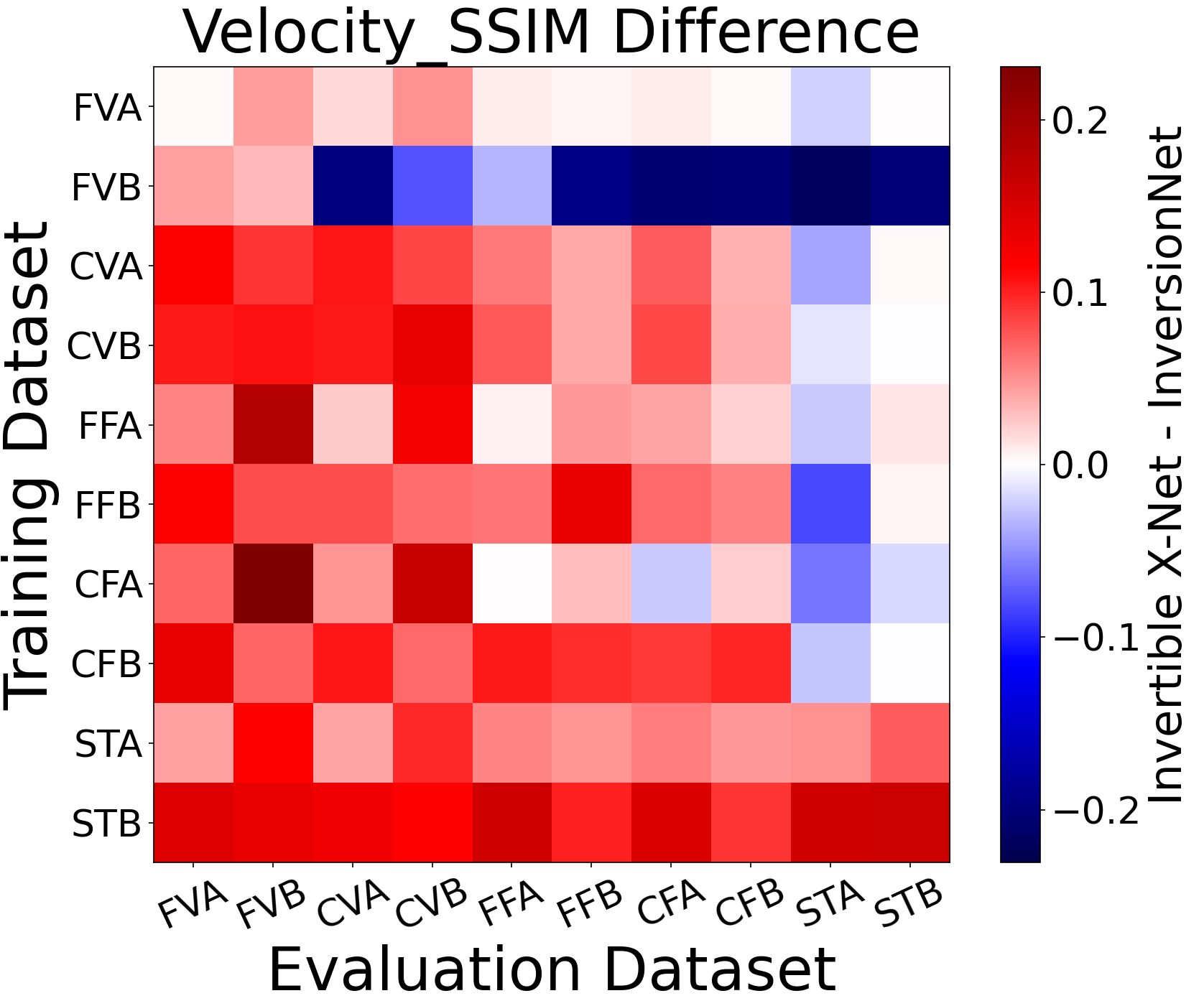}
         \caption{Invertible X-Net vs InversionNet (Inverse)}
        \label{fig:invertible_xnet_inversion_net}
    \end{subfigure}
    \hfill
    \begin{subfigure}{0.23\textwidth}
        \centering
        \includegraphics[width=\textwidth]{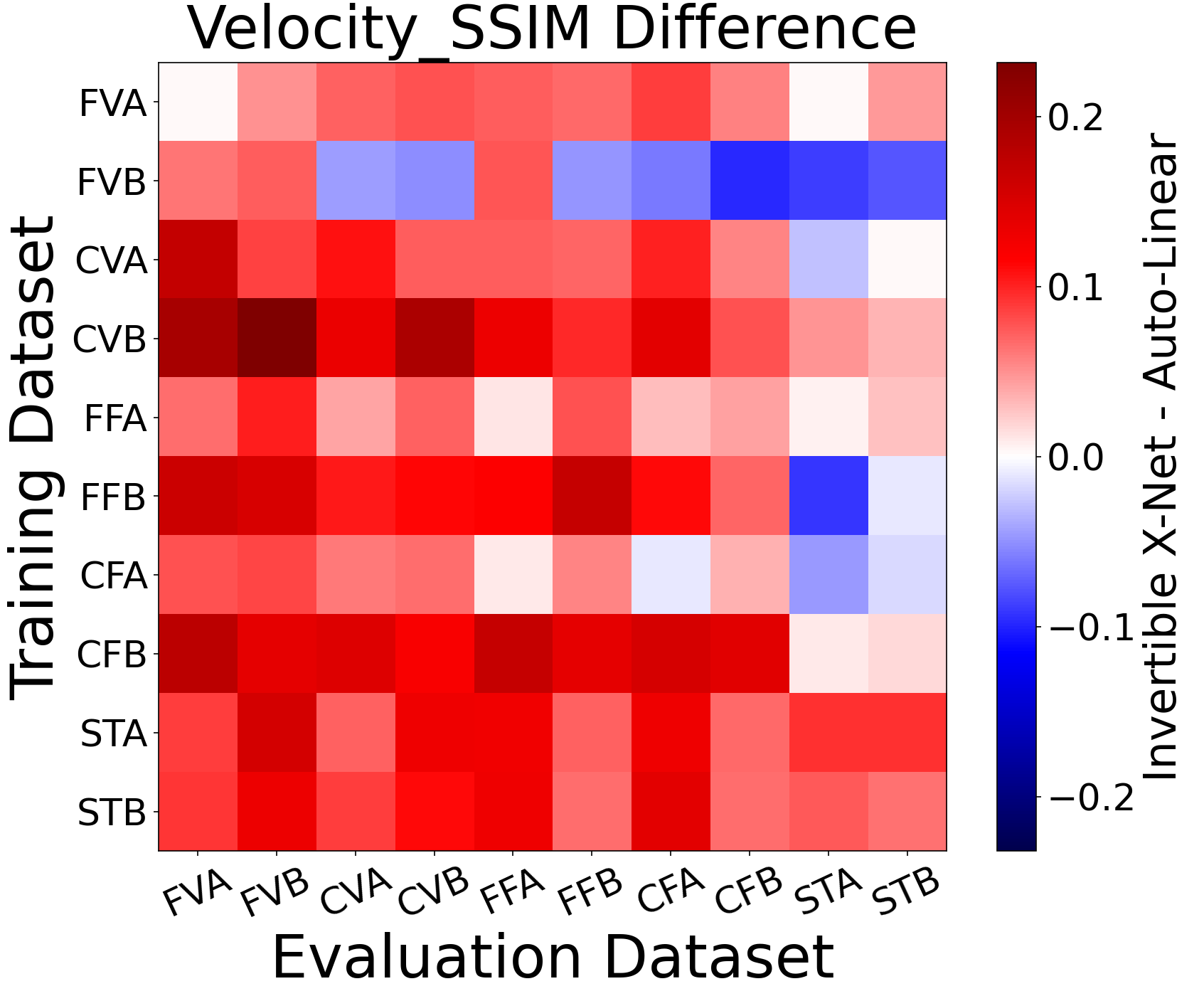}
         \caption{Invertible X-Net vs Auto-Linear (Inverse)}
        \label{fig:invertible_xnet_autolinear}
    \end{subfigure}
   \hfill
    \begin{subfigure}{0.23\textwidth}
        \centering
        \includegraphics[width=\textwidth]{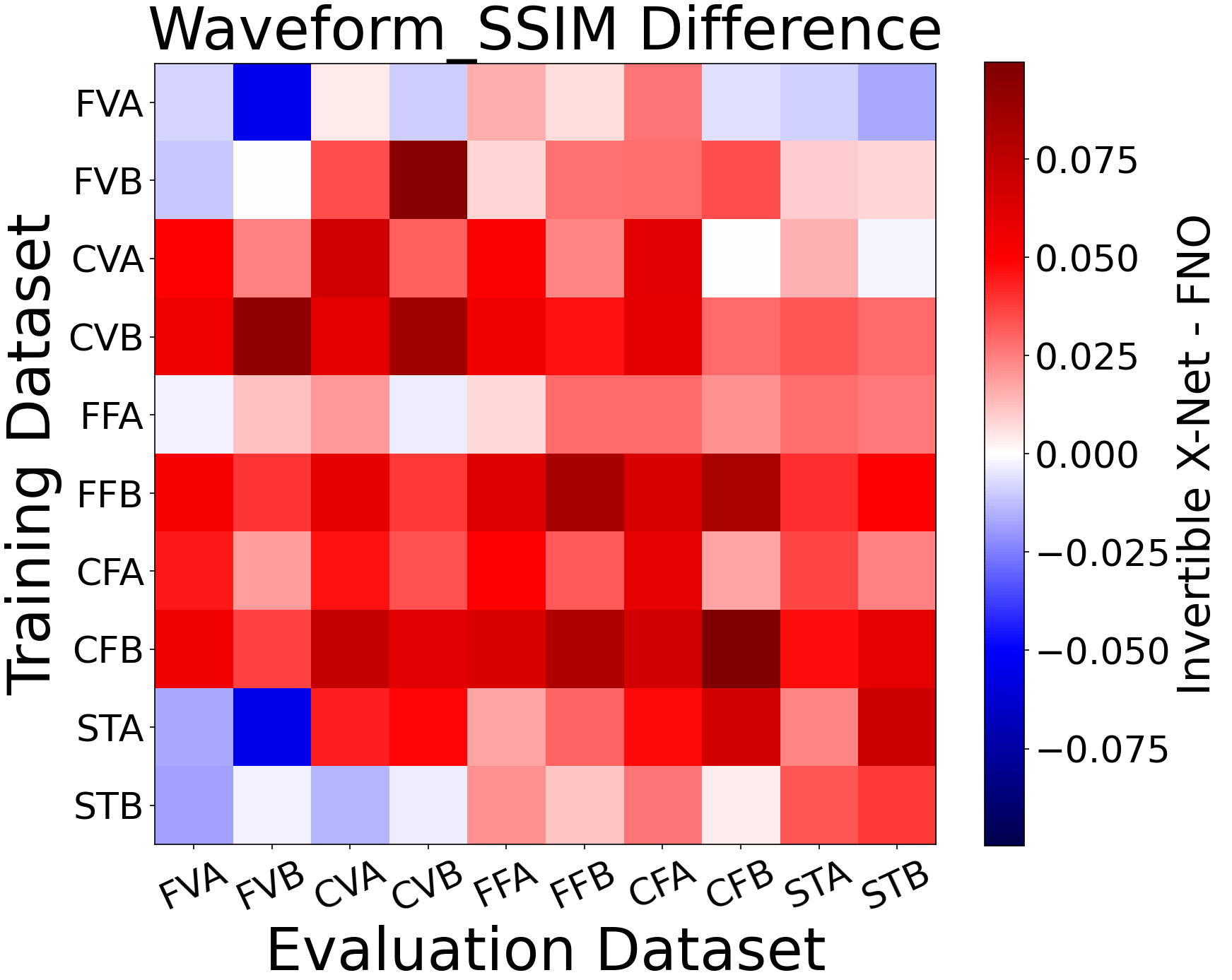}
        \caption{Invertible X-Net vs FNO (Forward)}
        \label{fig:invertible_xnet_fno}
    \end{subfigure}
    \hfill
    \begin{subfigure}{0.23\textwidth}
        \centering
        \includegraphics[width=\textwidth]{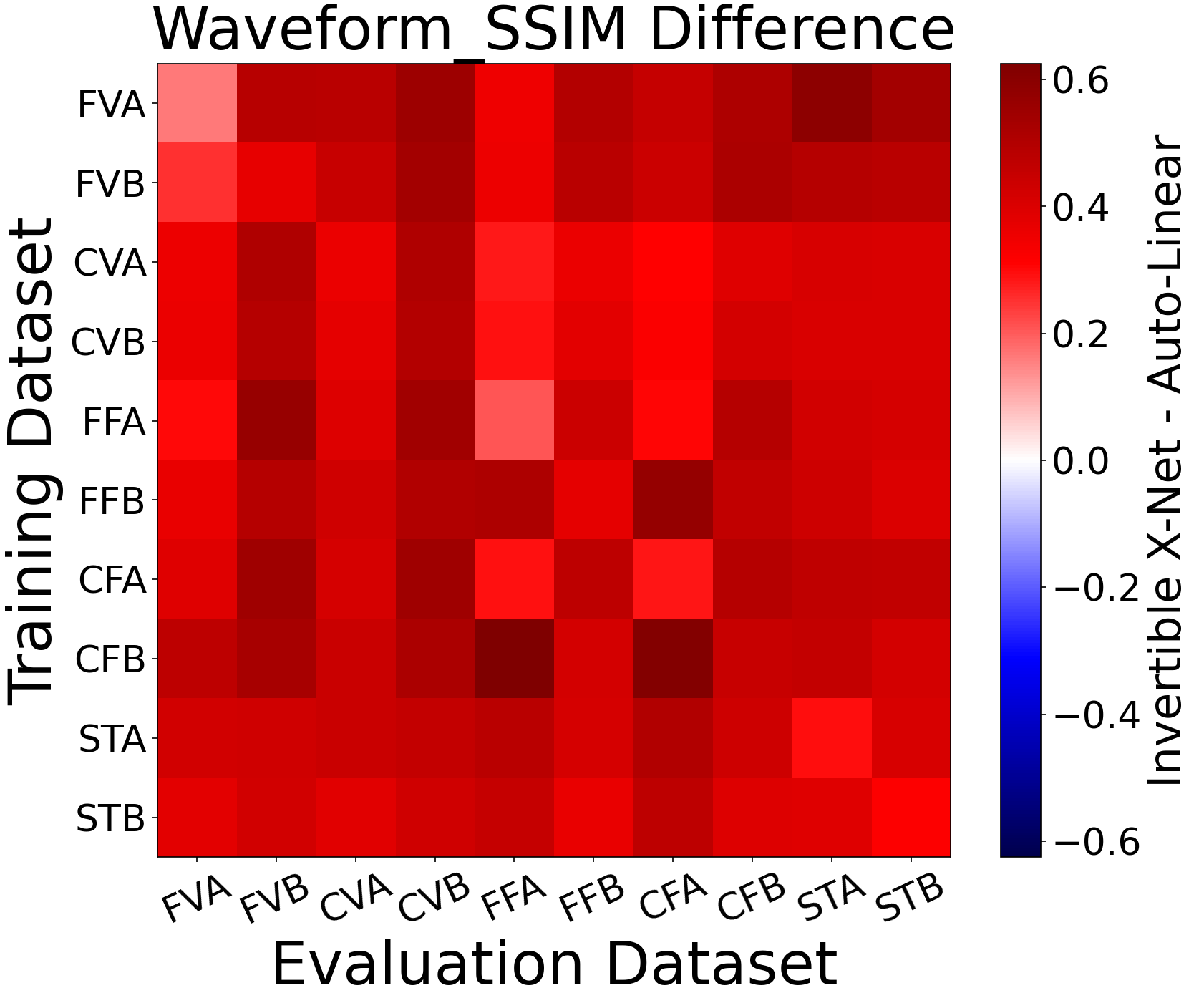}
        \caption{Invertible X-Net vs Auto-Linear (Forward)}
        \label{fig:fwd_invertible_xnet_autolinear}
    \end{subfigure}
    \caption{Zero-shot generalization comparison for both inverse and forward problem using SSIM difference across all OpenFWI datasets. Red color indicates our proposed method (Invertible X-Net) is better whereas blue color indicate otherwise. }
    \label{fig:zero_shot_generalization}
\end{figure}
\vspace{-2ex}

\begin{figure}[ht]
    \centering
    \begin{subfigure}{0.48\textwidth}
        \centering
         \includegraphics[width=\textwidth]{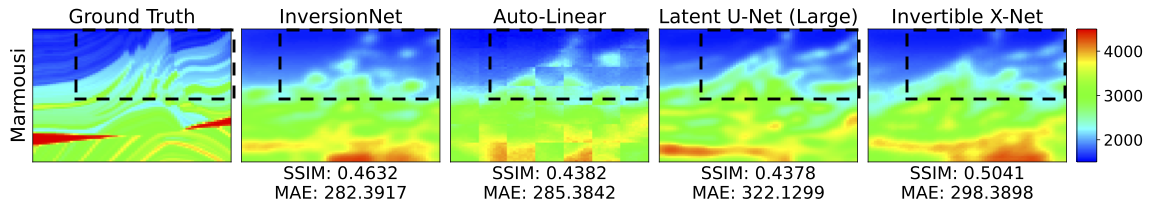}
        \includegraphics[width=\textwidth]{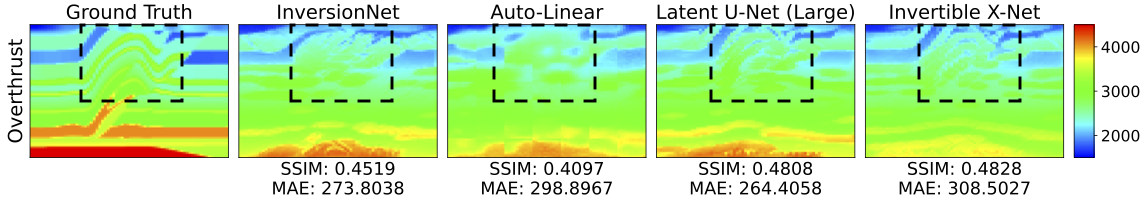}
        \caption{Inverse problem}
        \label{fig:marmousi_zeroshot}
    \end{subfigure}
    \hfill
    \begin{subfigure}{0.48\textwidth}
        \centering
       \includegraphics[width=\textwidth]{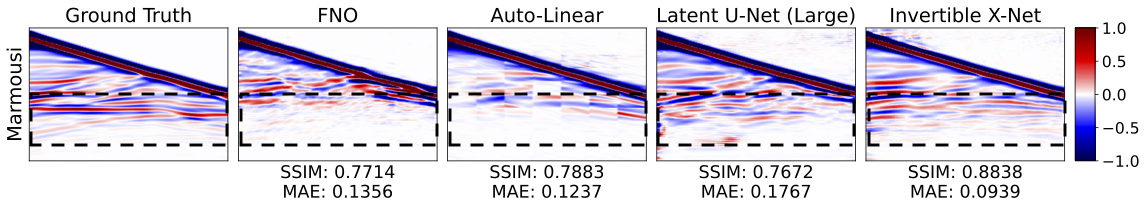}
        \includegraphics[width=\textwidth]{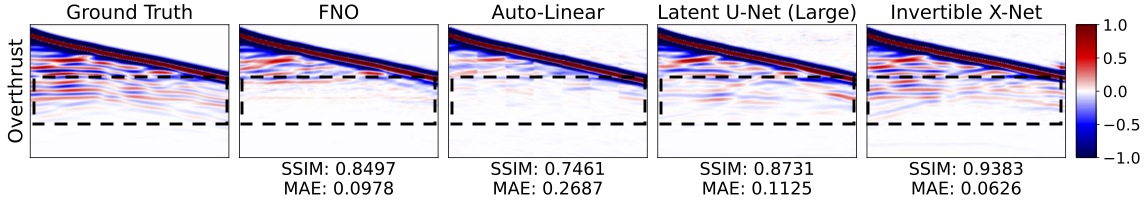}
        \caption{Forward problem}
        \label{fig:overthrust_zeroshot}
    \end{subfigure}
    \caption{\textcolor{black}{Zero-shot generalization (both forward and inverse) on Marmousi and Overthrust datasets using models trained on STA and STB datasets, respectively.}}
    \label{fig:marmousi_results}
\end{figure}

\textbf{Extrapolating on Marmousi and Overthrust Datasets:}
We show the zero-shot generalizability of our models on the more complex and real-world-like geological settings of Marmousi and Overthrust in Figure \ref{fig:marmousi_results}. Here, we chose models trained on the most complex OpenFWI datasets, i.e., STA (for Marmousi) and STB (for overthrust) datasets, and compared their zero-shot performance in predicting velocity maps (inverse problem). We observe that none of the baselines are able to invert velocity maps at high resolution. However, the predictions of Latent U-Net and Invertible X-Net are still able to delineate the sharp changes in velocity maps especially in the shallow part of the data. 
On the forward problem, our models are able to accurately predict seismic waveforms for both Marmousi and Overthrust data relative to  baselines.  


\vspace{-1ex}
\section{Conclusions and Future Research Directions}
\vspace{-1ex}
In this work, we introduced a unified framework for solving both forward and inverse problems in subsurface imaging, termed the Generalized-Forward-Inverse (GFI) framework. Within this framework, we proposed two novel architectures, Latent U-Net and Invertible X-Net, that leverage the power of U-Nets and IU-Nets to perform latent space translations, respectively. Our study addresses several key questions left unanswered by prior research in DL4SI.
Future research directions include training a single model for all datasets together, given that models trained on complex geological environments can effectively extrapolate on unseen simpler geological structures.

\section{Reproducibility Statement}

All the code required to train and evaluate the proposed methods, as well as the baselines, has been uploaded to an anonymous GitHub repository: \url{https://github.com/KGML-lab/Generalized-Forward-Inverse-Framework-for-DL4SI}. The data and corresponding processing code used in this work are sourced from the OpenFWI website: \url{https://openfwi-lanl.github.io/docs/data.html}. The dataset is further described in Appendix Section \ref{sec:dataset_description} and Table \ref{tab:openfwi_dataset}. The model architecture and hyper-parameter details for all proposed models and baselines are thoroughly discussed in Appendix Section \ref{sec:additional_exp_details}.

\section{Ethics Statement}

We would like to emphasize that the primary goal of this paper is to facilitate a scientific inquiry of DL4SI. While advancements in subsurface imaging have traditionally been used for energy exploration, there are several emerging applications of subsurface imaging with positive societal consequences such as Carbon Sequestration, Earthquake Hazard Warning, and Environmental Monitoring. Research in DL4SI thus has similar characteristics as research in mainstream AI applications such as computer vision and language modeling, with a diverse range of societal, environmental, and economic impacts.



\section*{Acknowledgments}
This work was supported in part by NSF awards IIS-2239328 and IIS-2107332. We are grateful to the Advanced Research Computing (ARC) Center at Virginia Tech for providing access to GPU compute resources for this project. This manuscript has been authored by UT-Battelle, LLC, under contract DE-AC05-00OR22725 with the US Department of Energy (DOE). The US government retains and the publisher, by accepting the article for publication, acknowledges that the US government retains a nonexclusive, paid-up, irrevocable, worldwide license to publish or reproduce the published form of this manuscript, or allow others to do so, for US government purposes. DOE will provide public access to these results of federally sponsored research in accordance with the DOE Public Access Plan (
https://www.energy.gov/doe-public-access-plan).

\bibliography{iclr2025_conference}
\bibliographystyle{iclr2025_conference}

\appendix
\newpage
\section*{Appendices} 


\section{Theoretical Justifications for Latent Space Translation Assumption}
\label{sec:theory}
In this Section, we provide two Lemmas to theoretically support our Latent Space Translation assumption for the Generalized-Forward Inverse (GFI) Framework.

\begin{lemma}[Forward Latent Space Translation Assumption]
Let $f: \mathcal{V} \rightarrow \mathcal{P}$ be an arbitrary forward operator mapping velocity maps $\mathcal{V}$ to seismic waveforms $\mathcal{P}$. Let $E_v: \mathcal{V} \rightarrow \tilde{\mathcal{V}}$ and $D_v: \tilde{\mathcal{V}} \rightarrow \mathcal{V}$ denote the encoder and decoder for the velocity space $\mathcal{V}$, respectively. Similarly, let $E_p: \mathcal{P} \rightarrow \tilde{\mathcal{P}}$ and $D_p: \tilde{\mathcal{P}} \rightarrow \mathcal{P}$ denote the encoder and decoder for the seismic waveform space $\mathcal{P}$. Here, $\tilde{v} \in \tilde{\mathcal{V}}$ and $\tilde{p} \in \tilde{\mathcal{P}}$ represent the latent space encodings. If we assume that the auto-encoder for the velocity are optimal, i.e., $D_v \circ E_v(v) = \hat{v} \approx v$, then there exists a functional mapping in the latent space $L_{\tilde{v} \rightarrow \tilde{p}}: \tilde{\mathcal{V}} \rightarrow \tilde{\mathcal{P}}$.
\end{lemma}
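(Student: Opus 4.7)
The statement is an existence claim, so my plan is to exhibit an explicit candidate for $L_{\tilde{v} \rightarrow \tilde{p}}$ and verify that it behaves as required. The natural choice, obtained by reading the forward diagram in the order velocity-latent $\rightarrow$ velocity $\rightarrow$ waveform $\rightarrow$ waveform-latent, is
\[
L_{\tilde{v} \rightarrow \tilde{p}} \;:=\; E_p \circ f \circ D_v .
\]
Since $D_v : \tilde{\mathcal{V}} \rightarrow \mathcal{V}$, $f : \mathcal{V} \rightarrow \mathcal{P}$, and $E_p : \mathcal{P} \rightarrow \tilde{\mathcal{P}}$ are each well-defined, the composition is a well-defined function from $\tilde{\mathcal{V}}$ to $\tilde{\mathcal{P}}$. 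This already establishes existence in the narrowest reading of the lemma.

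The substantive content is that this $L$ is ``the right'' mapping, i.e.\ that it sends the latent code $\tilde{v} = E_v(v)$ of a velocity map to the latent code $\tilde{p} = E_p(p) = E_p(f(v))$ of the corresponding waveform. I would verify this by a direct diagram chase: for any $v \in \mathcal{V}$, writing $p = f(v)$,
\[
L_{\tilde{v} \rightarrow \tilde{p}}\bigl(E_v(v)\bigr) \;=\; E_p\bigl(f(D_v(E_v(v)))\bigr) \;=\; E_p\bigl(f(\hat{v})\bigr) \;\approx\; E_p(f(v)) \;=\; E_p(p) \;=\; \tilde{p},
\]
where the single approximation step invokes the optimality hypothesis $\hat{v} = D_v \circ E_v(v) \approx v$.

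The main subtlety — and the step I expect to need the most care — is the handling of the approximation symbol. If the optimality is taken as exact equality $D_v \circ E_v = \mathrm{id}_{\mathcal{V}}$, the chain above yields exact equality $L_{\tilde{v} \rightarrow \tilde{p}}(\tilde{v}) = \tilde{p}$, and no further machinery is needed. If $\hat{v} \approx v$ only up to some reconstruction error, I would need mild regularity (continuity, or Lipschitz control) of $f$ and $E_p$ to propagate the approximation and conclude $L_{\tilde{v} \rightarrow \tilde{p}}(\tilde{v}) \approx \tilde{p}$. I would state this as a side assumption, noting that the paper's encoders, decoders, and forward operator are all realized by continuous neural-network maps, so the assumption is automatically satisfied in the intended setting.

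Finally, I would record two contextual remarks. First, the witness $L$ is not unique: any function that agrees with $E_p \circ f \circ D_v$ on the image $E_v(\mathcal{V}) \subseteq \tilde{\mathcal{V}}$ validates the lemma equally well, which is consistent with the claim being a pure existence statement and with the freedom the paper exploits when parameterizing $L$ by a U-Net or linear map. Second, the argument only uses optimality of the velocity autoencoder and is indifferent to the quality of the waveform autoencoder, matching the asymmetric hypothesis in the statement; a symmetric ``inverse latent translation'' lemma would instead require $D_p \circ E_p \approx \mathrm{id}_{\mathcal{P}}$ and proceed via $L_{\tilde{p} \rightarrow \tilde{v}} := E_v \circ f^{-1} \circ D_p$.
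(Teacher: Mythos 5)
Your proposal is correct and matches the paper's proof essentially exactly: both exhibit the witness $L_{\tilde{v} \rightarrow \tilde{p}} = E_p \circ f \circ D_v$ and justify it by the chain $\tilde{p} = E_p(f(v)) \approx E_p(f(D_v(\tilde{v})))$ using the optimality of the velocity autoencoder. Your added remarks on propagating the $\approx$ via continuity and on non-uniqueness of the witness go slightly beyond what the paper states but do not change the argument.
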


\begin{proof}
Given the forward operator $f: \mathcal{V} \rightarrow \mathcal{P}$, by definition, for any $v \in \mathcal{V}$, there exists $p \in \mathcal{P}$ such that $p = f(v)$.

Let the latent space representations $\tilde{v} \in \tilde{\mathcal{V}}$ and $\tilde{p} \in \tilde{\mathcal{P}}$ be defined by the auto-encoders as follows:
\begin{align}
\tilde{v} &= E_v(v), \quad \hat{v} = D_v(\tilde{v}) \\
\tilde{p} &= E_p(p), \quad \hat{p} = D_p(\tilde{p})
\end{align}


To construct the latent space mapping $\tilde{p} = L_{\tilde{v} \rightarrow \tilde{p}}(\tilde{v})$, consider the sequence of compositions involving the encoders, decoders, and the forward operator $f$:
\begin{align}
\tilde{p} &= E_p(p) \nonumber\\
&= E_p(f(v)) \quad \text{(since } p = f(v) \text{)} \nonumber\\
&= E_p(f(\hat{v})) \quad \text{(assuming reconstruction: } \hat{v} \approx v\text{)} \nonumber\\
&= E_p(f(D_v(\tilde{v}))) \quad \text{(since } \hat{v} = D_v(\tilde{v})\text{)}
\end{align}

Thus, by definition of the composition of functions, the latent space mapping can be expressed as:

\begin{align}
    L_{\tilde{v} \rightarrow \tilde{p}} = E_p \circ f \circ D_v
\end{align}

\end{proof}

\begin{lemma}[Inverse Latent Space Translation Assumption]
Let $f^{-1}: \mathcal{P} \rightarrow \mathcal{V}$ be an arbitrary inverse operator mapping seismic waveforms $\mathcal{P}$ to velocity maps $\mathcal{V}$ that is unique. Let $E_p: \mathcal{P} \rightarrow \tilde{\mathcal{P}}$ and $D_p: \tilde{\mathcal{P}} \rightarrow \mathcal{P}$ denote the encoder and decoder for the seismic waveform space $\mathcal{P}$, respectively. Similarly, let $E_v: \mathcal{V} \rightarrow \tilde{\mathcal{V}}$ and $D_v: \tilde{\mathcal{V}} \rightarrow \mathcal{V}$ denote the encoder and decoder for the velocity space $\mathcal{V}$. Here, $\tilde{p} \in \tilde{\mathcal{P}}$ and $\tilde{v} \in \tilde{\mathcal{V}}$ represent the latent space encodings. If we assume that the auto-encoder for the seismic waveform space is optimal, i.e., $D_p \circ E_p(p) = \hat{p} \approx p$, then there exists a functional mapping in the latent space $L_{\tilde{p} \rightarrow \tilde{v}}: \tilde{\mathcal{P}} \rightarrow \tilde{\mathcal{V}}$.
\end{lemma}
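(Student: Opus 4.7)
The plan is to mirror the proof of the forward Latent Space Translation Lemma by constructing an explicit candidate for $L_{\tilde{p} \rightarrow \tilde{v}}$ as a composition of the given encoders, decoders, and the inverse operator $f^{-1}$. By symmetry with the previous argument, the natural candidate is
\begin{equation*}
L_{\tilde{p} \rightarrow \tilde{v}} \;=\; E_v \circ f^{-1} \circ D_p.
\end{equation*}
The task is then to verify that this composition is well-defined as a mapping from $\tilde{\mathcal{P}}$ to $\tilde{\mathcal{V}}$ and that it is consistent with the latent encoding $\tilde{v} = E_v(v)$ obtained from $v = f^{-1}(p)$.

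First I would fix an arbitrary $\tilde{p} \in \tilde{\mathcal{P}}$ and trace through the composition in order: apply $D_p$ to map $\tilde{p}$ back to $\hat{p} \in \mathcal{P}$, then apply $f^{-1}$ to obtain an element of $\mathcal{V}$, and finally encode it by $E_v$ to land in $\tilde{\mathcal{V}}$. Each of these is well-defined by the hypotheses of the lemma. Next I would justify that this composition actually agrees with the encoding of the true $v$ corresponding to $p$: starting from $\tilde{v} = E_v(v) = E_v(f^{-1}(p))$, I would substitute $p \approx D_p(E_p(p)) = D_p(\tilde{p})$ using the optimality assumption on the waveform auto-encoder, obtaining
\begin{equation*}
\tilde{v} \;=\; E_v(f^{-1}(p)) \;\approx\; E_v\bigl(f^{-1}(D_p(\tilde{p}))\bigr) \;=\; (E_v \circ f^{-1} \circ D_p)(\tilde{p}).
\end{equation*}
This shows both existence and the explicit form of $L_{\tilde{p} \rightarrow \tilde{v}}$.

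There is essentially no hard step here, since the construction is the categorical dual of the forward lemma; the only subtle point worth flagging is the role of the uniqueness hypothesis on $f^{-1}$. Unlike the forward direction, where $f$ is automatically a function, the physical inverse problem in FWI is famously ill-posed and may admit multiple velocity maps consistent with a given waveform. The uniqueness assumption is precisely what promotes $f^{-1}$ from a relation to a genuine function, which is necessary for the composition $E_v \circ f^{-1} \circ D_p$ to define a mapping rather than a multi-valued correspondence. I would therefore explicitly invoke uniqueness of $f^{-1}$ at the step where $f^{-1}(D_p(\tilde{p}))$ is asserted to be a single element of $\mathcal{V}$. With that remark in place, the conclusion that such an $L_{\tilde{p} \rightarrow \tilde{v}}$ exists follows immediately from the composition of functions.
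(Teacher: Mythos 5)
Your proposal is correct and follows essentially the same route as the paper's proof: it constructs $L_{\tilde{p} \rightarrow \tilde{v}} = E_v \circ f^{-1} \circ D_p$ via the chain $\tilde{v} = E_v(f^{-1}(p)) \approx E_v(f^{-1}(D_p(\tilde{p})))$ using the optimality of the waveform auto-encoder. Your explicit remark on where the uniqueness hypothesis is needed (to make $f^{-1}$ single-valued so the composition is a genuine function) is a worthwhile clarification that the paper leaves implicit.
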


\begin{proof}
Given the inverse operator $f^{-1}: \mathcal{P} \rightarrow \mathcal{V}$, by definition, for any $p \in \mathcal{P}$, there exists $v \in \mathcal{V}$ such that $v = f^{-1}(p)$. 

Let the latent space representations $\tilde{p} \in \tilde{\mathcal{P}}$ and $\tilde{v} \in \tilde{\mathcal{V}}$ be defined by the auto-encoders as follows:
\begin{align}
\tilde{p} &= E_p(p), \quad \hat{p} = D_p(\tilde{p}) \\
\tilde{v} &= E_v(v), \quad \hat{v} = D_v(\tilde{v})
\end{align}


To construct the latent space mapping $\tilde{v} = L_{\tilde{p} \rightarrow \tilde{v}}(\tilde{p})$, consider the sequence of compositions involving the encoders, decoders, and the inverse operator $f^{-1}$:
\begin{align}
\tilde{v} &= E_v(v) \nonumber\\
&= E_v(f^{-1}(p)) \quad \text{(since } v = f^{-1}(p) \text{)} \nonumber\\
&= E_v(f^{-1}(\hat{p})) \quad \text{(assuming reconstruction: } \hat{p} \approx p\text{)} \nonumber\\
&= E_v(f^{-1}(D_p(\tilde{p}))) \quad \text{(since } \hat{p} = D_p(\tilde{p})\text{)}
\end{align}

Thus, by definition of the composition of functions, the latent space mapping can be expressed as:

\begin{align}
    L_{\tilde{p} \rightarrow \tilde{v}} = E_v \circ f^{-1} \circ D_p
\end{align}

\end{proof}

\begin{table}[h]
    \centering
    \begin{tabular}{cccc}
  \toprule
    \textbf{Dataset} & \textbf{Examples} & \textbf{Velocity shape} & \textbf{Waveform shape}  \\
    \toprule
   
    FlatVel-A & 30,000 & (1, 70, 70) & (5, 1000, 70) \\

    FlatVel-B & 30,000 & (1, 70, 70) & (5, 1000, 70) \\
   
    CurveVel-A & 30,000 & (1, 70, 70) & (5, 1000, 70) \\
 
    CurveVel-B & 30,000 & (1, 70, 70) & (5, 1000, 70) \\
   
    FlatFault-A & 60,000  & (1, 70, 70) & (5, 1000, 70) \\
   
    FlatFault-B & 60,000  & (1, 70, 70) & (5, 1000, 70) \\

    CurveFault-A & 60,000 & (1, 70, 70) & (5, 1000, 70) \\
    
    CurveFault-B & 60,000 & (1, 70, 70) & (5, 1000, 70) \\

    Style-A & 67,000 & (1, 70, 70) & (5, 1000, 70) \\

    Style-B & 67,000 & (1, 70, 70) & (5, 1000, 70) \\
   \bottomrule
    \end{tabular}
    \caption{Statistics on the number of samples, the size of the velocity and waveforms for each dataset in OpenFWI \cite{deng2022openfwi}.}
    \label{tab:openfwi_dataset}
\end{table}

\section{Dataset Description}
\label{sec:dataset_description}

The OpenFWI comprises multi-structural benchmark datasets of significant size that can be used for solving full waveform inversion (FWI) using machine learning techniques \citep{deng2022openfwi}. In particular, the repository contains 3 major groups of data: (1) Vel Family, (2) Fault Family, and (3) Style Family. These groups represent simple to complex sub-surface geological settings with seismic velocity and waveforms information. The Vel family is the simplest geological patterns including four datasets - (1) FlatVel-A (FVA), (2) FlatVel-B (FVB), (3) CurveVel-A (CVA), and (4) CurveVel-B (CVB). The difference between FlatVel and CurveVel is that the former represents low-energy geological environments where the rock layers are deposited horizontally and the latter consists of curved layers which are formed due to structural deformation of flat layers. The Fault family also has four datasets - (1) FlatFault-A (FFA), (2) FlatFault-B (FFB), (3) CurveFault-A (CFA), and (4) CurveFault-B (CFB). Unlike Vel datasets, the Fault family contains fault-like deformations, which is fracturing of rocks under certain pressure conditions. Due to the presence of faults, the Fault family becomes more complicated and challenging to model. The Style family has two datasets - (1) Style-A (STA), and (2) Style-B (STB). This dataset is generated using the style transfer method where the COCO dataset \citep{lin2014microsoft} is set as the content images and the Marmousi dataset is set as the style image. This is the most complex OpenFWI dataset as it represents highly complex geological settings where the velocity is changing rapidly and abruptly. In summary, the details about the Vel and Fault datasets are described in table \ref{tab:openfwi_dataset}. 

The waveform data is represented as (\# source $\times$ recording time $\times$ receiver length) whereas the velocity follow (1 $\times$ depth $\times$ receiver length) shape. In seismic surveys, the wave arrival time is recorded at the surface and thus, the waveform data for a single source is represented as a function of time and receiver length. In total, there are 5 seismic sources uniformly spaced along the surface, and wavefields are recorded by 70 receivers (uniformly spaced) along the surface for 1000 milliseconds. Therefore, the seismic wavefields are of the shape ($5 \times 1000 \times 70$). On the other hand, velocity maps are represented as functions of spatial dimensions, depth and horizontal coverage, and thus have the shape ($1 \times 70 \times 70$).

\section{Prior Works in DL4SI as special cases of GFI}
\label{sec:appendix_prev_works}

Figure \ref{fig:previous_works} provides additional schematic illustrations of prior works in DL4SI such as InversionNet, WaveformNet, and AutoLinear as special cases of our proposed GFI framework. 

\begin{figure*}[h]
    \centering
   
        \includegraphics[width=0.4\textwidth]{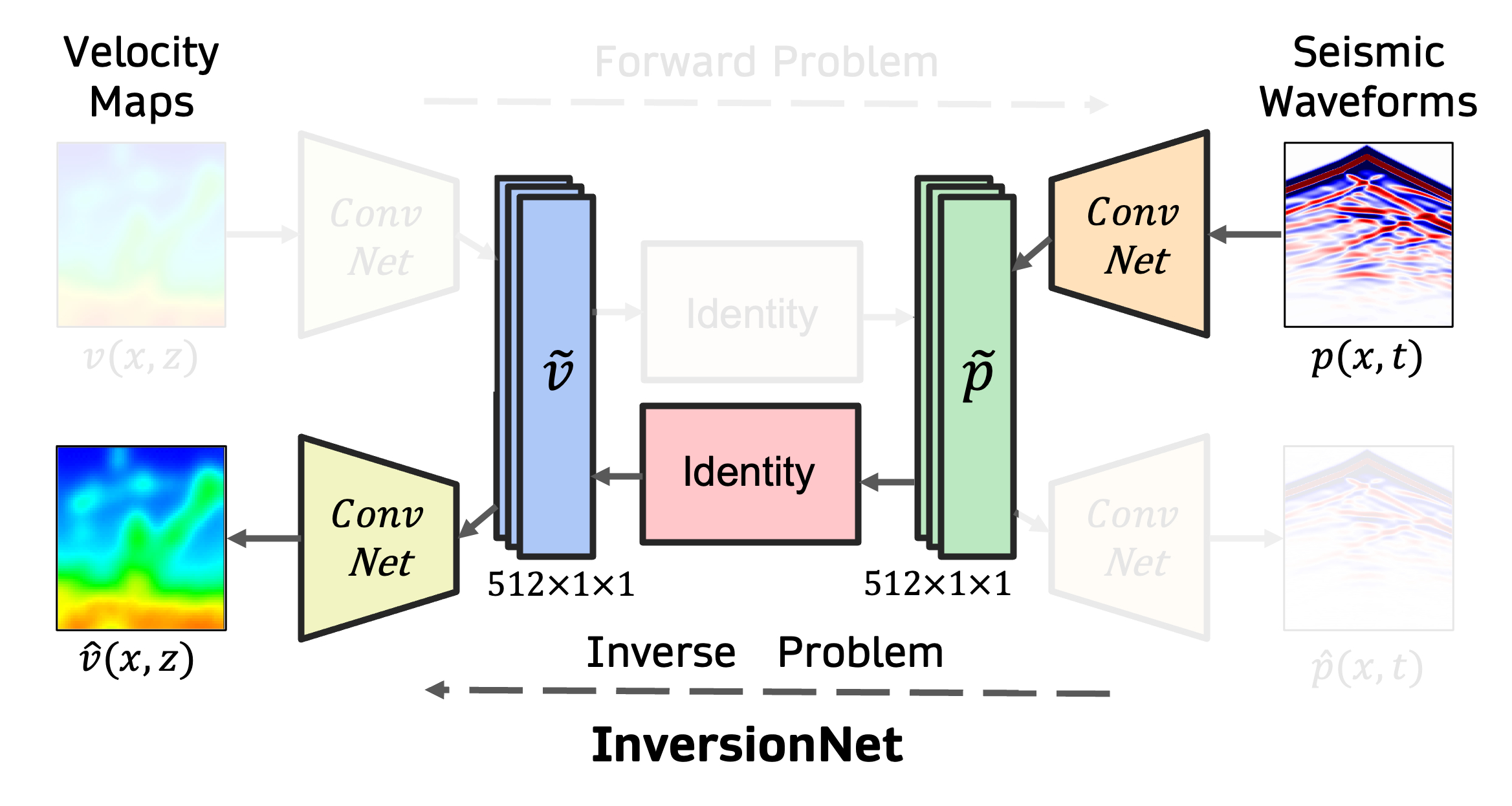}
        \hspace{1cm}
         \includegraphics[width=0.4\textwidth]{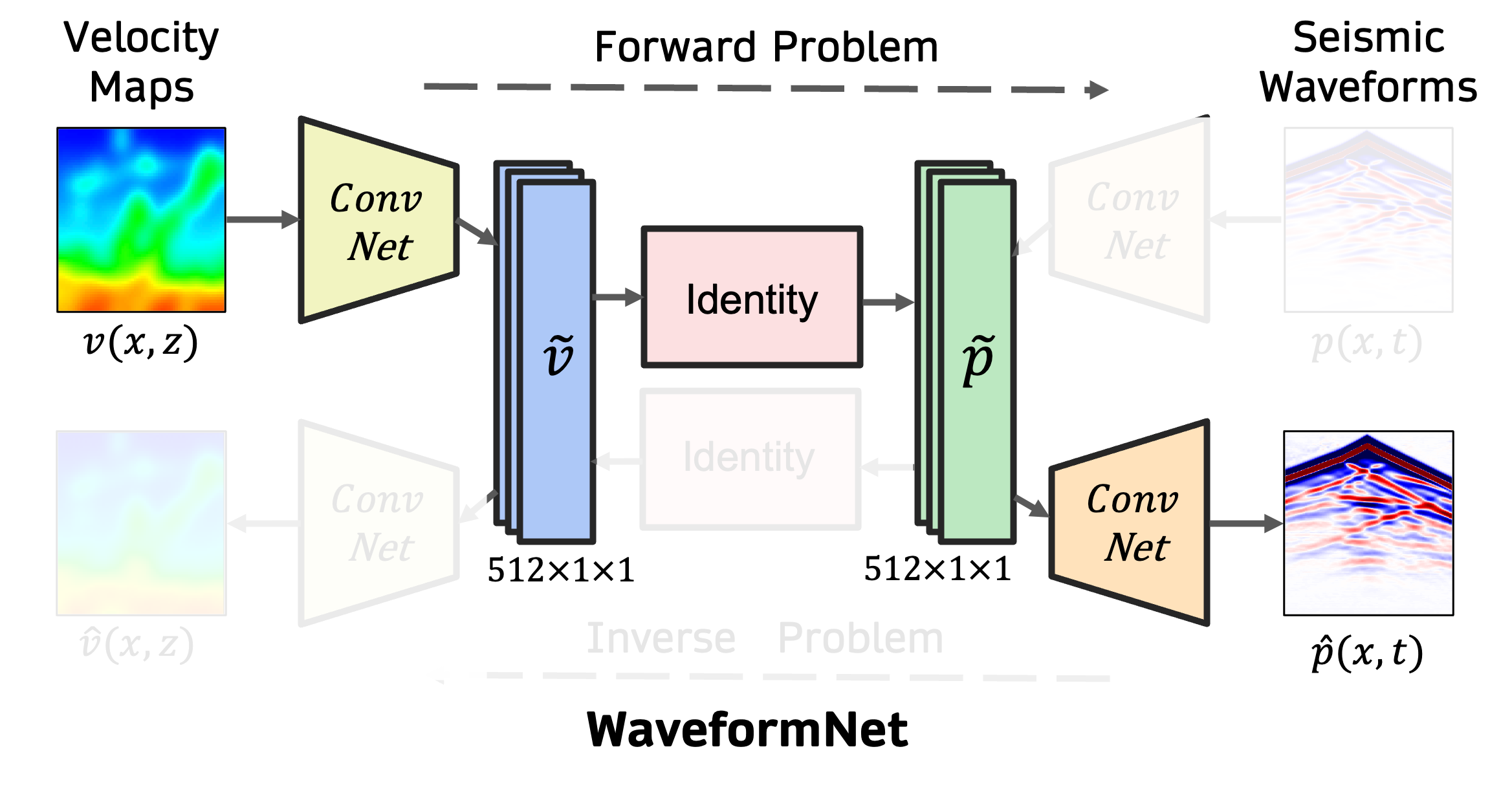}
         \includegraphics[width=0.4\textwidth]{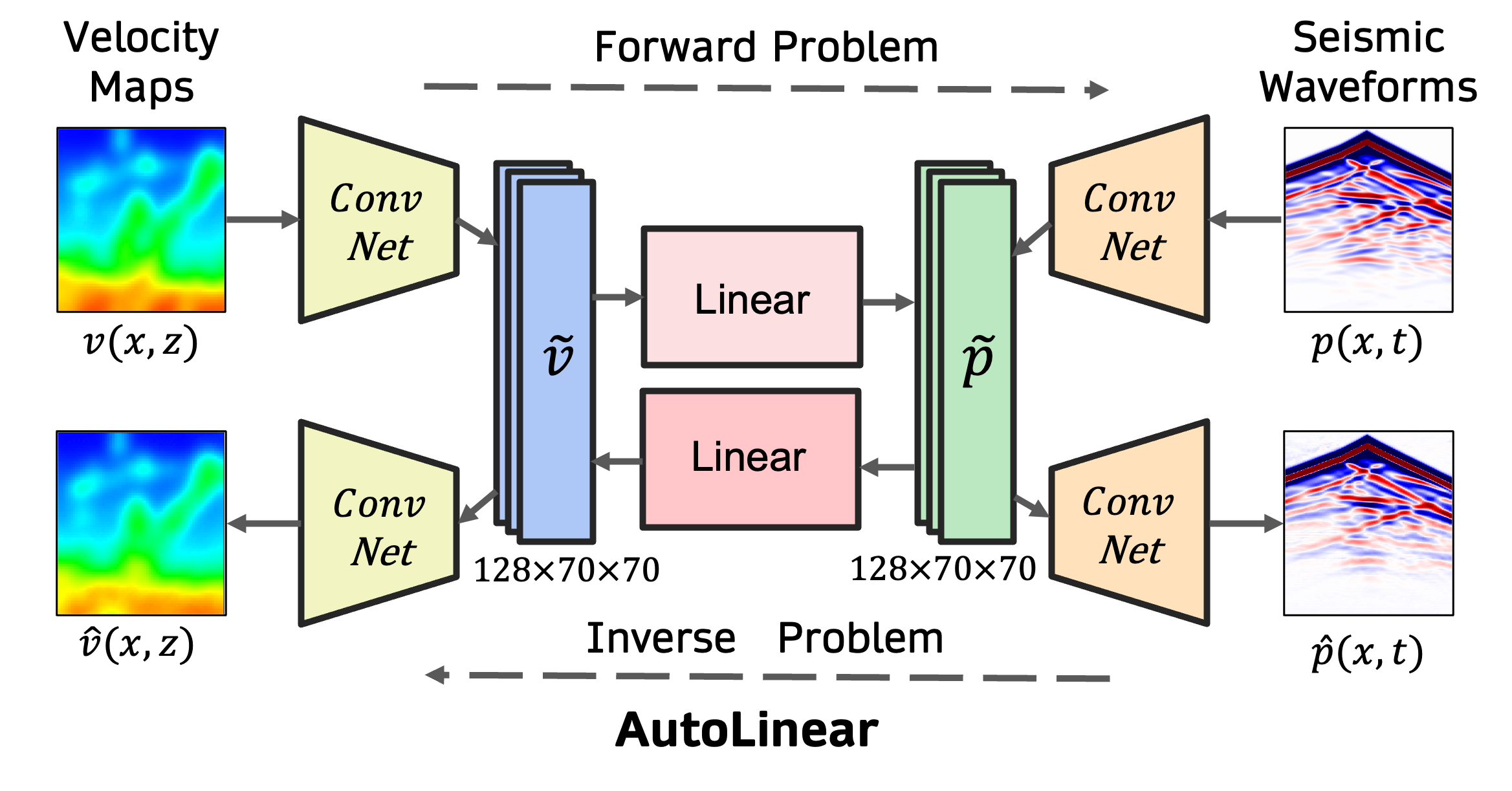}
        \caption{Prior Works exprerssed as GFI}
        \label{fig:previous_works}
\end{figure*}

\section{Additional Experimental Details}
\label{sec:additional_exp_details}
\subsection{Cycle Loss for Invertible X-Net}
\label{sec:cycle_loss_inv_x_net}
Given the velocity maps $v$ and the waveforms $p$, the predictions for the waveform and velocity can be obtained using the Invertible X-Net as follows:
\begin{align}
    &\hat{p} =  f_{v \rightarrow p}(v) =  \mathtt{D}_p \circ L_{\tilde{v} \rightarrow \tilde{p}} \circ \mathtt{E}_v(v) \\ 
    &\hat{v} =  f_{p \rightarrow v}(p) =  \mathtt{D}_v \circ L_{\tilde{p} \rightarrow \tilde{v}} \circ \mathtt{E}_p(p)
\end{align}

Now, the Invertible X-Net architecture can be further applied on $\hat{p}$ and $\hat{v}$ to create the following transformations:
\begin{align}
    &\hat{\hat{p}} =  f_{v \rightarrow p}(\hat{v}) =  \mathtt{D}_p \circ L_{\tilde{v} \rightarrow \tilde{p}} \circ \mathtt{E}_v(\hat{v}) \\ 
    &\hat{\hat{v}} =  f_{p \rightarrow v}(\hat{p}) =  \mathtt{D}_v \circ L_{\tilde{p} \rightarrow \tilde{v}} \circ \mathtt{E}_p(\hat{p})
\end{align}

The cycle-loss for Invertible X-Net can be mathematically defined as follows:
\begin{align}
\label{eq:cycle_loss}
    \mathcal{L}_{\text{cycle}} = \mathcal{L}(p, \hat{\hat{p}}) + \mathcal{L}(v, \hat{\hat{v}})
\end{align}

where $\mathcal{L}$ is the loss function that can be MSE, MAE or Elastic Loss. Note that the formulation of Cycle-Loss does not rely on paired examples, and can be applied on un-paired data as well. The combined loss function including cycle-loss can be given as:

\begin{align}
\label{eq:overall_loss_cycle}
    \mathcal{L}_{\text{X-Net (Cycle)}} = \mathcal{L}_{\text{forward}}+\mathcal{L}_{\text{inverse}}+\mathcal{L}_{\text{cycle}}
\end{align}

For training Invertible X-Net (Cycle) model, we use loss function shown in Equation \ref{eq:overall_loss_cycle}.

\subsection{Additional Training Details}
\label{sec:additional_training_details}
For training, we normalize the velocity using min-max normalization and seismic waveform using standard normalization to rescale the data to mean 0 and standard deviation as 1. Table \ref{tab:hparams_table} shows other hyperparameter details for training Latent U-Net and Invertible X-Net models on OpenFWI datasets.

Since the baseline models have different normalization schemes than our models, we compare model predictions during evaluation by unnormalizing the predictions to original domains. For example, AutoLinear uses min-max normalization for velocity and a combination of log-normalization $x_{\text{norm}}=(\log_{e}(1+|x|)*\mathtt{sign}(x)$ followed by min-max in the log-normalized domain for waveform. We unnormalized both the velocity and waveform predictions so that we can measure and visualize errors in the predictions in the original space, allowing easy comparison across models.

\subsection{Model Architecture}
\label{sec:model_architecture}
We summarize details related to model architecture, layers, and number of parameters related to seismic and velocity encoder-decoder architecture in Table \ref{tab:enc_dec_arch_detail}, and regarding the latent models used for translation in Table \ref{tab:model_arch_details}. Further, we compare our model parameters with baseline models in Table \ref{tab:enc_dec_model_arch_comparison}.

\subsubsection{\textcolor{black}{Latent U-Net}}
\label{sec:app_latent_unet_architecture}
\textcolor{black}{Latent U-Net architectures are specifically adapted to subsurface imaging by employing U-Net-based encoder-decoder pairs designed to handle domain-specific input and output dimensions, such as velocity maps and seismic waveforms, while enabling latent dimension processing. Both the Large and Small variants utilize identical encoder and decoder designs, featuring 5 layers in the encoder and 6 layers in the decoder. These layers operate on an embedding dimension of 128x70x70, with channel sizes ranging from 8 to 128 in the encoder and 128 to 1 (velocity) or 128 to 5 (waveform) in the decoder.}

\textcolor{black}{The Large Latent U-Net incorporates 2 depth levels and 4 convolutional blocks per depth level in its latent space translation model, resulting in a total parameter size of 34.96M. The Small variant, designed for reduced complexity, includes 1 convolutional block per depth level in its latent translation model, reducing the total parameter size to 18.13M.}

\subsubsection{\textcolor{black}{Invertible X-Net}}
\label{sec:app_xnet_architecture}
\textcolor{black}{Invertible X-Net leverages an iUNet-based architecture to achieve bidirectional mappings, enabling consistent forward (velocity-to-waveform) and inverse (waveform-to-velocity) transformations within a shared framework. iUNet introduces invertible coupling blocks, ensuring bijectivity for the latent space translation model. Max pooling is replaced with orthogonal convolutional filters for downsampling, while upsampling is performed using orthogonal deconvolution filters to maintain invertibility.}

\textcolor{black}{While the latent space translation model in Invertible X-Net is fully invertible due to its iUNet structure, the encoder-decoder pairs are not invertible. Therefore, Invertible X-Net as a whole is not fully invertible. However, by employing two separate encoder-decoder pairs for velocity and waveform domains, Invertible X-Net achieves an architecture that enables bidirectional mappings between the two domains while preserving consistency.}

\textcolor{black}{Invertible X-Net features 4 depth levels, with 4 invertible coupling blocks per level, and adopts an encoder-decoder structure similar to Latent U-Net, with 5 encoder layers and 6 decoder layers, an embedding dimension of 128x70x70, and channel sizes ranging from 8 to 128. The latent translation model - iUNet consists of 25.78M parameters, contributing to a total model size of 26.06M.}

\begin{table}
\caption{Architecture Details of Seismic Waveform and Velocity Encoder-Decoder models.}
\label{tab:enc_dec_arch_detail}
\centering
\renewcommand{\arraystretch}{1.2}
\begin{tabular}{ccccc}
\hline

Model                    & \#Layers & \begin{tabular}[c] {@{}c@{}}\#Embedding \\ Dim \end{tabular}  & Channels                    & \#Params \\
\hline
Velocity Encoder         & 5        & 128x70x70       & {[}8, 16, 32, 64, 128{]}    &     11632     \\
Velocity Decoder         & 6        & 128x70x70       & {[}128, 64, 32, 16, 1, 1{]} &      27877    \\
\begin{tabular}[c] {@{}c@{}}Seismic Waveform \\ Encoder \end{tabular} & 5        & 128x70x70       & {[}8, 16, 32, 64, 128{]}    &   55680       \\
\begin{tabular}[c] {@{}c@{}}Seismic Waveform \\ Decoder \end{tabular} & 6        & 128x70x70       & {[}128, 64, 32, 16, 5, 5{]} & 186497
\\   \hline
\end{tabular}
\end{table}
\begin{table}
\caption{Architecture details of Latent U-Net and IU-Net latent space translation models}
\label{tab:model_arch_details}
\centering
\renewcommand{\arraystretch}{1.2}
\begin{tabular}{cccc}
\hline

Model                    & \#Depths & \begin{tabular}[c] {@{}c@{}}\#Conv blocks/ \\ Coupling Blocks \end{tabular}                   & \#Params \\
\hline
Latent U-Net         &    2     &      4     &    34.68M    \\
IU-Net        &  4      &     4   &      25.78M    \\
  \hline
\end{tabular}
\end{table}

\begin{table}[]
\caption{Hyperparameter details for training Latent U-Net and Invertible X-Net models.}
\label{tab:hparams_table}
\centering
\renewcommand{\arraystretch}{1.2}
\begin{tabular}{ccccc}
\hline

Model                    & \#Epochs & \ Optimizer  & LR                    & \ LR Scheduler\\
\hline
Latent U-Net         &    450     &      Adam     &    2e-3  & StepLR  \\
Invertible X-Net        &  450     &     Adam   &      2e-3  & StepLR  \\
\hline
\end{tabular}
\end{table}

\begin{table*}[]
\caption{Comparison of encoder, decoder, and latent model parameters for our model (Latent U-Net and Invertible X-Net) with other baseline models. The parameters for Latent U-Nets and Invertible X-Nets are calculated for Latent dimension 70.}
\label{tab:enc_dec_model_arch_comparison}
\centering
\renewcommand{\arraystretch}{1.2}
\resizebox{1\textwidth}{!}{
\begin{tabular}{ccccccc}
\hline

Model    &   \begin{tabular}[c] {@{}c@{}}\#Vel Encoder \\ Params \end{tabular}  & \begin{tabular}[c] {@{}c@{}}\#Vel Decoder \\ Params \end{tabular}  & \begin{tabular}[c] {@{}c@{}}\#Amp Encoder \\ Params \end{tabular}  &  \begin{tabular}[c] {@{}c@{}}\#Amp Decoder \\ Params \end{tabular}  & \begin{tabular}[c] {@{}c@{}}\#Translation \\ Params \end{tabular}  & \begin{tabular}[c] {@{}c@{}}\#Total \\ Params \end{tabular} \\ 
\hline
FNO & - & - & - & - & - & 7.38M\\
InversionNet & - & 9.34M & 35.76M & - & Identity& 24.41M\\
VelocityGAN & - & 9.34M & 35.76M & - & Identity & 24.41M \\
Autolinear & 12.98M & 9.98M  & 2.29M &  10.18M & 16.5K & 35.45M \\
Latent U-Net(Small)  & 11.6K & 27.88K & 55.68K &  186.5K  &    17.86M & 18.13M   \\
Latent U-Net(Large)  & 11.6K & 27.88K & 55.68K &  186.5K  &    34.68M & 34.96M   \\
Invertible X-Net  & 11.6K & 27.88K & 55.68K &  186.5K  &       25.78M & 26.06M   \\
  \hline
\end{tabular}}
\end{table*}

\section{Additional Results}

\begin{figure}[ht]
    \centering
    \begin{subfigure}{0.42\textwidth}
        \centering
        \includegraphics[width=\textwidth]{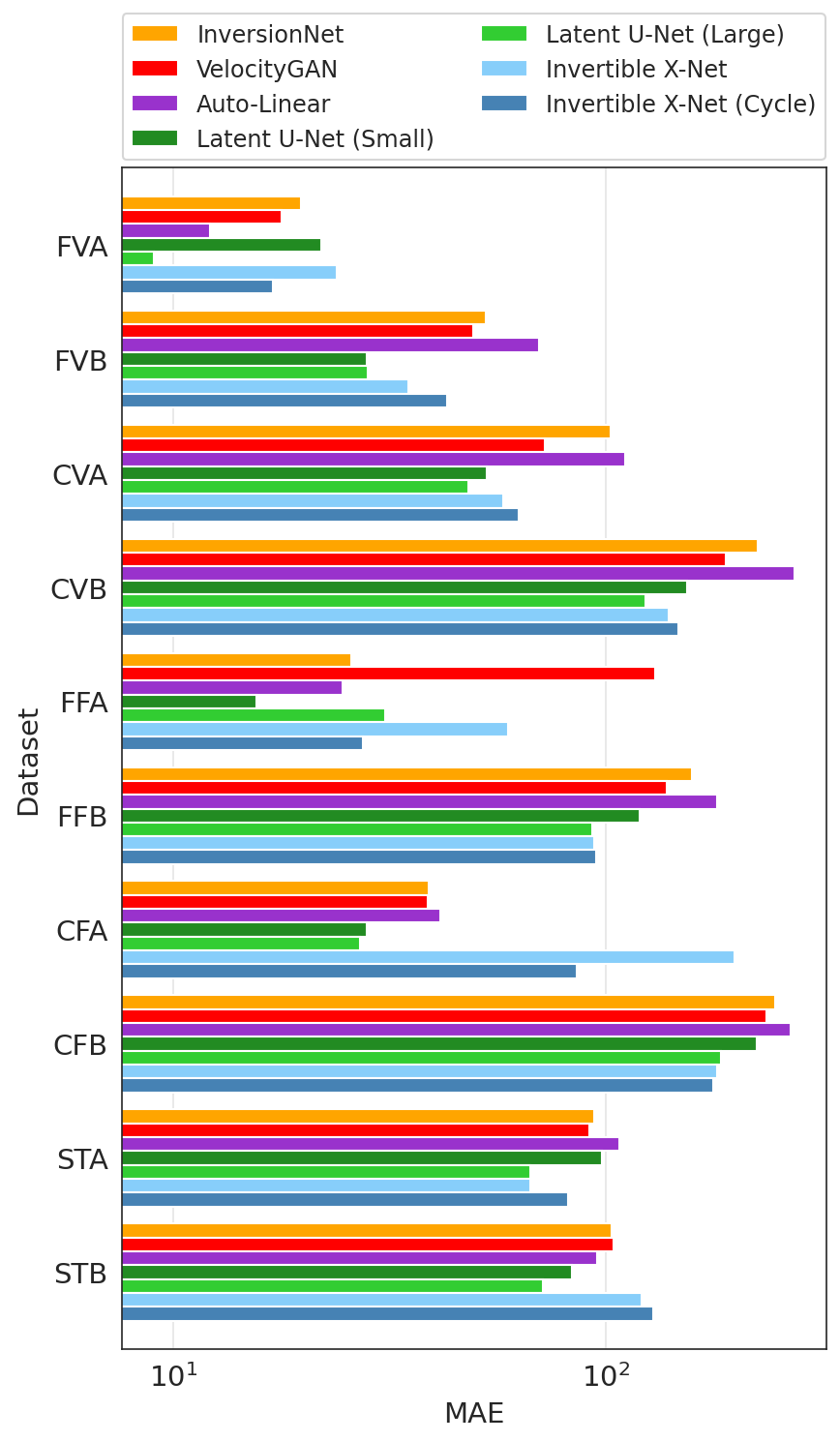}
        \caption{Inverse problem}
        \label{fig:mae_inverse_modeling_comparison}
    \end{subfigure}
    \begin{subfigure}{0.42\textwidth}
        \centering
         \includegraphics[width=\textwidth]    {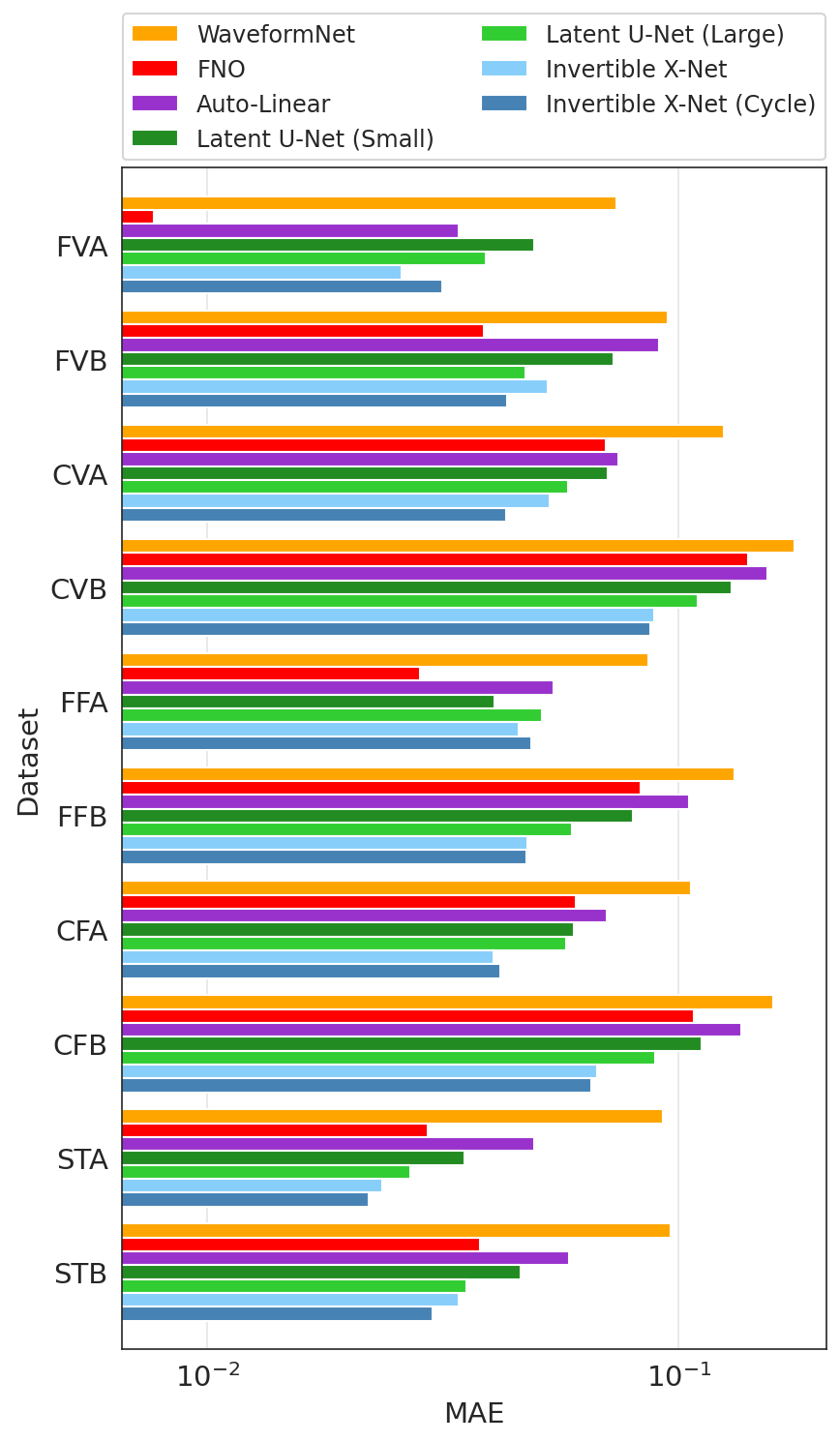}
    \caption{Forward problem}
    \label{fig:mae_forward_modeling_comparison}
    \end{subfigure}
    \caption{\textcolor{black}{Comparison of Latent U-Nets (Small and Large), Invertible X-Net, Invertible X-Net (Cycle) with different baseline methods across different OpenFWI datasets.}}
\end{figure}

\subsection{Additional Inverse Modeling Results}
\label{sec:additional_inverse_results}
We provide more detailed comparison of our models with other baseline models in Table \ref{tab:supervised_inverse_metrics}. Our proposed models consistently outperform baseline models on multiple datasets indicating superior generalizability on in-distributions examples.

Additionally, we also show zero shot generalization of our models on the Marmousi and Overthrust dataset in Tables \ref{tab:marmousi_zero_shot_inverse_table}. Our model Invertible X-Net shows generalizability in SSIM indicating that overall prediction has better geological understanding than other baseline models.

\begin{table*}[h!]
\centering
\caption{Comparison of our models (Latent U-Net (Large)  and Invertible X-Net) with other baseline models for the inverse problem across 10 OpenFWI datasets. The bold highlights the best performing model on that dataset.}
\label{tab:supervised_inverse_metrics}
\resizebox{1\textwidth}{!}{
\begin{tabular}{c|c|c|c|c|c|c|c|c|c|c|c|c}
\toprule[1.5pt]
Metric & Model  & FVA & FVB & CVA & CVB & FFA & FFB & CFA & CFB & STA & STB & \textcolor{black}{Average Rank} \\ 
\hline
\toprule[1.5pt]
\multirow{4}{*}{MAE $\downarrow$} & InversionNet & 19.67 & 52.77 & 102.77 & 224.57 & 25.80 & 158.31 & 38.90 & 246.94 & 93.96 & 103.37 & 2.8\\ \cmidrule{2-13}
 & Auto-Linear & 12.16 & 70.05 & 110.72 & 273.02 & \textbf{24.55} & 181.39 & 41.38 & 268.47 & 107.85 & 95.63 & 3.5\\ \cmidrule{2-13}
 & Latent U-Net & \textbf{9.01} & \textbf{28.11} & \textbf{48.05} & \textbf{123.56} & 30.91 & \textbf{92.91} & \textbf{27.05} & 185.22 & \textbf{67.00} & \textbf{71.41} & \textbf{1.3} \\ \cmidrule{2-13}
 & Invertible X-Net & 23.80 & 34.94 & 57.86 & 139.81 & 59.45 & 94.11 & 198.56 & \textbf{181.03} & 67.09 & 121.42 & 2.7\\ 
\midrule[1.5pt]

\multirow{4}{*}{MSE $\downarrow$} & InversionNet & 1002.74 & 17271.62 & 36438.33 & 188044.46 & 4081.10 & 68214.92 & 9490.73 & 136327.84 & 23626.76 & 58622.35 & 2.8 \\ \cmidrule{2-13}
 & Auto-Linear & 1053.03 & 33907.57 & 42391.39 & 236457.48 & 5952.25 & 81789.80 & 13715.42 & 154713.12 & 31274.60 & 21819.14 & 3.5 \\ \cmidrule{2-13}
 & Latent U-Net & \textbf{216.92} & \textbf{6464.06} & \textbf{12609.65} & \textbf{91784.50} & \textbf{2348.22} & 33935.12 & \textbf{3771.38} & 93249.92 & 14353.55 & \textbf{14564.94} & \textbf{1.3} \\ \cmidrule{2-13}
 & Invertible X-Net & 1245.52 & 6969.42 & 14659.46 & 96121.90 & 7719.55 & \textbf{32559.95} & 77105.41 & \textbf{86512.36} & \textbf{13106.96} & 31550.22 & 2.4 \\ 
 \midrule[1.5pt]
 
\multirow{4}{*}{SSIM $\uparrow$} & InversionNet & 0.9894 & 0.9461 & 0.8073 & 0.6726 & 0.9765 & 0.7208 & 0.9566 & 0.6136 & 0.8858 & 0.6314  & 3\\ \cmidrule{2-13}
 & Auto-Linear & 0.9887 & 0.9044 & 0.8056 & 0.6169 & 0.97 & 0.6865 & 0.9424 & 0.5695 & 0.8422 & 0.7274 & 3.8  \\ \cmidrule{2-13}
 & Latent U-Net & \textbf{0.9967} & \textbf{0.9809} & \textbf{0.9273} & \textbf{0.8156} & \textbf{0.991} & 0.8515 & \textbf{0.98} & 0.6930 & 0.9298 & \textbf{0.8064}  & \textbf{1.3}\\ \cmidrule{2-13}
 & Invertible X-Net & 0.9917 & 0.9769 & 0.9135 & 0.8076 & 0.9826 & \textbf{0.8532} & 0.9316 & \textbf{0.7116} & \textbf{0.9360} & 0.7913 & 1.9  \\ \hline
 \bottomrule[1.5pt]
\end{tabular}}

\end{table*}

\begin{table*}[h!]
\centering
\caption{Comparison of our models (Latent U-Net (Large) and Invertible X-Net) on real world like datasets - Marmousi, Overthrust, Marmousi Smooth, Overthrust Smooth for the Inverse Problem. The bold highlights the best performing model on that dataset.}
\label{tab:marmousi_zero_shot_inverse_table}
\resizebox{1\textwidth}{!}{
\begin{tabular}{c|c|c|c|c|c}
\toprule[1.5pt]
Metric & Model  & Marmousi & Overthrust & Marmousi Smooth & Overthrust Smooth  \\ 
\hline
\toprule[1.5pt]
\multirow{4}{*}{MAE $\downarrow$} & InversionNet & \textbf{282.39} & 273.80 & \textbf{179.34} & \textbf{148.34}  \\ \cmidrule{2-6}
 & Auto-Linear & 285.38 & 298.89 & 206.38 & 207.41  \\ \cmidrule{2-6}
 & Latnet U-Net & 322.13 & \textbf{264.40} & 242.71 & 198.53  \\ \cmidrule{2-6}
 & Invertible X-Net & 298.39 & 308.50  & 245.86 & 258.26 \\ 
\midrule[1.5pt]

\multirow{4}{*}{MSE $\downarrow$} & InversionNet & \textbf{160084.5} & 135988.73 & \textbf{67978.40} & \textbf{37804.58 } \\ \cmidrule{2-6}
 & Auto-Linear & 159517.41 & 157434.56 & 78207.07 & 71128.92 \\ \cmidrule{2-6}
 & Latnet U-Net & 217508.17 & \textbf{122398.64} & 112685.39 & 69122.58  \\ \cmidrule{2-6}
 & Invertible X-Net & 180250.47 & 179093.10 & 107344.46 & 124700.75  \\ 
 \midrule[1.5pt]
 
\multirow{4}{*}{SSIM $\uparrow$} & InversionNet & 0.46 & 0.4519 & 0.6044 & \textbf{0.7217}  \\ \cmidrule{2-6}
 & Auto-Linear & 0.438 & 0.4097 & 0.5423 & 0.6447 \\ \cmidrule{2-6}
 & Latnet U-Net & 0.438 & 0.4807 & 0.6371 & 0.7031 \\ \cmidrule{2-6}
 & Invertible X-Net & \textbf{0.504} & \textbf{0.4827} & \textbf{0.6633} & 0.6952 \\ \hline
 \bottomrule[1.5pt]
\end{tabular}}
\end{table*}


\subsection{Additional Forward Modeling Results}
\label{sec:additional_forward_results}
Similar to the inverse problem, we provide detailed comparison of our models with other baseline models in Table \ref{tab:supervised_forward_metrics} and \ref{tab:marmousi_zero_shot_forward_table}. Our proposed models consistently outperform baseline models on multiple datasets indicating superior in-distributions generalizability.

\begin{table*}[h!]
\centering
\caption{Comparison of our models (Latent U-Net (Large) and Invertible X-Net) with other baseline models for the forward problem across 10 OpenFWI datasets. The bold highlights the best performing model on that dataset.}
\label{tab:supervised_forward_metrics}
\resizebox{1\textwidth}{!}{
\begin{tabular}{c|c|c|c|c|c|c|c|c|c|c|c|c}
\toprule[1.5pt]
Metric & Model  & FVA & FVB & CVA & CVB & FFA & FFB & CFA & CFB & STA & STB & Average Rank \\ 
\hline
\toprule[1.5pt]
\multirow{4}{*}{MAE $\downarrow$} & FNO & \textbf{0.0077} & \textbf{0.0385} & 0.0700 & 0.1405 & \textbf{0.0282} & 0.0829 & 0.0605 & 0.1075 & 0.0292 & 0.0379 & 2.4 \\ \cmidrule{2-13}
 & Auto-Linear & 0.0340 & 0.0906 & 0.0744 & 0.1537 & 0.0541 & 0.1048 & 0.0703 & 0.1356 & 0.0492 & 0.0584 & 3.9\\ \cmidrule{2-13}
 & Latent U-Net & 0.0389 & 0.0473 & 0.0581 & 0.1098 & 0.0512 & 0.059 & 0.0576 & 0.0891 & 0.0269 & 0.0354 & 2.3\\ \cmidrule{2-13}
 & Invertible X-Net & 0.0257 & 0.0527 & \textbf{0.0532} & \textbf{0.0887} & 0.0457 & \textbf{0.0477} & \textbf{0.0404} & \textbf{0.0671} & \textbf{0.0235} & \textbf{0.0341 } & \textbf{1.4} \\ 
\midrule[1.5pt]

\multirow{4}{*}{MSE $\downarrow$} & FNO & \textbf{0.0004} & \textbf{0.0066} & 0.0259 & 0.0825 & 0.0059 & 0.0358 & 0.0251 & 0.0471 & 0.0041 & \textbf{0.0053} & 1.9 \\ \cmidrule{2-13}
 & Auto-Linear & 0.0084 & 0.0427 & 0.0303 & 0.1045 & 0.0217 & 0.0625 & 0.033 & 0.0866 & 0.0174 & 0.0173 & 3.7 \\ \cmidrule{2-13}
 & Latent U-Net & 0.0173 & 0.0127 & \textbf{0.0160} & 0.0489 & 0.0344 & 0.0175 & 0.0442 & 0.0297 & 0.0053 & 0.0082 & 2.6 \\ \cmidrule{2-13}
 & Invertible X-Net & 0.0062 & 0.0176 & 0.0175 & \textbf{0.0305} & \textbf{0.0185} & \textbf{0.0117} & \textbf{0.0126} & \textbf{0.0163} & \textbf{0.004} & 0.0087 & \textbf{1.8} \\ 
 \midrule[1.5pt]
 
\multirow{4}{*}{SSIM $\uparrow$} & FNO & \textbf{0.9967} & 0.9781 & 0.8881 & 0.8354 & 0.9667 & 0.8702 & 0.9166 & 0.8160 & 0.9655 & 0.9417 & 3.1\\ \cmidrule{2-13}
 & Auto-Linear & 0.9694 & 0.9289 & 0.9038 & 0.8567 & 0.9470 & 0.8736 & 0.9253 & 0.8188 & 0.9569 & 0.9300 & 3.5\\ \cmidrule{2-13}
 & Latent U-Net & 0.9764 & 0.9779 & 0.9237 & 0.89 & 0.9659 & 0.9283 & 0.9571 & 0.8636 & 0.9819 & 0.9702 & 2.3 \\\cmidrule{2-13}
 & Invertible X-Net & 0.9887 & \textbf{0.9782} & \textbf{0.9559} & \textbf{0.9221} & \textbf{0.9744} & \textbf{0.954} & \textbf{0.9757} & \textbf{0.9156} & \textbf{0.989} & \textbf{0.9805} & \textbf{1.1}  \\ \hline
 \bottomrule[1.5pt]
\end{tabular}}
\end{table*}

\begin{table*}[h!]
\centering
\caption{Comparison of our models (Latent U-Net (Large) and Invertible X-Net) on real world like datasets - Marmousi, Overthrust, Marmousi Smooth, Overthrust Smooth for the Forward Problem. The bold highlights the best performing model on that dataset.}
\label{tab:marmousi_zero_shot_forward_table}
\resizebox{1\textwidth}{!}{
\begin{tabular}{c|c|c|c|c|c}
\toprule[1.5pt]
Metric & Model  & Marmousi & Overthrust & Marmousi Smooth & Overthrust Smooth  \\ 
\hline
\toprule[1.5pt]
\multirow{4}{*}{MAE $\downarrow$} & FNO & 0.1484 & 0.2726 & 0.1077 & 0.1892  \\ \cmidrule{2-6}
 & Auto-Linear & 0.2818 & 0.3018 & 0.2821 & 0.2730  \\ \cmidrule{2-6}
 & Latnet U-Net & \textbf{0.1338} & 0.2502 & \textbf{0.1013} & \textbf{0.1875}  \\ \cmidrule{2-6}
 & Invertible X-Net & 0.1425 & \textbf{0.2311}  & 0.1056 & 0.2062 \\ 
\midrule[1.5pt]

\multirow{4}{*}{MSE $\downarrow$} & FNO & \textbf{0.1110} & 0.580 & \textbf{0.0811} & \textbf{0.3495} \\ \cmidrule{2-6}
 & Auto-Linear & 0.4227 & 0.6203 & 0.5325 & 0.5593 \\ \cmidrule{2-6}
 & Latnet U-Net & 0.1116 & 0.5494 & 0.0927 & 0.4133  \\ \cmidrule{2-6}
 & Invertible X-Net & 0.1168 &  \textbf{0.4026} & 0.08130 & 0.5049  \\ 
 \midrule[1.5pt]
 
\multirow{4}{*}{SSIM $\uparrow$} & FNO & 0.8148 & 0.7404 & 0.9021 & 0.8447  \\ \cmidrule{2-6}
 & Auto-Linear & 0.6344 & 0.672 & 0.6670 & 0.7192 \\ \cmidrule{2-6}
 & Latnet U-Net & \textbf{0.8343} & 0.7700 & \textbf{0.9143} & \textbf{0.8626} \\ \cmidrule{2-6}
 & Invertible X-Net & 0.827 & \textbf{0.7863} & 0.9112 & 0.8500 \\ \hline
 \bottomrule[1.5pt]
\end{tabular}}
\end{table*}


\subsection{Importance of Combined Loss Function for Invertible-XNet}
\label{sec:combined_loss_function_invertible_xnet}
In this section, we focus on the training of the Invertible X-Net model using a combined loss function (incorporating both the forward and inverse problems), as opposed to training it solely with a forward loss function. Figure \ref{fig:appendix_forward_discussion} shows  that when model is trained using only forward loss, then its performance falls short compared to the Latent U-Net (Large) model. This discrepancy can be attributed to the fact that the Latent U-Net has a higher  model complexity than Invertible X-Net, despite their similar sizes. Nonetheless, when the Invertible X-Net model is trained with combined loss (forward and inverse), the model is able to outperform Latent U-Net model with a good margin. This highlights the value of joint training, demonstrating that simultaneously learning both the forward and inverse problems can lead to better results than learning the two models separately.

In Figures \ref{fig:appendix_training_assymetry_cvb} and \ref{fig:appendix_training_assymetry_cfa}, we illustrate asymmetry in learning the translation for the forward and inverse problems using CVB and CFA datasets. From the figures, we observe that the model learns the inverse mapping in the initial epochs and gradually starts to learn the solution to the forward problem in later epochs. Since the network optimizes the combined loss function on both velocity and waveform together, the gradients from the combined loss help the model to achieve better forward solution. This corroborates with our hypothesis that the model trained on combined loss is able to learn the connection between forward and inverse problem.

\begin{figure}[h]
    \centering
    \includegraphics[width=0.8\textwidth]{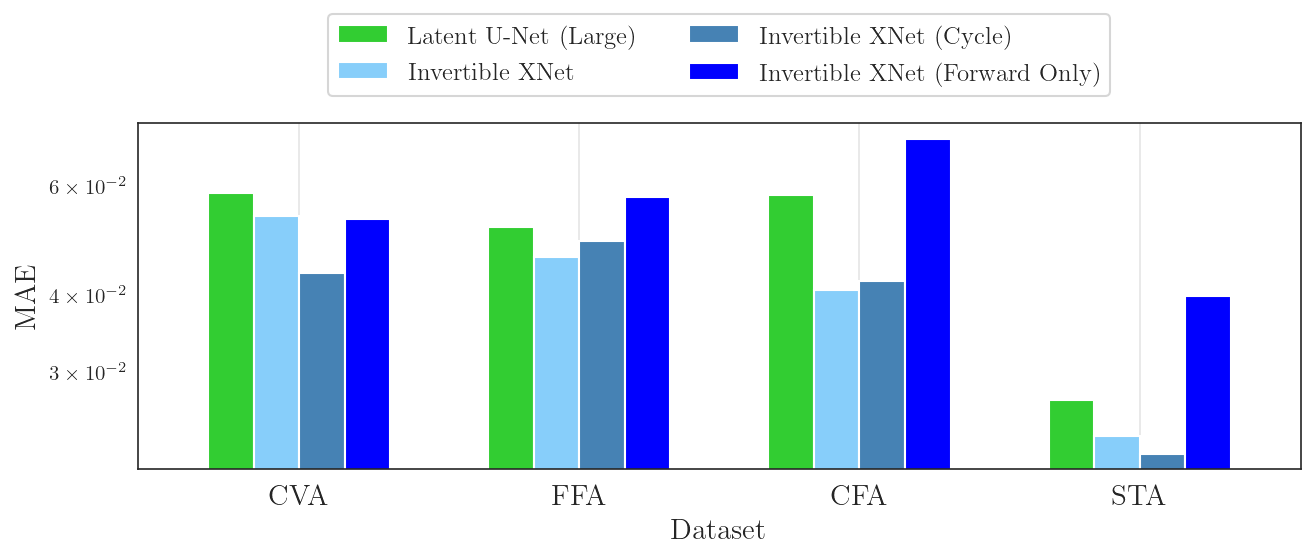}
    \caption{Comparison of Latent U-Net (Large), Invertible X-Net, Invertible X-Net (Forward Only), and Invertible X-Net (Cycle) on the forward problem across various OpenFWI datasets.}
    \label{fig:appendix_forward_discussion}
\end{figure}

\begin{figure*}[ht]
    \centering
    \begin{subfigure}{\textwidth}
        \centering
         \includegraphics[width=0.6\textwidth]{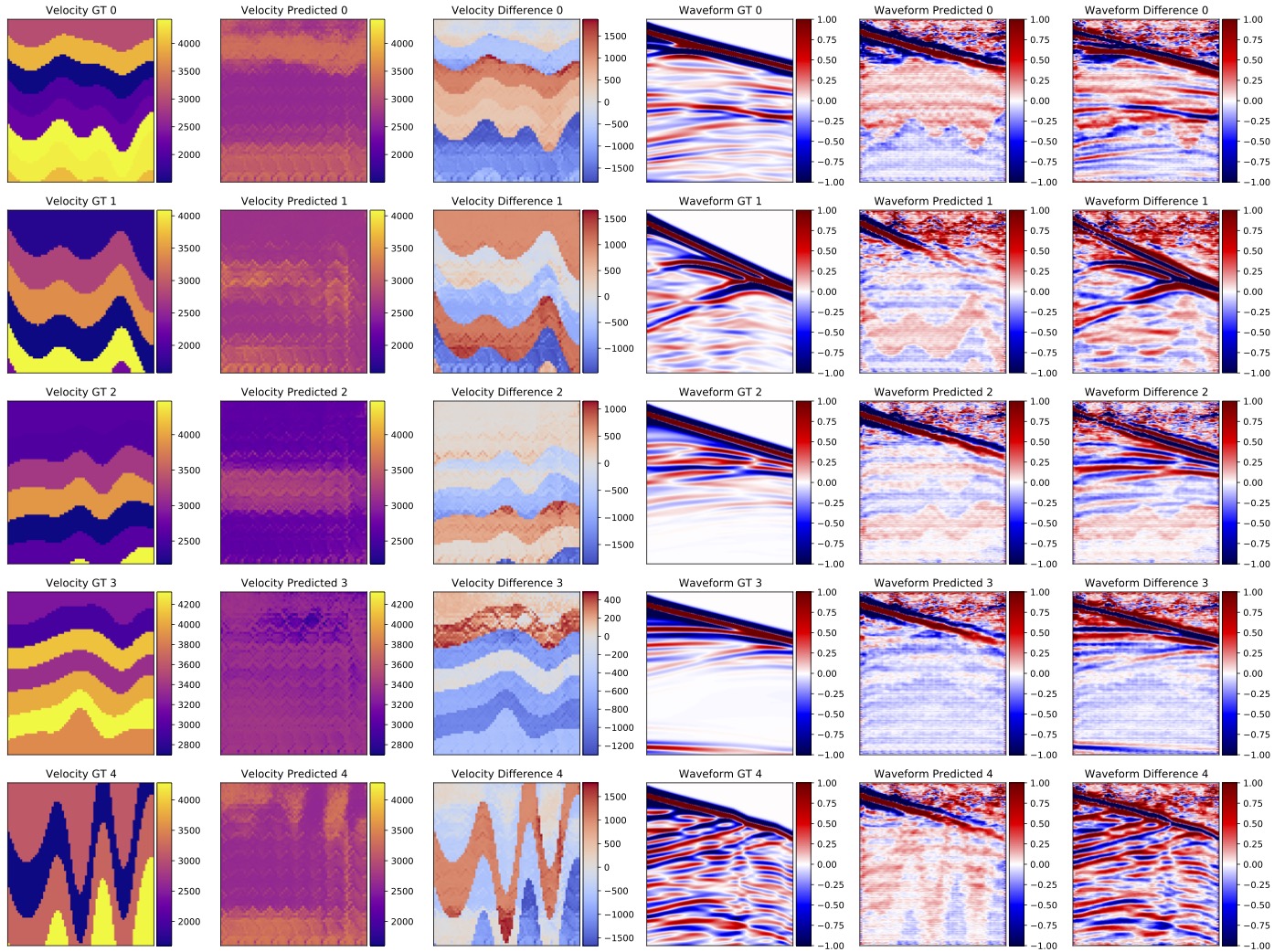}
         \caption{Epoch: 1}
    \end{subfigure}

    \begin{subfigure}{\textwidth}
        \centering
         \includegraphics[width=0.6\textwidth]{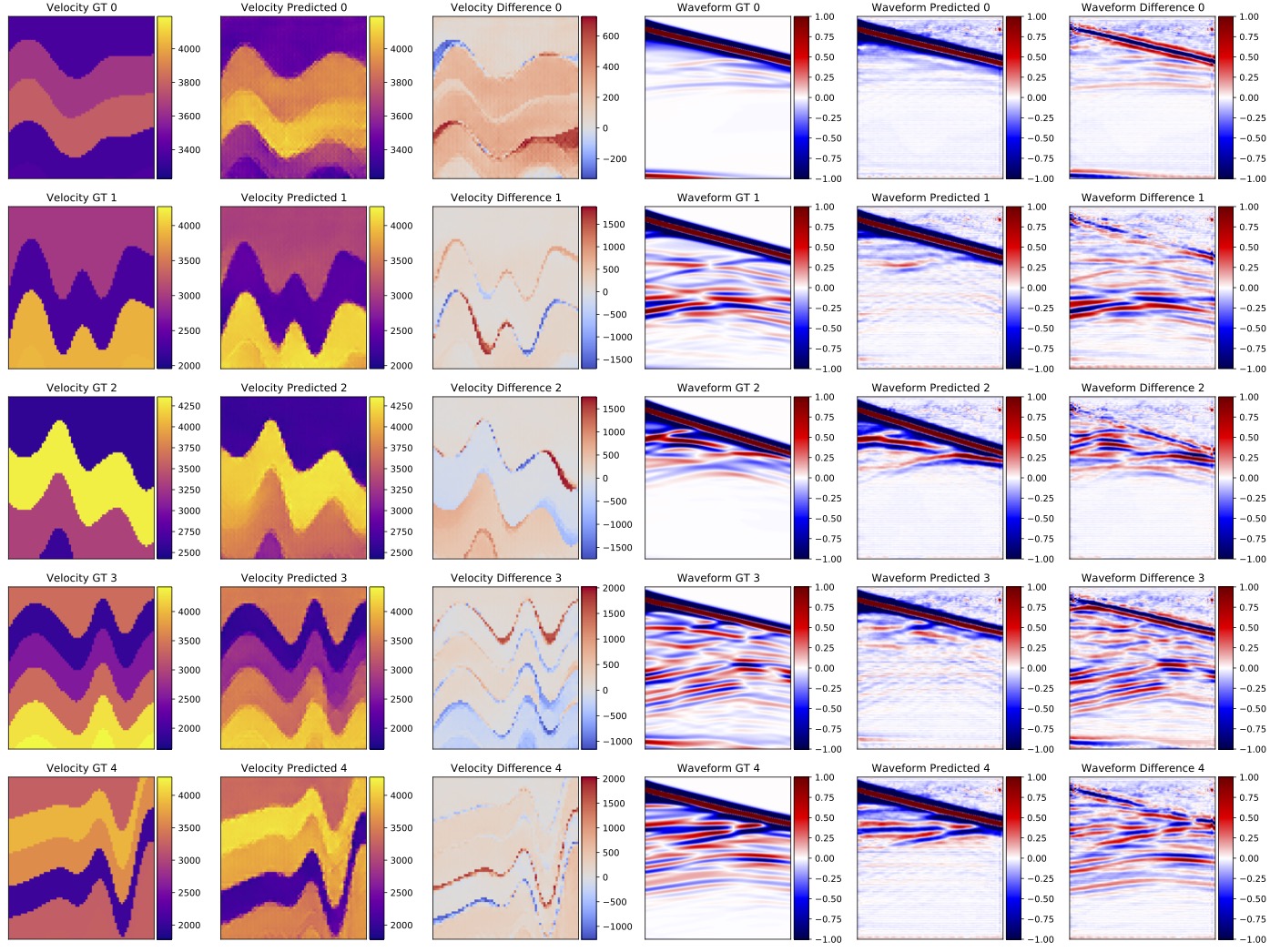}
         \caption{Epoch: 10}
    \end{subfigure}

    \begin{subfigure}{\textwidth}
        \centering
         \includegraphics[width=0.6\textwidth]{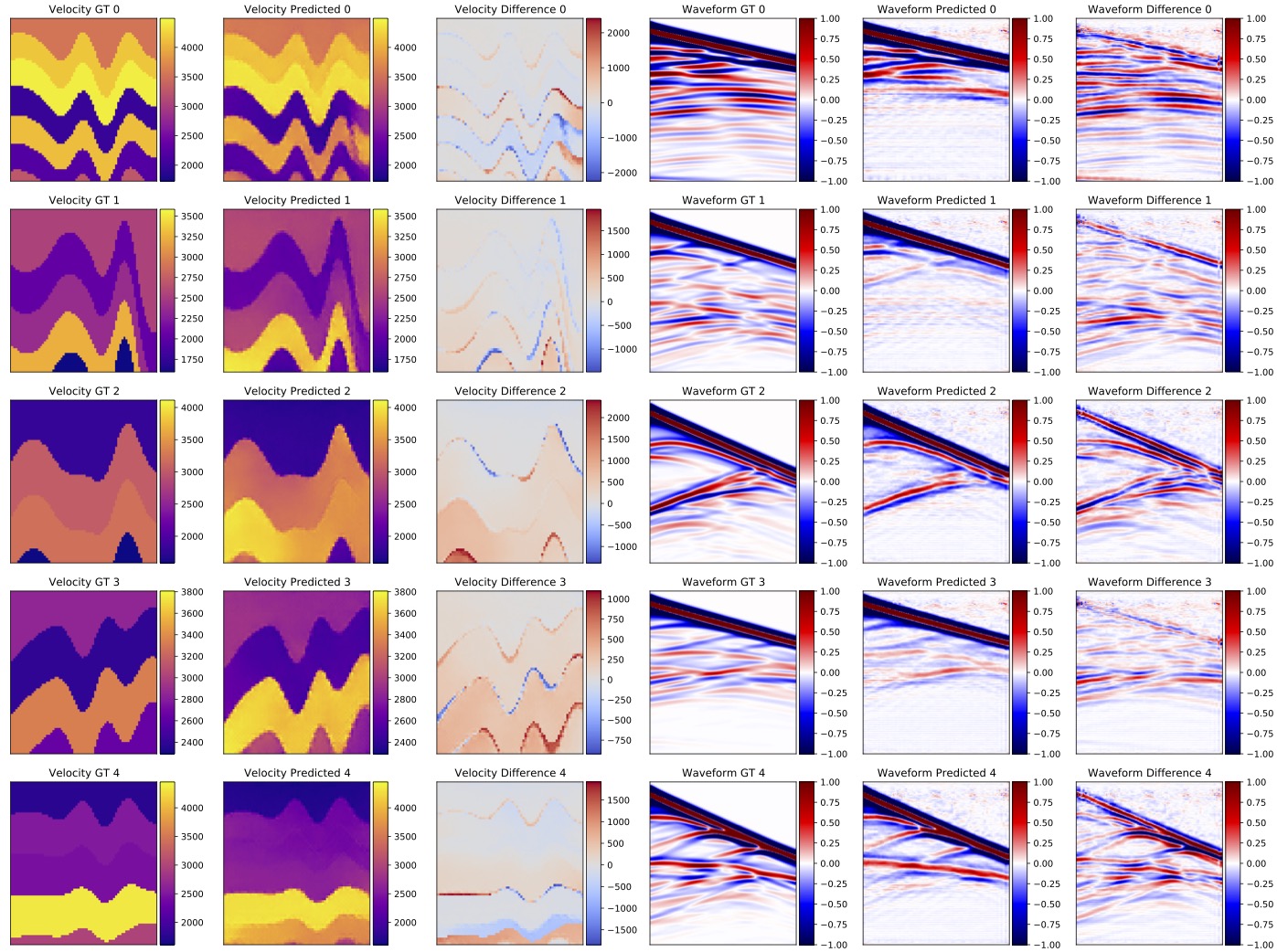}
         \caption{Epoch: 20}
    \end{subfigure}
    \caption{Training of Invertible X-Net model on the CVB dataset illustrating velocity and seismic waveform learning with epochs.}
    \label{fig:appendix_training_assymetry_cvb}
\end{figure*}

\begin{figure*}[ht]
    \centering
    \begin{subfigure}{\textwidth}
        \centering
         \includegraphics[width=0.6\textwidth]{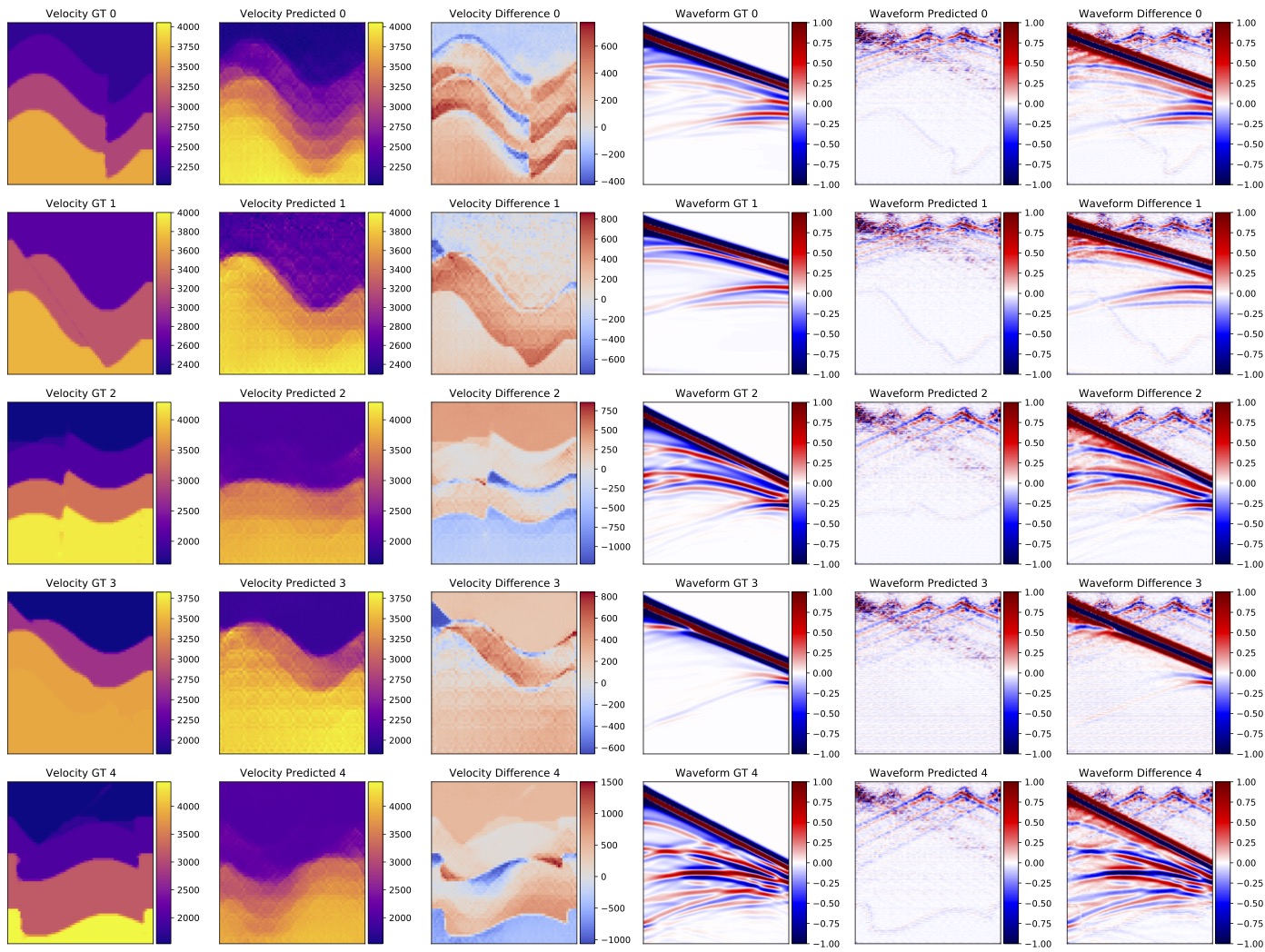}
         \caption{Epoch: 1}
    \end{subfigure}

    \begin{subfigure}{\textwidth}
        \centering
         \includegraphics[width=0.6\textwidth]{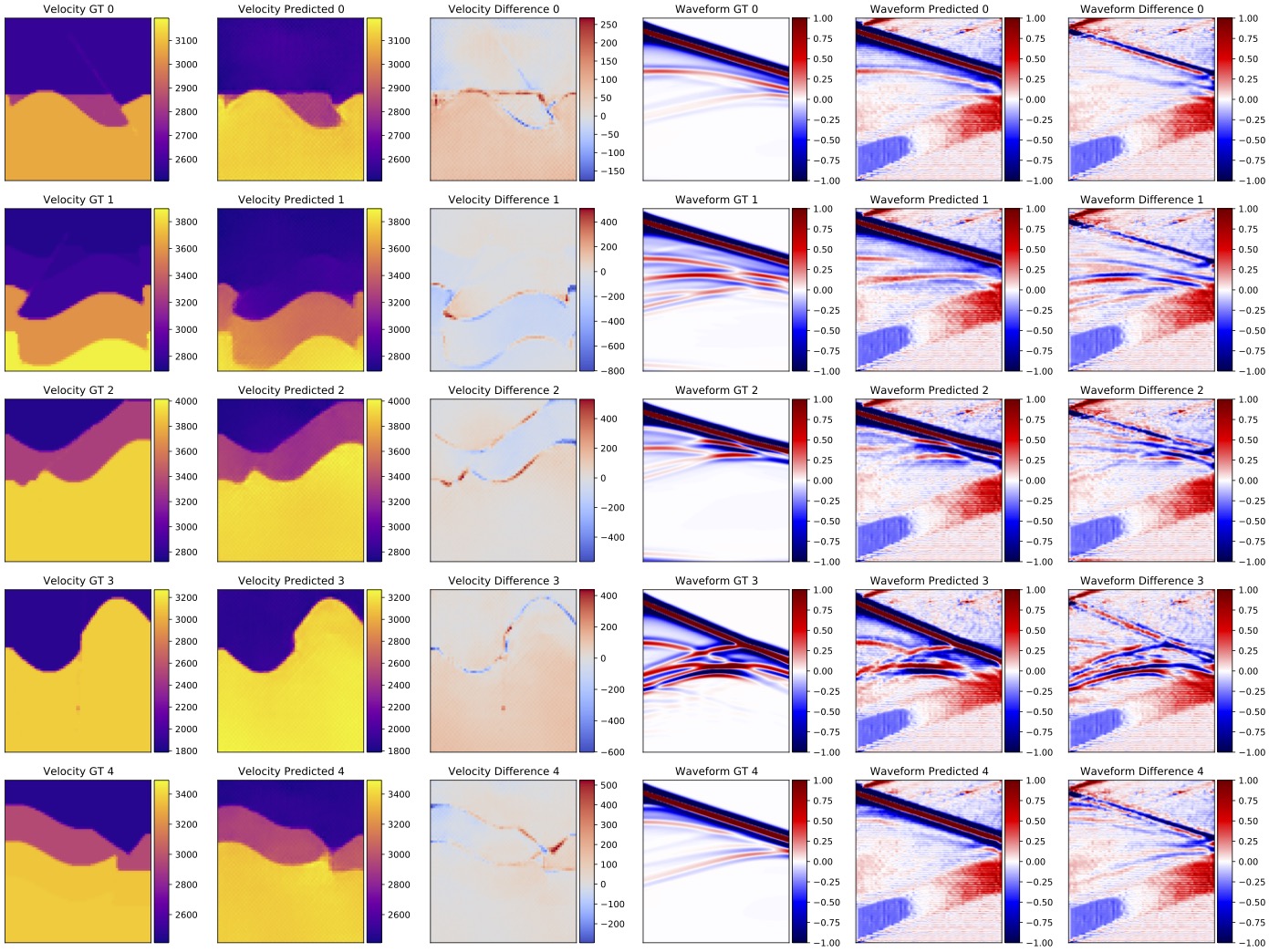}
         \caption{Epoch: 10}
    \end{subfigure}

    \begin{subfigure}{\textwidth}
        \centering
         \includegraphics[width=0.6\textwidth]{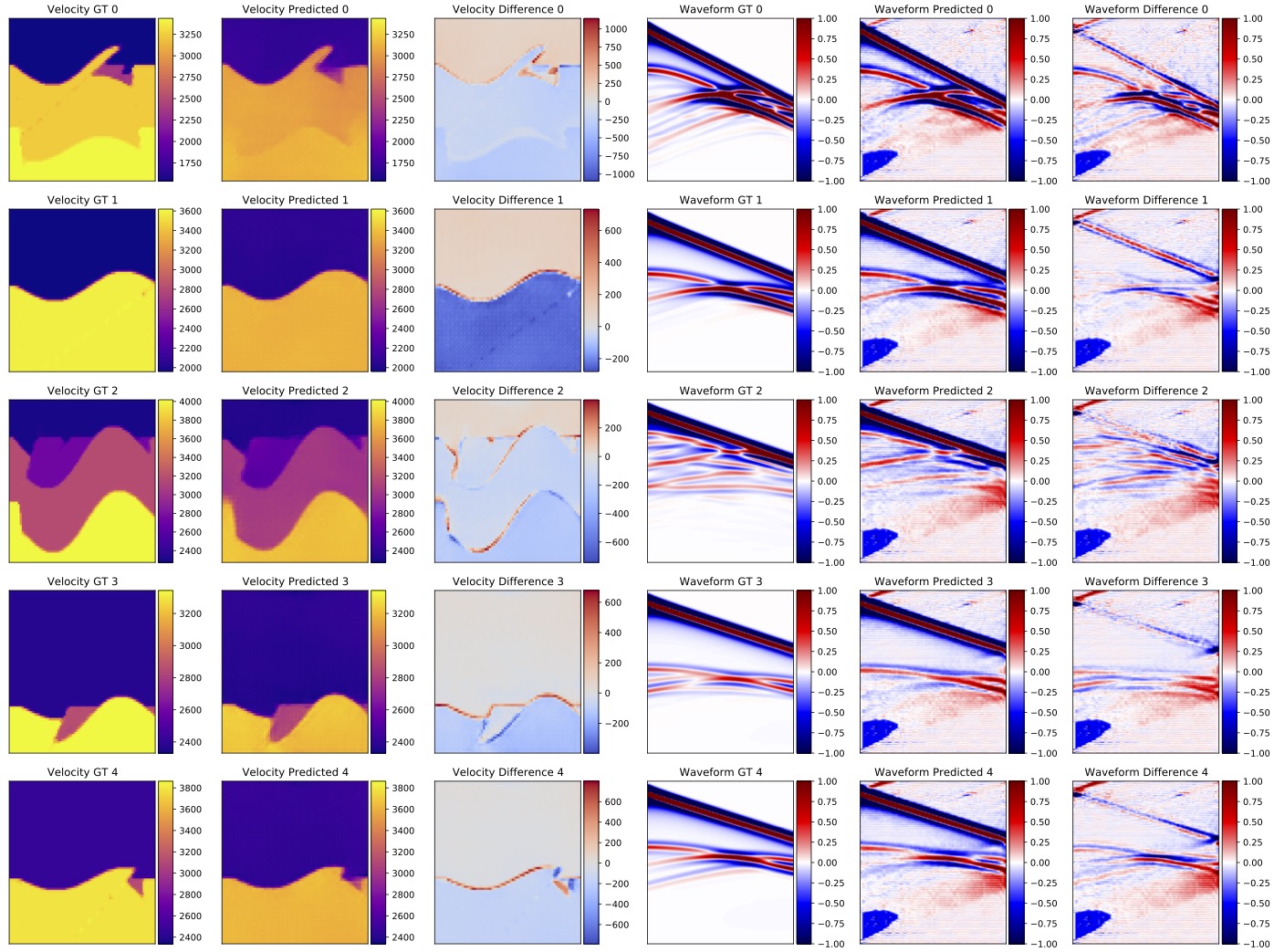}
         \caption{Epoch: 20}
    \end{subfigure}
    \caption{Training of Invertible X-Net model on the CFA dataset illustrating velocity and seismic waveform learning with epochs.}
    \label{fig:appendix_training_assymetry_cfa}
\end{figure*}

\subsection{Zero-shot Performance}
\label{sec:ood_zero_shot}
In this part, we provide detailed insights into zero shot performance of our models (Latent U-Net (Large) and Invertible X-Net) with other baselines across all datasets. This investigation helps us understand overall out-of-distribution generalization of a model and underscore the importance of learning the translation problem in the latent space. 

\subsection{Inverse Problem} 
\label{sec:ood_inverse_problem}
Figure \ref{fig:appendix_zero_shot_grids_invertible_xnet_inverse_problems} and \ref{fig:appendix_zero_shot_grids_latent_unet_inverse_problems} show generalization performance of Invertible X-Net and Latent U-Net (Large) models respectively on the inverse problem using MAE, MSE, and SSIM metrics. As described in the main paper, we show models trained across all dataset as rows and evaluated across all test datasets as columns. For comparison, we evaluate metrics such as MAE, MSE, and SSIM and calculate its difference between our models and other baseline models. For MAE and MSE, when the color intensity is blue, our model show better generalizability and vice-versa whereas, for SSIM, when the color intensity is red indicates better generalization of our model and vice-versa.

From Figure \ref{fig:appendix_zero_shot_grids_invertible_xnet_inverse_problems}, we observe that the Invertible X-Net shows superior generalization over the baseline models (AutoLinear, InversionNet, and VelocityGAN) across all metrics except on the FVB dataset. Figure \ref{fig:appendix_zero_shot_grids_invertible_xnet_inverse_problems} (d) compare Invertible X-Net with Latent U-Net (Large) model where we see that Invertible X-Net shows better generalization on complex datasets such as CFB, STA, and STB datasets, while Latent U-Net is better on relatively simpler datasets such as CVA, CFA, and more.
Similarly, Figure \ref{fig:appendix_zero_shot_grids_latent_unet_inverse_problems} shows the comparison of Latent U-Net (Large) model with other baseline models. As expected, we observe that our model is able to generalize much better than all the baseline models.

\subsection{Forward Problem}
\label{sec:ood_forward_problem}
Similar to the inverse problem, we analyze out-of-distribution generalization of our models against baseline models across all evaluation metrics.

Figures \ref{fig:appendix_zero_shot_grids_invertible_xnet_forward_problems} and \ref{fig:appendix_zero_shot_grids_latent_unet_forward_problems} compares the performance of our models Invertible X-Net and Latent U-Net (Large) against baselines. In Figure \ref{fig:appendix_zero_shot_grids_invertible_xnet_forward_problems}, we observe strong generalization of Invertible X-Net over all baselines - AutoLinear, FNO, and WaveformNet. Figure \ref{fig:appendix_zero_shot_grids_invertible_xnet_forward_problems} (d) shows the comparative performance of Invertible X-Net against Latent U-Net (Large) where we see that Invertible X-Net dominates overall across all metrics. In Figure \ref{fig:appendix_zero_shot_grids_latent_unet_forward_problems}, we compare the Latent U-Net (Large) model against same baselines as above. The figure indicates Latent U-Net (Large) model has much stronger generalizability than baseline models consistently.

\begin{figure*}
    \centering
    \begin{subfigure}{\textwidth}
        \centering
         \includegraphics[width=\textwidth]{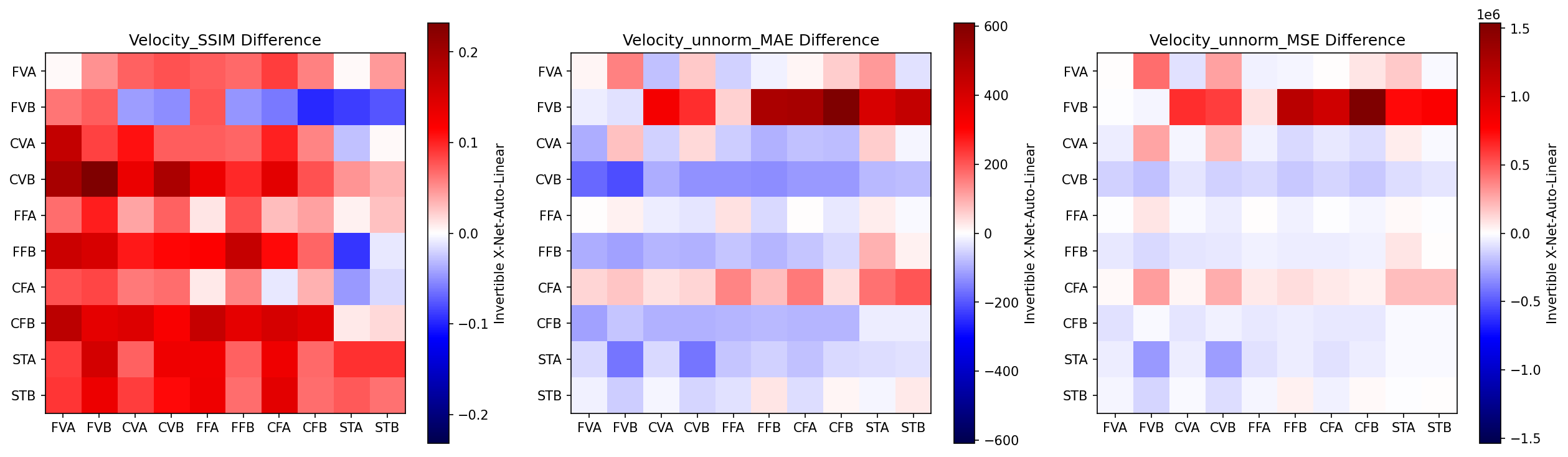}
        \caption{Invertible X-Net - AutoLinear}
    \end{subfigure}
    
    \begin{subfigure}{\textwidth}
        \centering
         \includegraphics[width=1.0\textwidth]{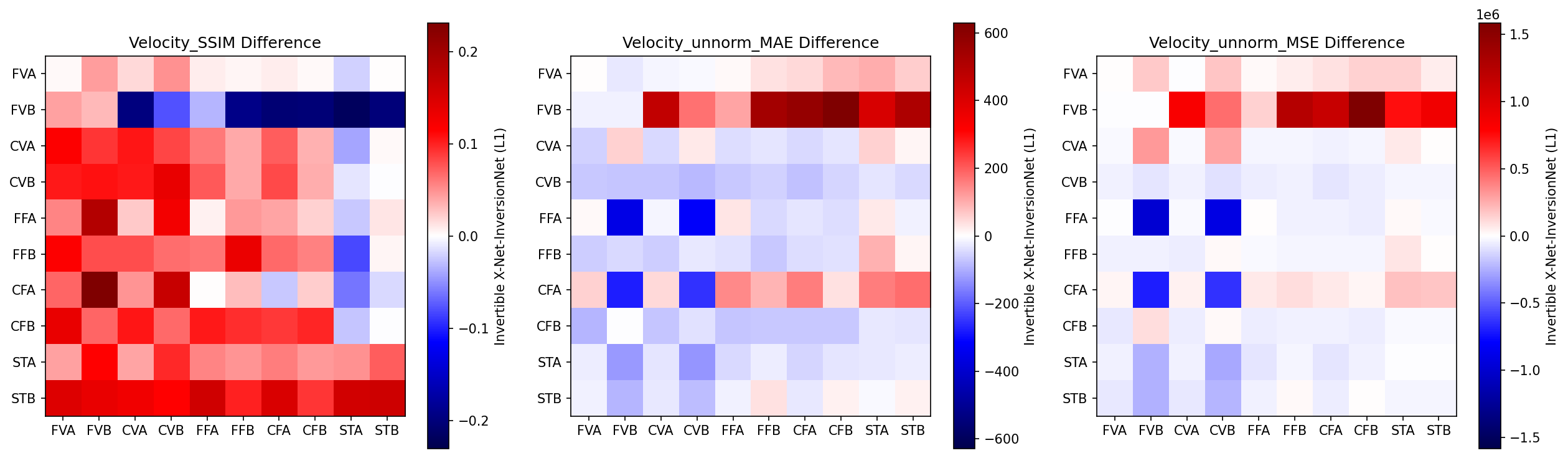}
         \caption{Invertible X-Net - InversionNet}
    \end{subfigure}

    \begin{subfigure}{\textwidth}
        \centering
        \includegraphics[width=1.0\textwidth]{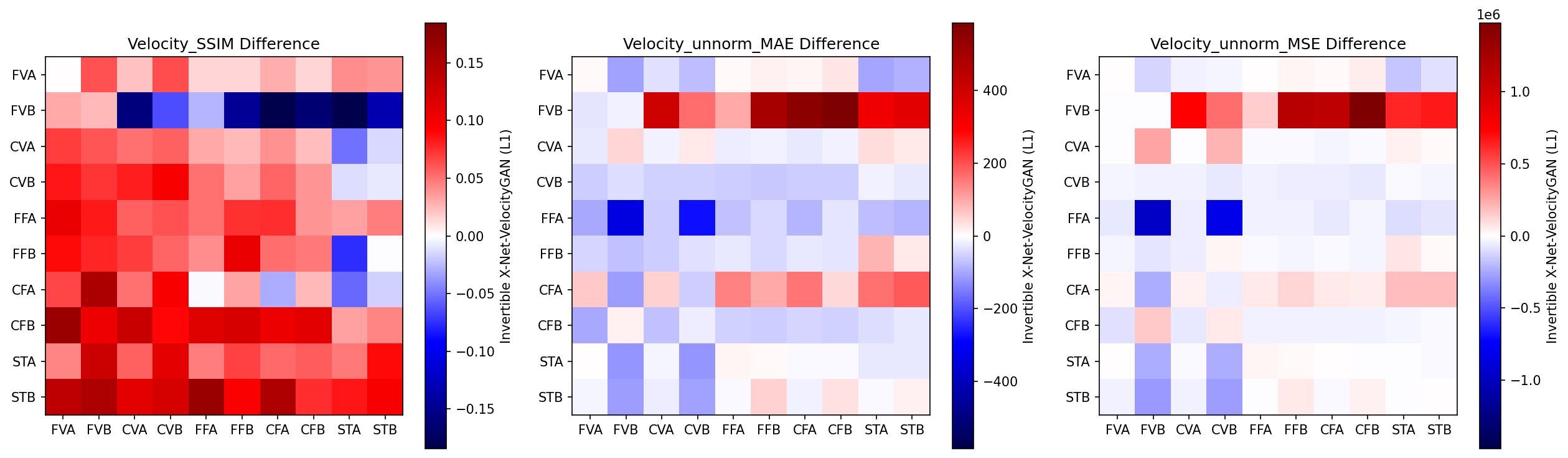}
         \caption{Invertible X-Net - VelocityGAN}
    \end{subfigure}

     \begin{subfigure}{\textwidth}
        \centering
        \includegraphics[width=1.0\textwidth]{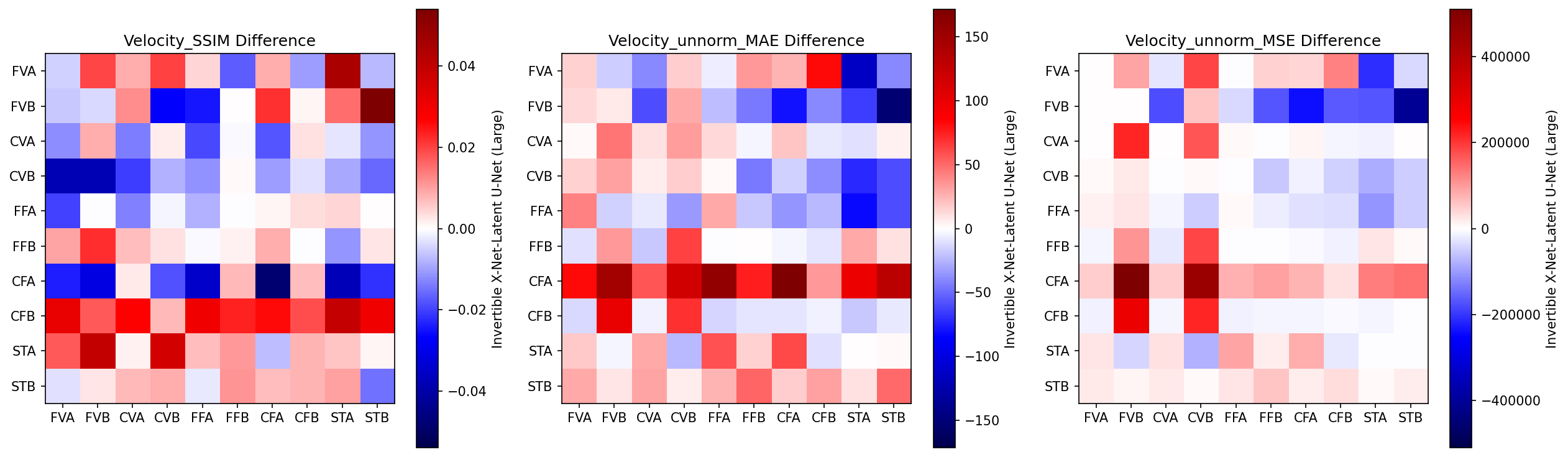}
         \caption{Invertible X-Net - Latent U-Net (Large)}
    \end{subfigure}
    \caption{Out-of-distribution zero shot generalizations for the inverse problem of Invertible X-Net with AutoLinear, InversionNet, VelocityGAN, Latent U-Net (Large).}
    \label{fig:appendix_zero_shot_grids_invertible_xnet_inverse_problems}
\end{figure*}

\begin{figure*}
    \centering
    \begin{subfigure}{\textwidth}
        \centering
         \includegraphics[width=\textwidth]{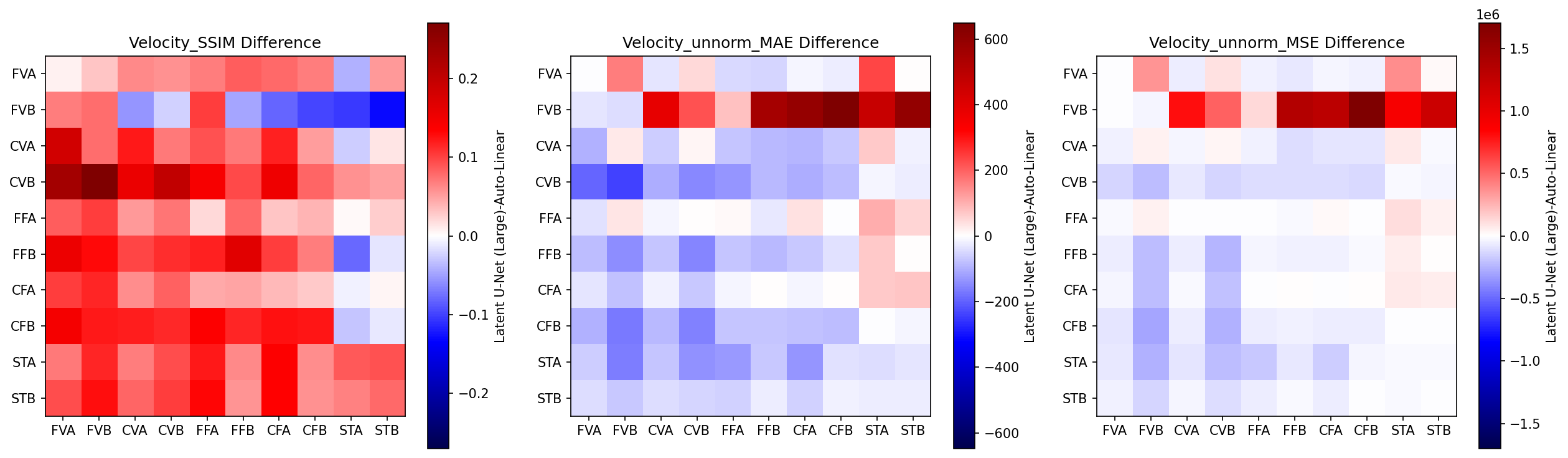}
        \caption{Latent U-Net (Large) - AutoLinear}
    \end{subfigure}

    \begin{subfigure}{\textwidth}
        \centering
         \includegraphics[width=\textwidth]{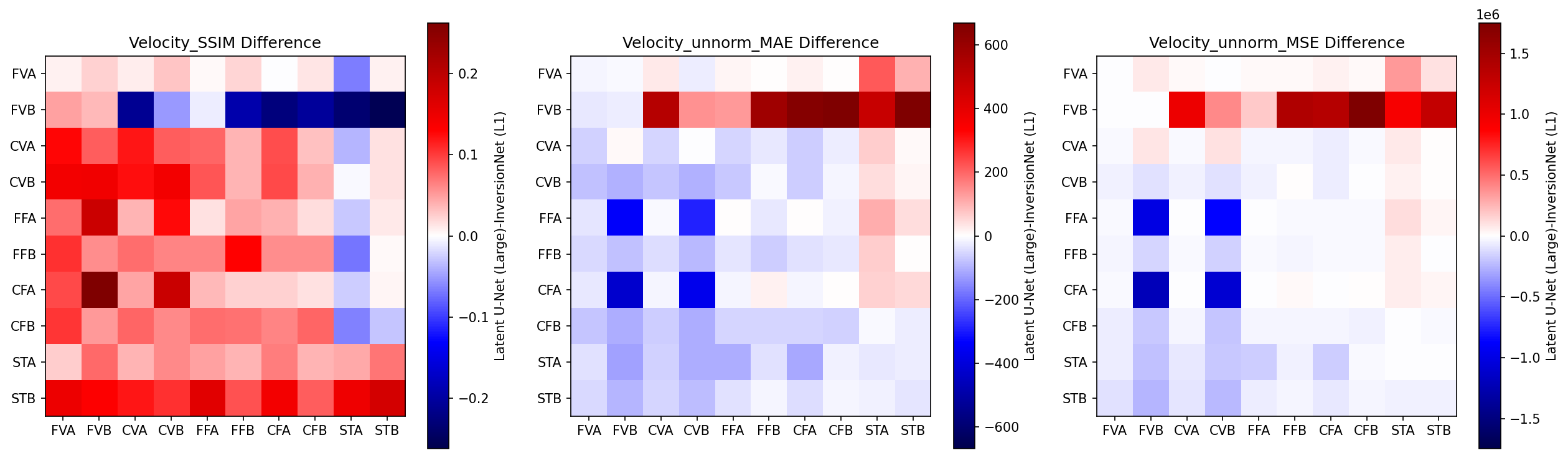}
        \caption{Latent U-Net (Large) - InversionNet}
    \end{subfigure}

    \begin{subfigure}{\textwidth}
        \centering
         \includegraphics[width=\textwidth]{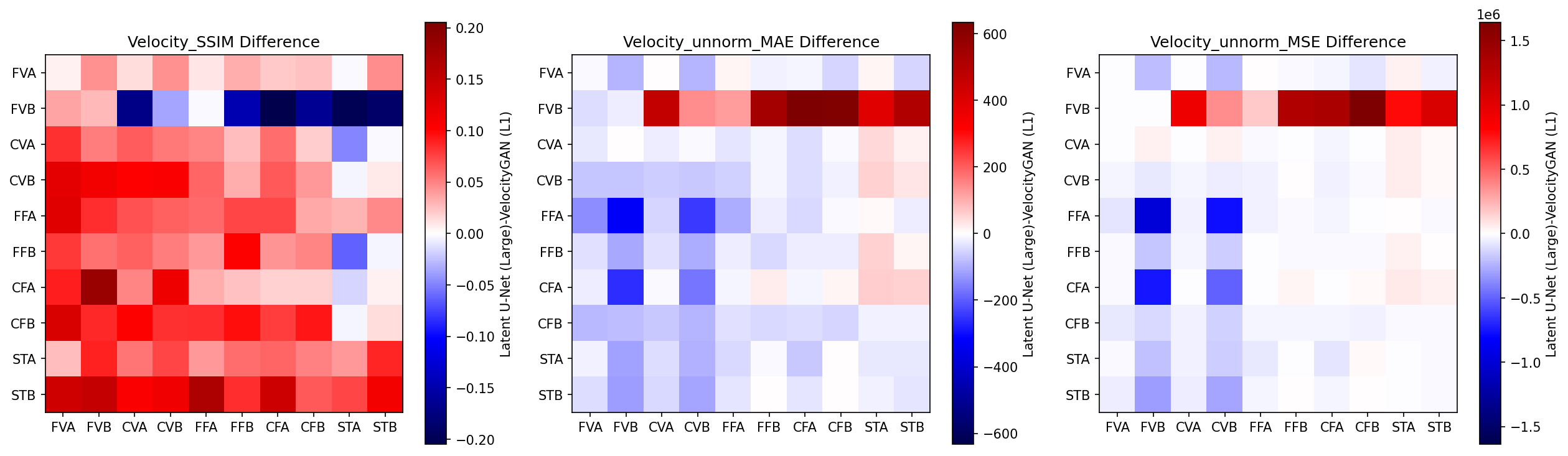}
        \caption{Latent U-Net (Large) - VelocityGAN}
    \end{subfigure}
    \caption{Out-of-distribution zero shot generalizations for the inverse problem of Latent U-Net (Large) with AutoLinear, InversionNet, VelocityGAN.}
    \label{fig:appendix_zero_shot_grids_latent_unet_inverse_problems}
\end{figure*}

\begin{figure*}
    \centering
    \begin{subfigure}{\textwidth}
        \centering
         \includegraphics[width=\textwidth]{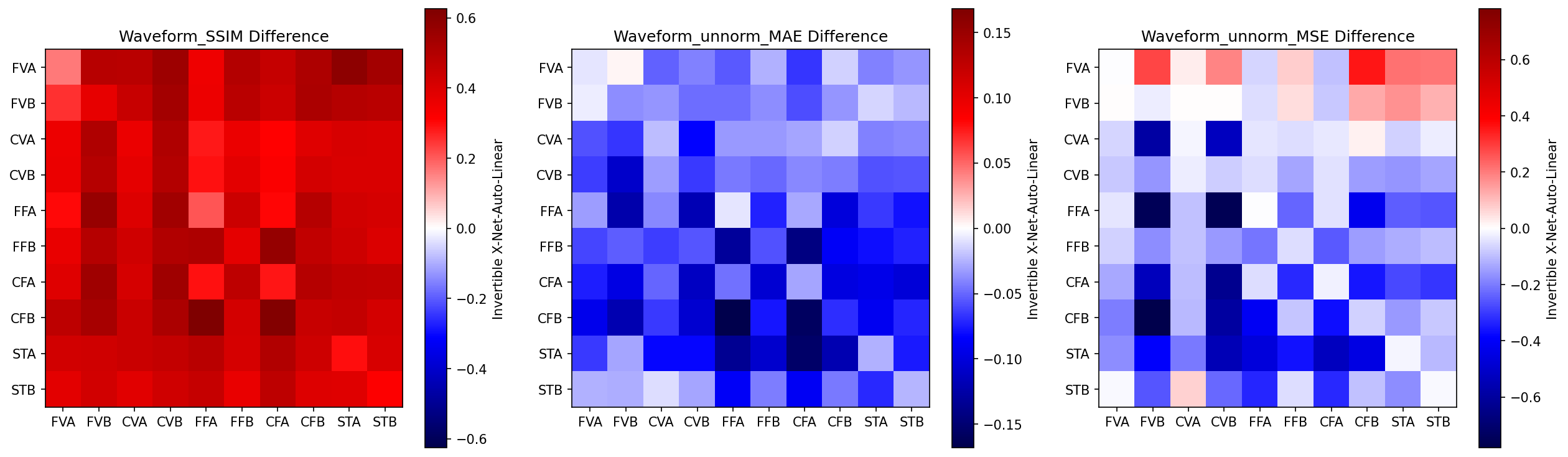}
        \caption{Invertible X-Net - AutoLinear}
    \end{subfigure}
    \begin{subfigure}{\textwidth}
        \centering
         \includegraphics[width=\textwidth]{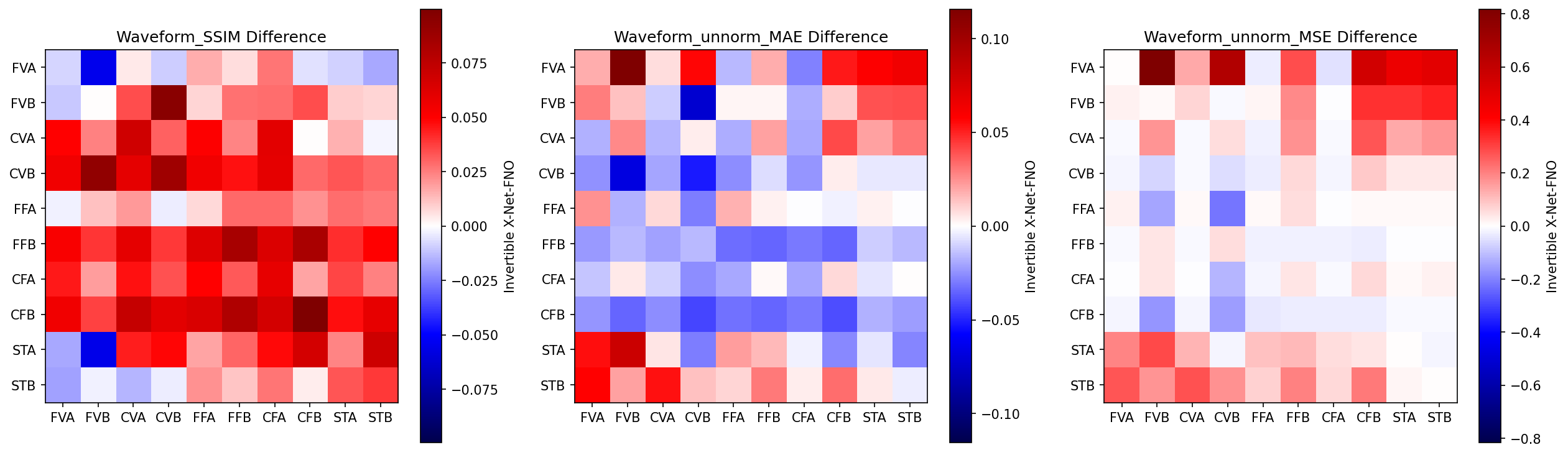}
         \caption{Invertible X-Net - FNO}
    \end{subfigure}
    \begin{subfigure}{\textwidth}
        \centering
         \includegraphics[width=\textwidth]{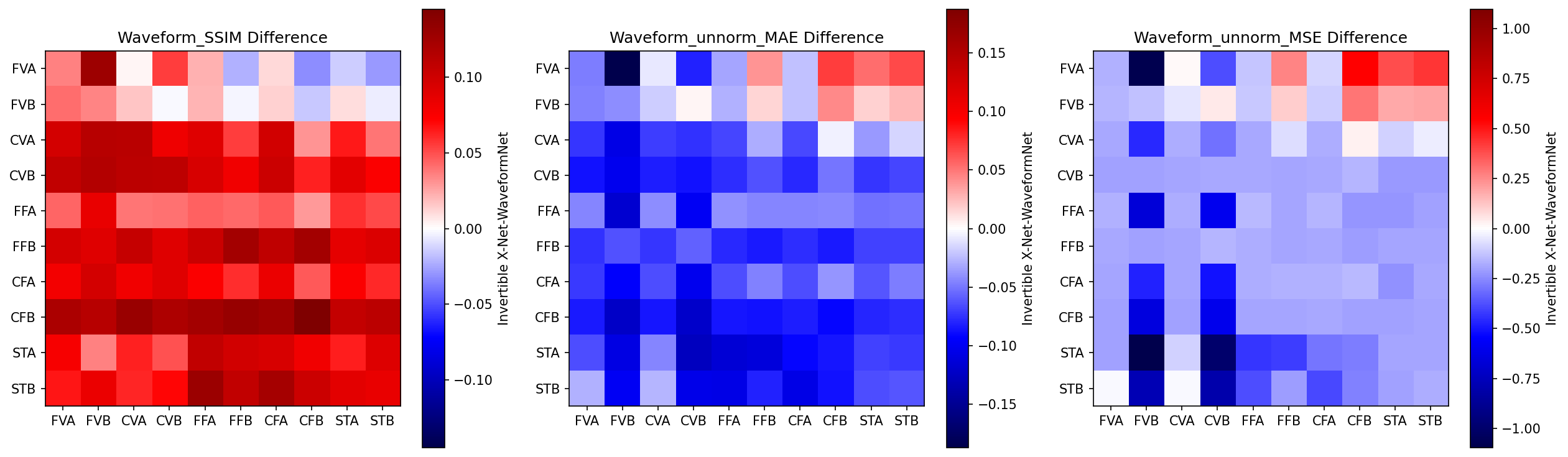}
         \caption{Invertible X-Net - WaveformNet}
    \end{subfigure}
    \begin{subfigure}{\textwidth}
        \centering
         \includegraphics[width=\textwidth]{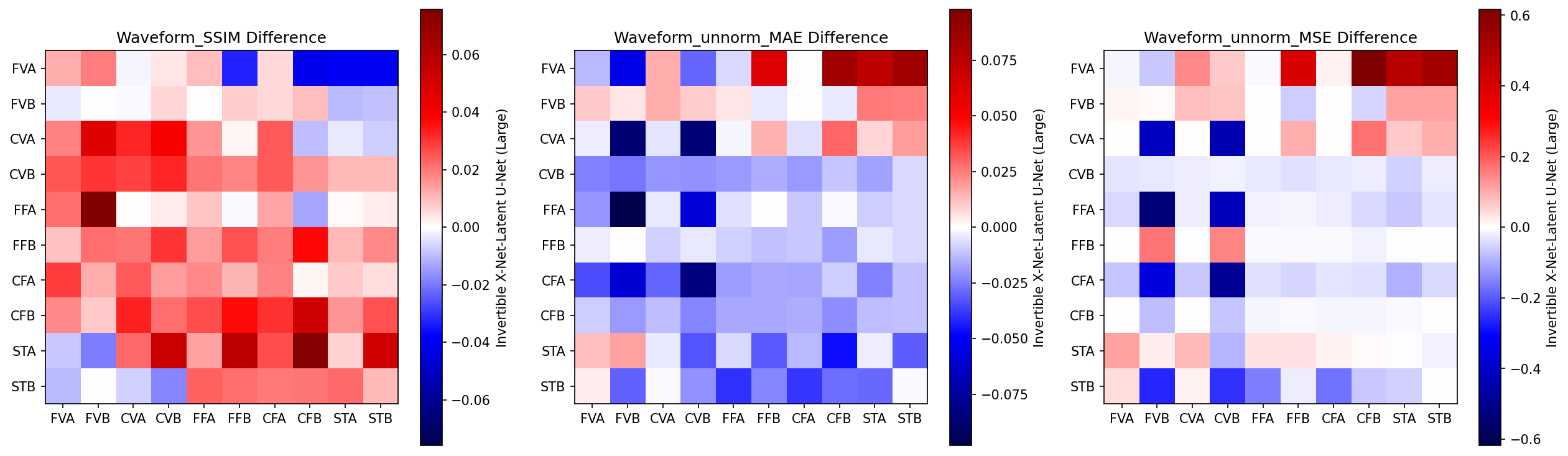}
         \caption{Invertible X-Net - Latent-UNet (Large)}
    \end{subfigure}
    \caption{Out-of-distribution zero shot generalizations for the forward problem of Invertible X-Net with AutoLinear, FNO, WaveformNet (U-Net like model), and Latent U-Net (Large).}
    \label{fig:appendix_zero_shot_grids_invertible_xnet_forward_problems}
\end{figure*}

\begin{figure*}

    \centering
    \begin{subfigure}{\textwidth}
        \centering
         \includegraphics[width=\textwidth]{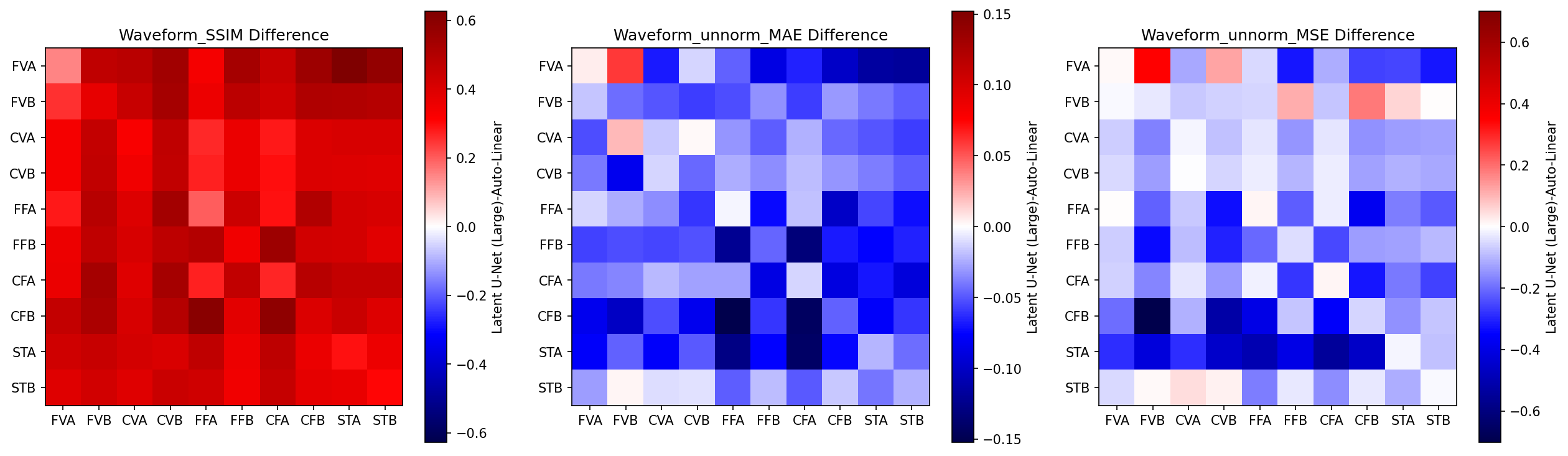}
        \caption{Latnet U-Net (Large) - AutoLinear}
    \end{subfigure}

    \begin{subfigure}{\textwidth}
        \centering
         \includegraphics[width=\textwidth]{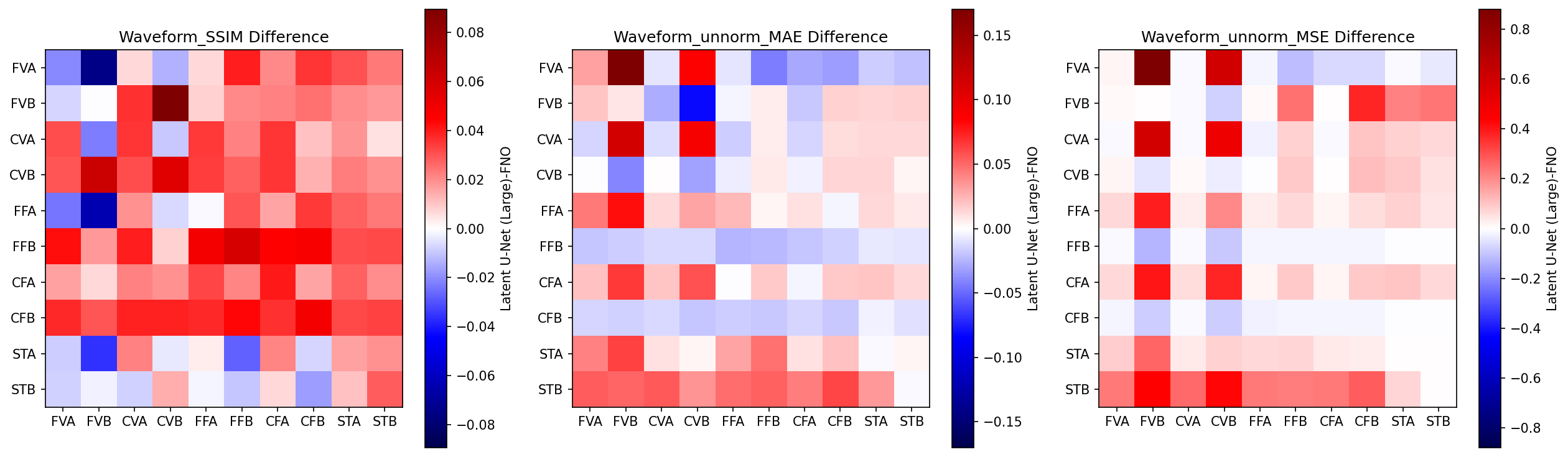}
        \caption{Latnet U-Net (Large) - FNO}
    \end{subfigure}
    \begin{subfigure}{\textwidth}
        \centering
         \includegraphics[width=\textwidth]{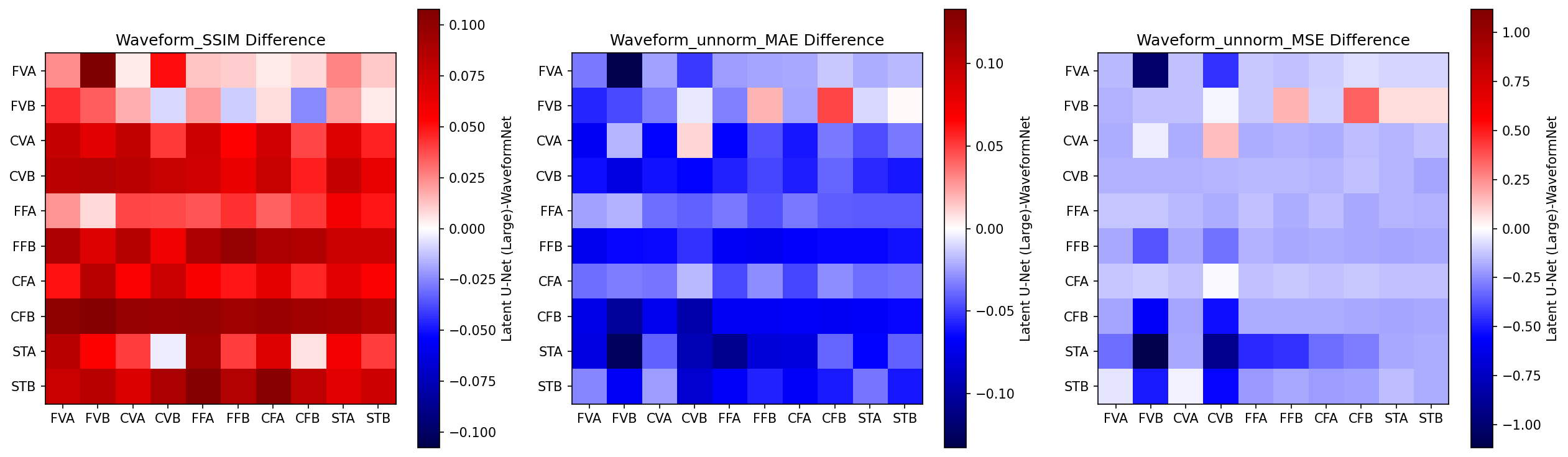}
        \caption{Latnet U-Net (Large) - WaveformNet}
    \end{subfigure}
    \caption{Out-of-distribution zero shot generalizations for the forward problem of Latent U-Net with AutoLinear, FNO, and WaveformNet (U-Net like model).}
    \label{fig:appendix_zero_shot_grids_latent_unet_forward_problems}
\end{figure*}

\section{Ablation Studies}
\label{sec:ablation_studies}
\subsection{Effect of varying Latent Space Sizes}
\label{sec:effect_of_latent_dim}
Figure \ref{fig:latent_dim_effect_large_unet} (a) shows how the performance of Latent U-Net model is affected with change in the latent space size. As the latent space size is reduced, the MAE and MSE metrics for mapping seismic waveform to velocity is increasing, while the SSIM is decreasing. The impact is more pronounced on complex datasets such as B group of datasets. These datasets represent geologically complex datasets and therefore may require larger latent space to encode geological heterogeneity. In Figure \ref{fig:latent_dim_effect_unet_compare}, we compare the large Latent U-Net with the small Latent U-Net model, while latent space size is also reduced. We observe that the performance gap between the two models reduces as the latent space size is also reduced. This underscore the importance of latent space size for encoding the geological features for an effective translation.

Further, we provide visualizations of the first two primary PCA and t-SNE components of the velocity latent space in Figure \ref{fig:latent_space_pca_viz} and \ref{fig:latent_space_tsne_viz} respectively. We take the encoder trained on a dataset and get the latent space encoding on several datasets. This visualization shows the in-distribution and out-of-distribution generality of our encoders and highlights the importance of the manifold for latent space translation. Overall, we observe that the larger latent space $70 \times 70$ have better structure than $8 \times 8$, indicating that the ideal size of latent space should be decided based on the complexity of dataset and problem being solved.

\begin{figure*}
    \centering
    \begin{subfigure}{\textwidth}
        \centering
         \includegraphics[width=0.28\textwidth]{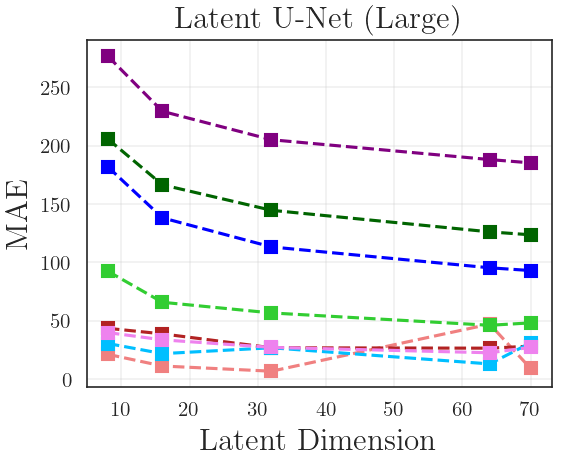}
     \includegraphics[width=0.28\textwidth]{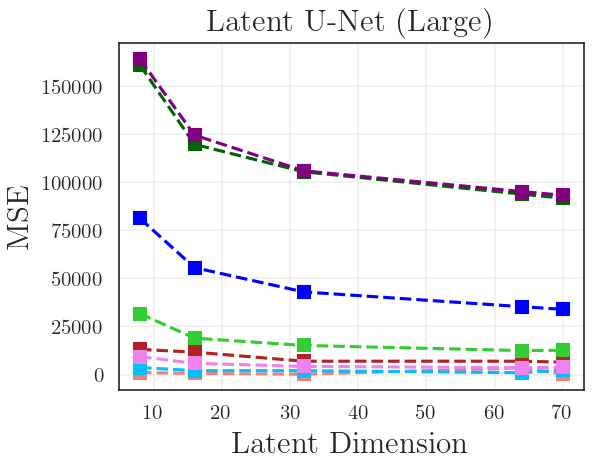}
     \includegraphics[width=0.38\textwidth]{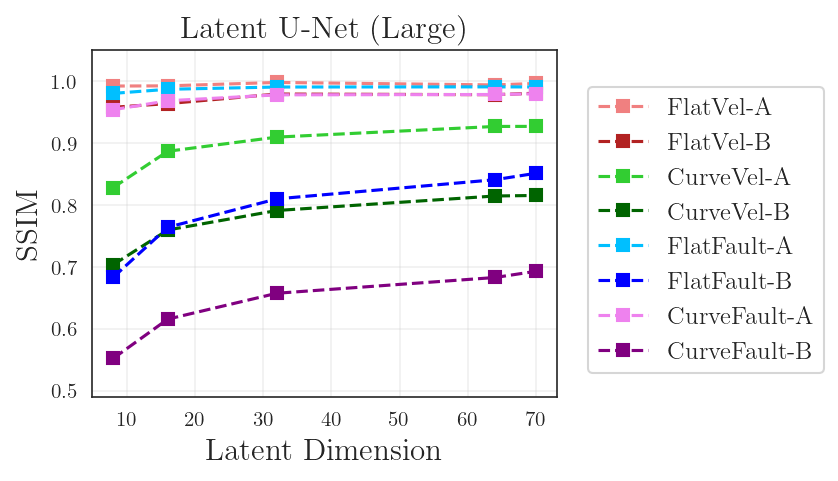}
    \caption{Effect of Latent dimension on U-Net Large}
     \label{fig:latent_dim_effect_large_unet}
    \end{subfigure}

    \vspace{1em}
    
    \begin{subfigure}{\textwidth}
        \centering
         \includegraphics[width=0.28\textwidth]{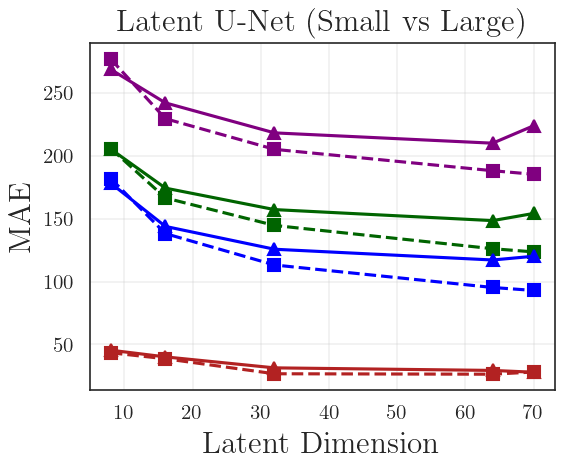}
     \includegraphics[width=0.28\textwidth]{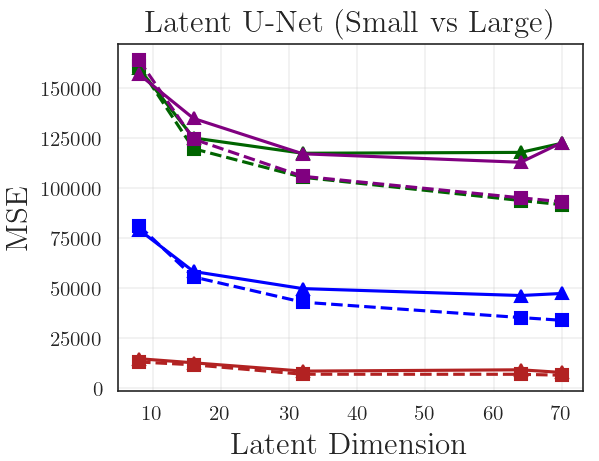}
     \includegraphics[width=0.38\textwidth]{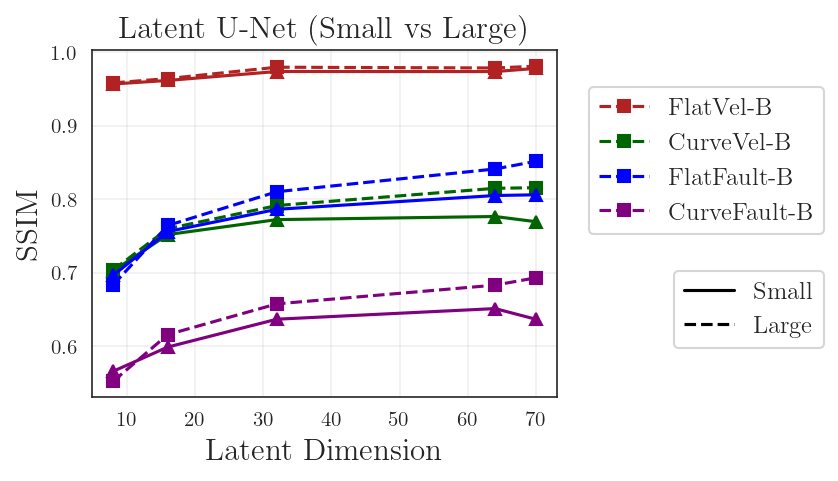}
     \caption{Effect of Latent dimension on U-Net Small in comparison to U-Net Small}
        \label{fig:latent_dim_effect_unet_compare}
    \end{subfigure}

     \caption{Effect of the size of latent sizes on the performance of large and small Latent U-Net (small and large) on the OpenFWI datasets.}
      \label{fig:latent_dim_effect_unet}
\end{figure*}

\begin{figure*}
    \centering
    \begin{subfigure}{1.0\textwidth}
        \centering
        \includegraphics[width=0.32\textwidth]{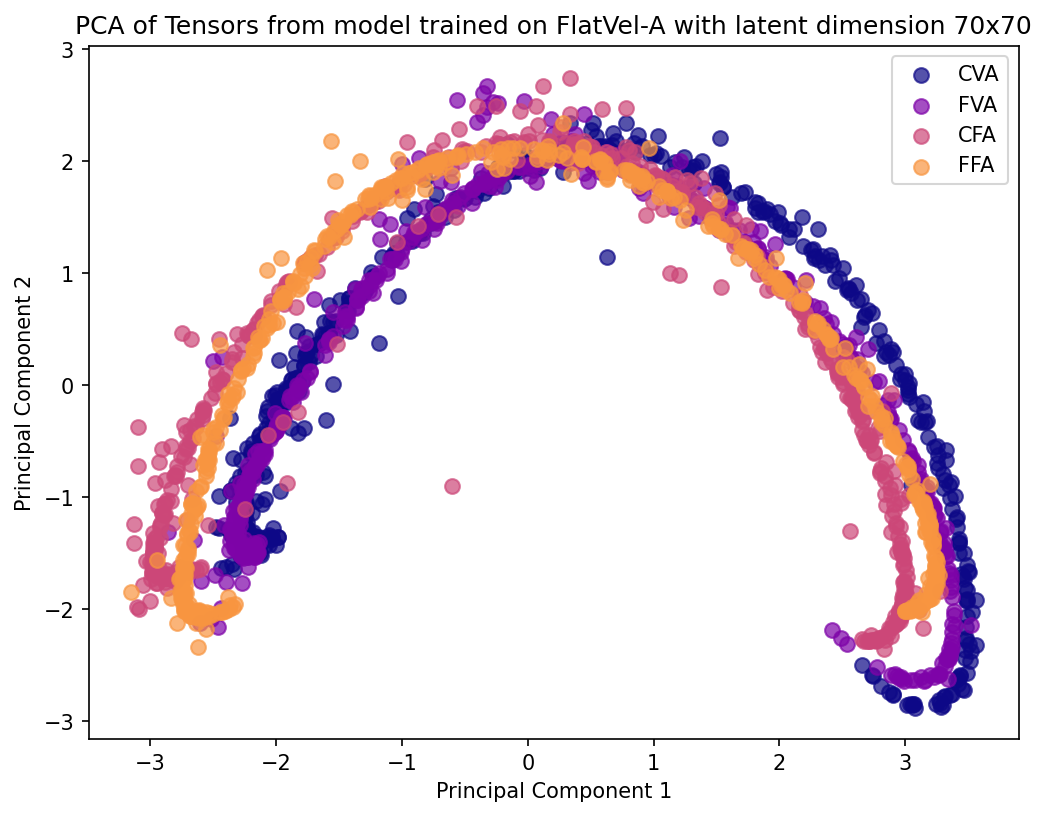}
         \includegraphics[width=0.32\textwidth]{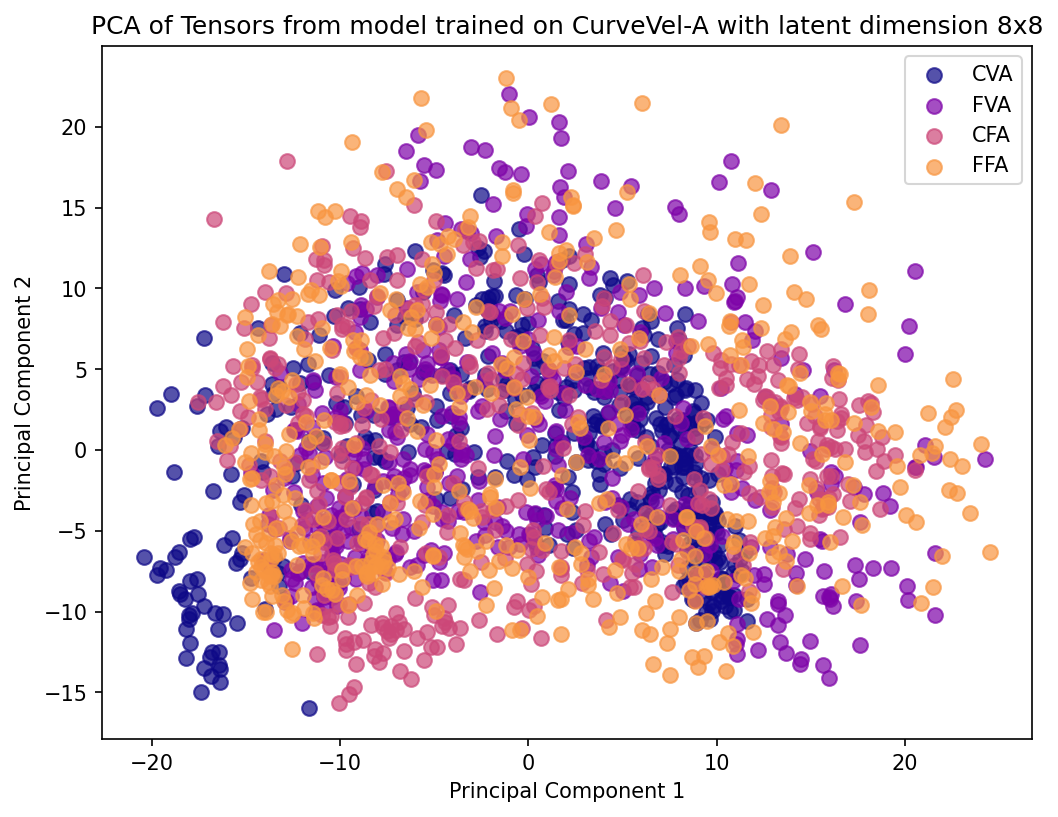}
  \includegraphics[width=0.32\textwidth]{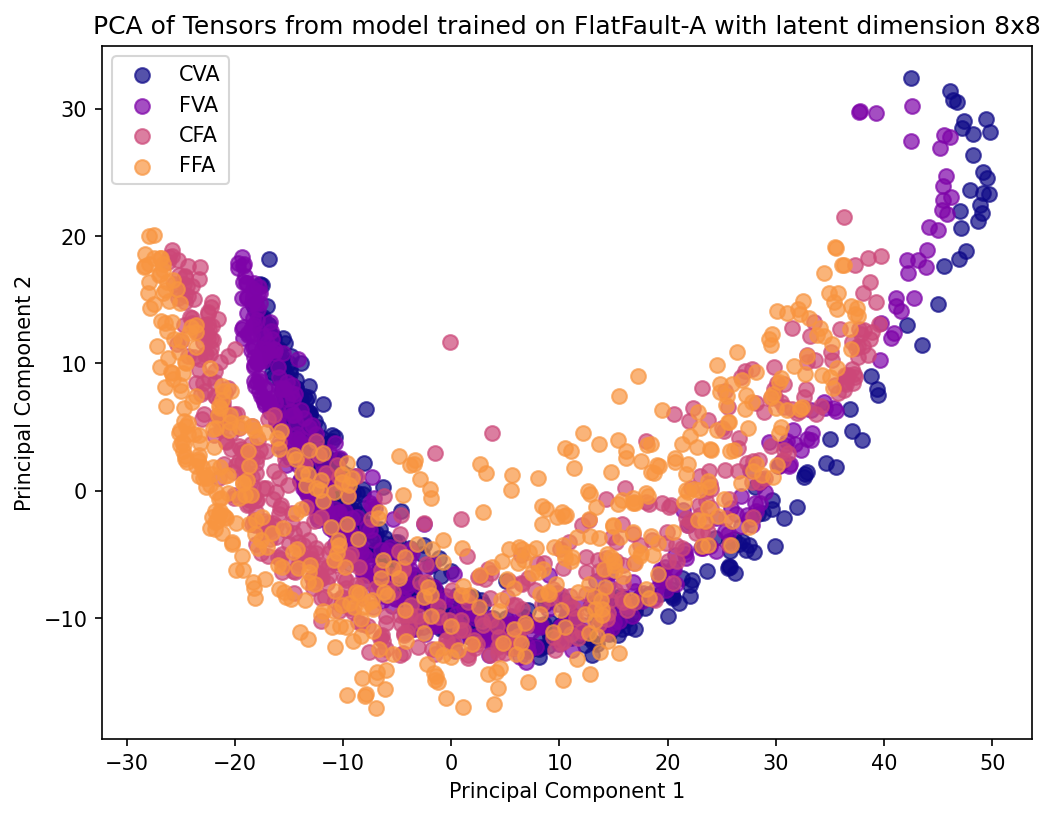}
    \end{subfigure}

    \vspace{1em}
    
    \begin{subfigure}{1.0\textwidth}
        \centering
        \includegraphics[width=0.32\textwidth]{GFI_FWI_ICLR_2025/figures/appendix/latent_space_viz/FlatVel-A_pca_70.png}
          \includegraphics[width=0.32\textwidth]{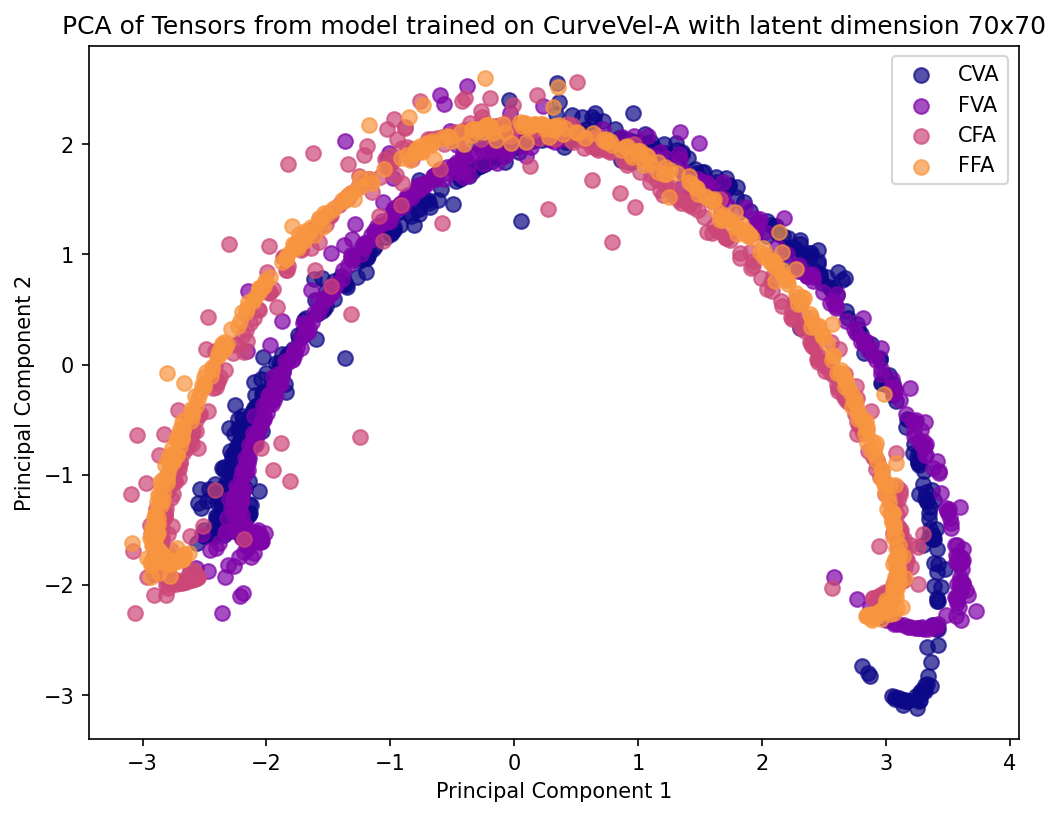}
        \includegraphics[width=0.32\textwidth]{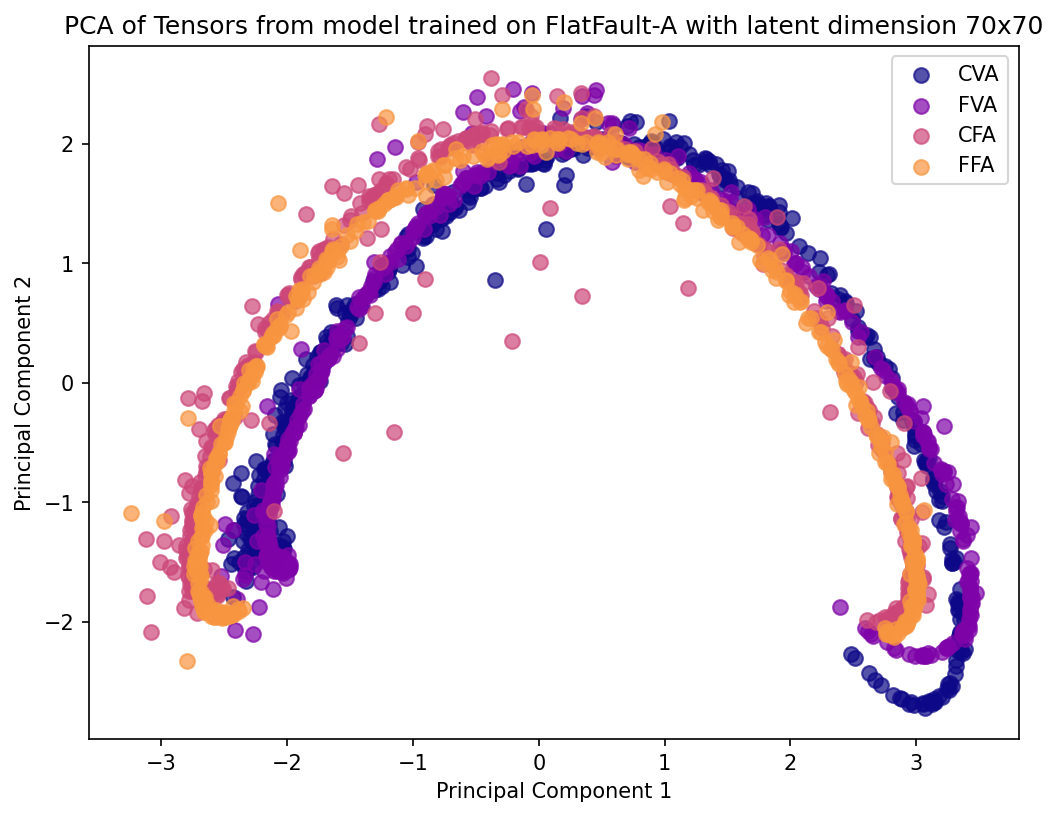}
    \end{subfigure}
     \caption{Visualizing the PCA projection of Velocity latent space for $8 \times 8$ and $70 \times 70$.}
      \label{fig:latent_space_pca_viz}
\end{figure*}

\begin{figure*}
    \centering
    \begin{subfigure}{1.0\textwidth}
        \centering
        \includegraphics[width=0.32\textwidth]{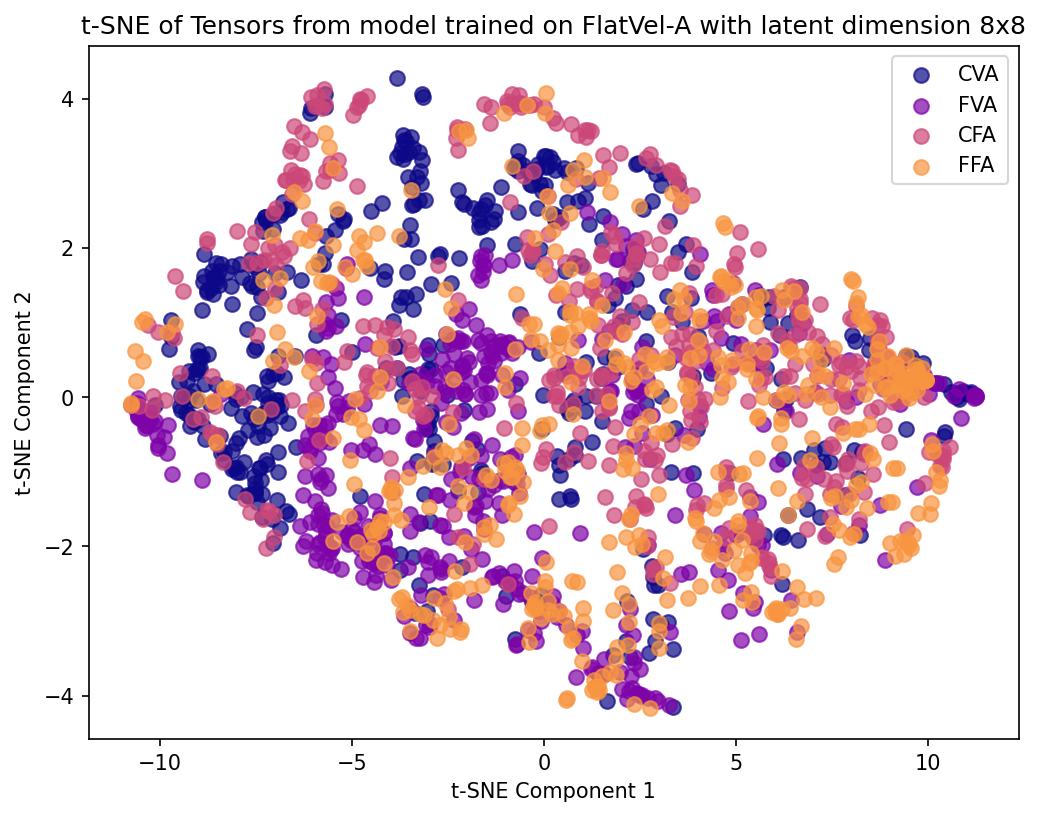}
        \includegraphics[width=0.32\textwidth]{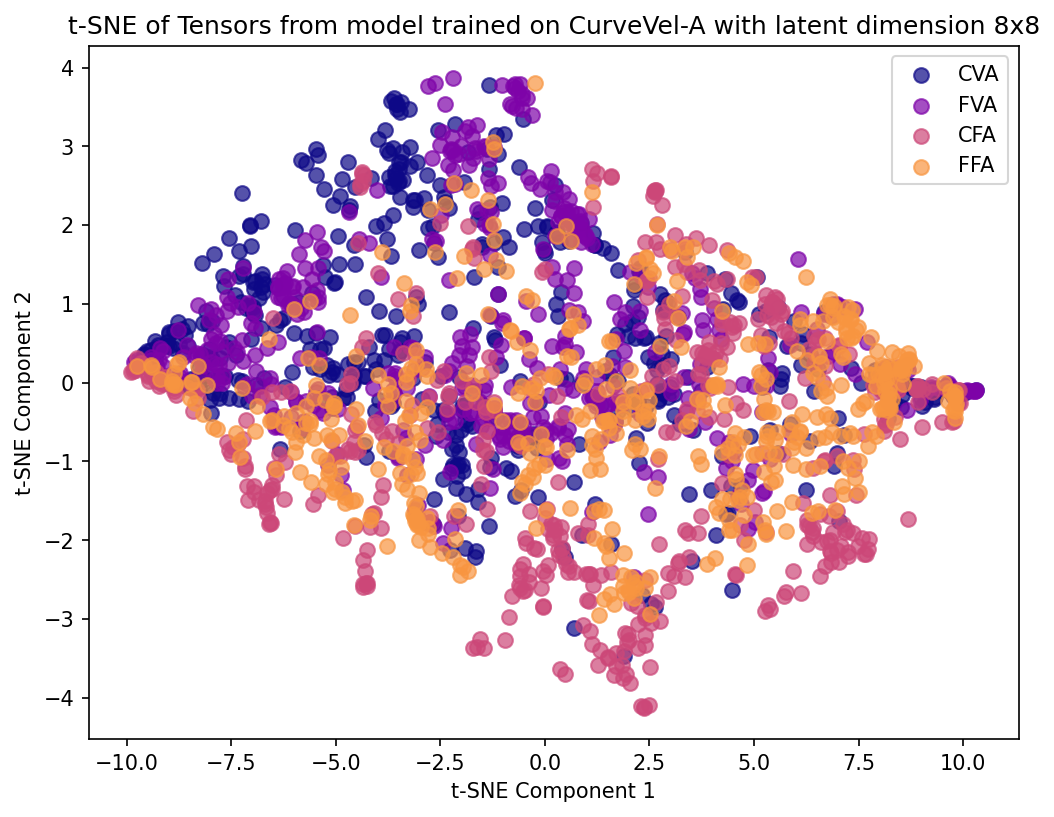}
        \includegraphics[width=0.32\textwidth]{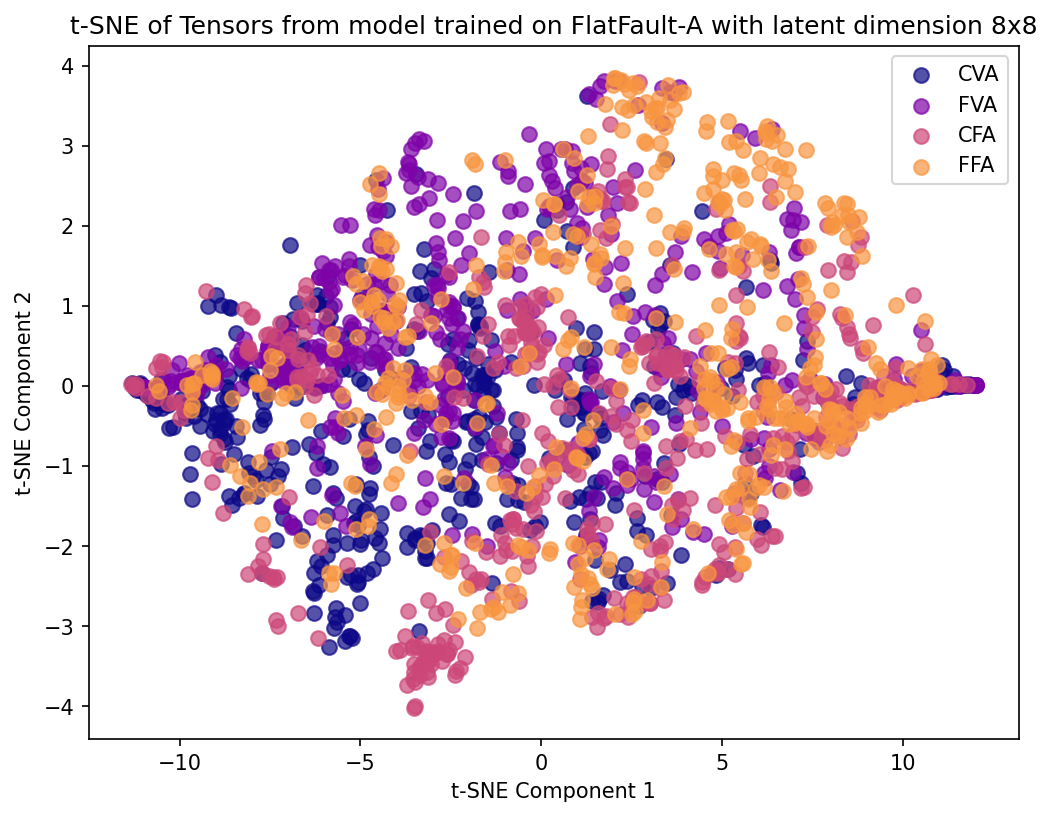}
    \end{subfigure}
    
    \begin{subfigure}{1.0\textwidth}
        \centering
         \includegraphics[width=0.32\textwidth]{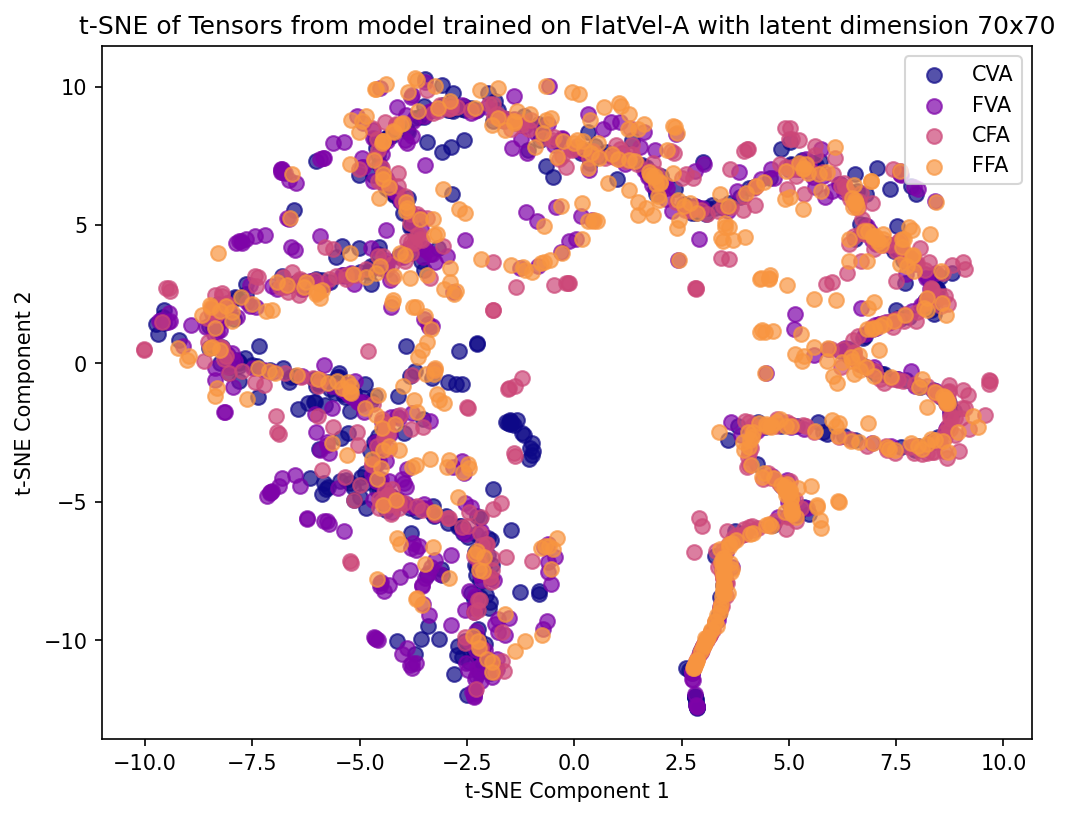}
          \includegraphics[width=0.32\textwidth]{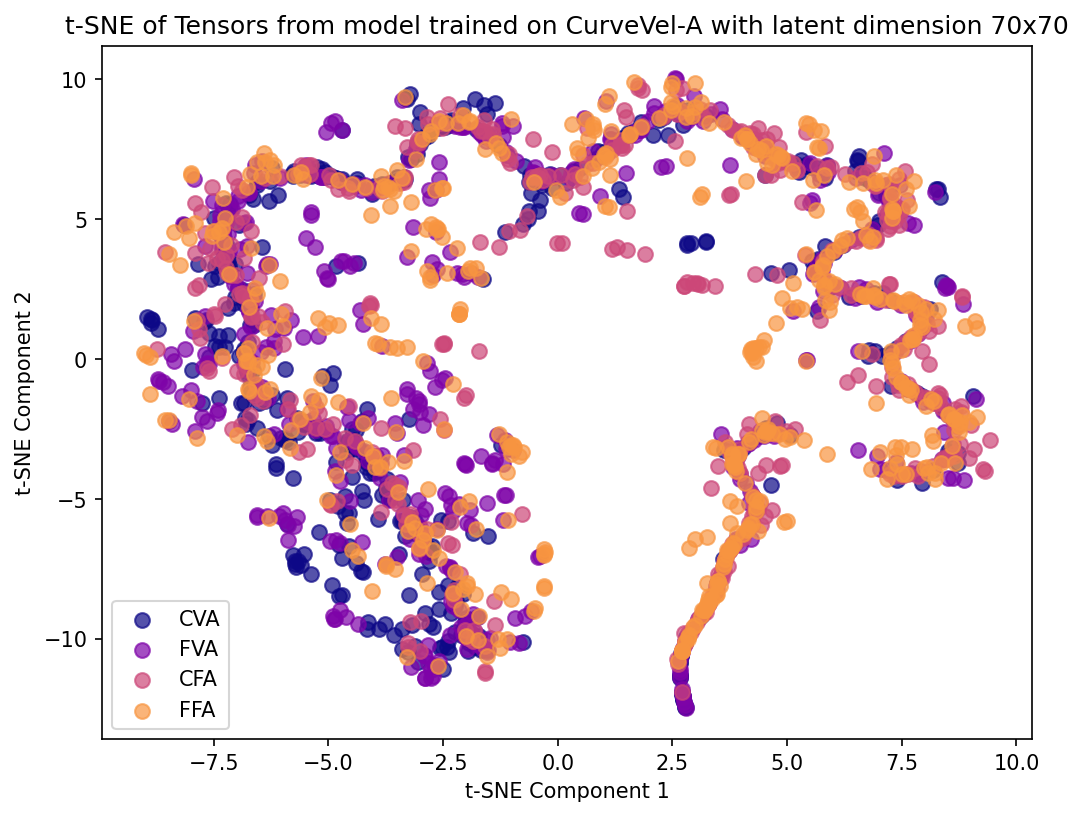}
         \includegraphics[width=0.32\textwidth]{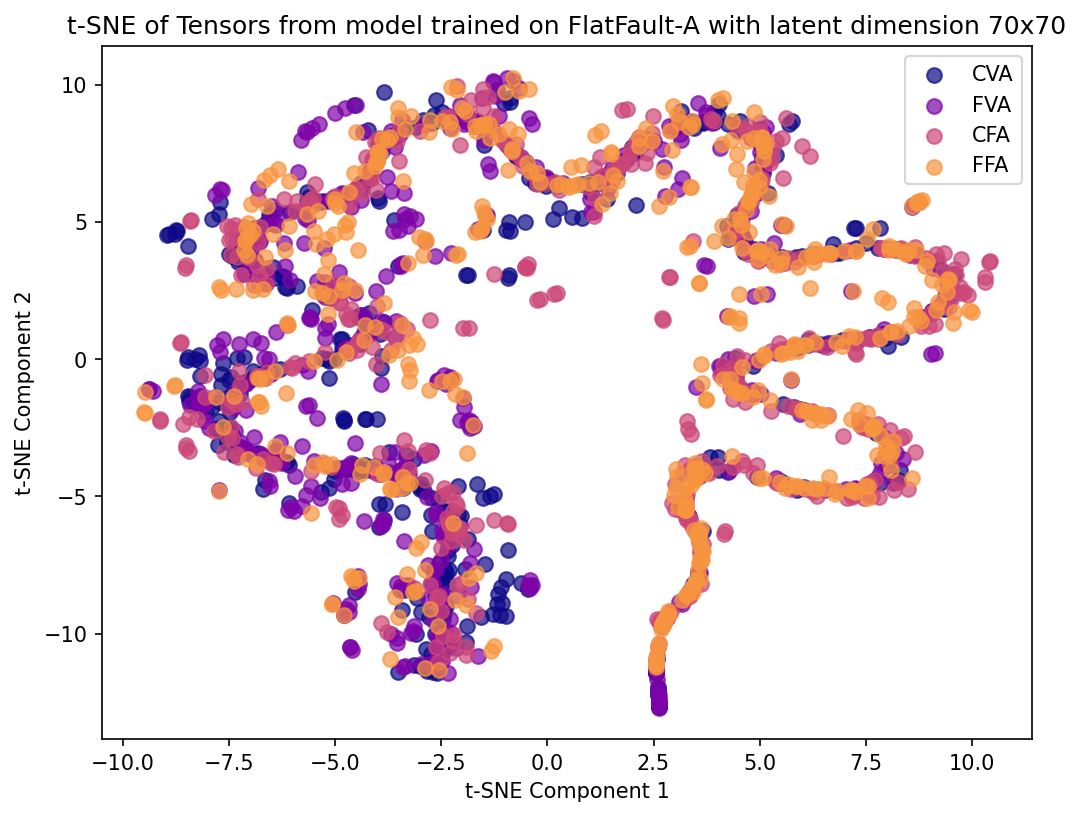}

    \end{subfigure}

     \caption{Comparing the t-SNE plot of latent space for velocity encoder.}
      \label{fig:latent_space_tsne_viz}
\end{figure*}

\subsection{Effect of Skip Connections for Latent U-Net}
\label{sec:effect_skip_connections}
In this section, we study the impact of skip connections of Latent U-Net model for the inverse problem. Figure \ref{fig:latent_dim_skip_effect_unet} shows how the MAE and MSE are increasing and SSIM is decreasing when the latent space size is decreasing from $70 \times 70$ to $8 \times 8$. We observe that the impact of skip connections is more pronounced at smaller latent space size as opposed to larger space. 


\begin{figure*}
    \centering
    \begin{subfigure}{\textwidth}
        \centering
         \includegraphics[width=0.28\textwidth]{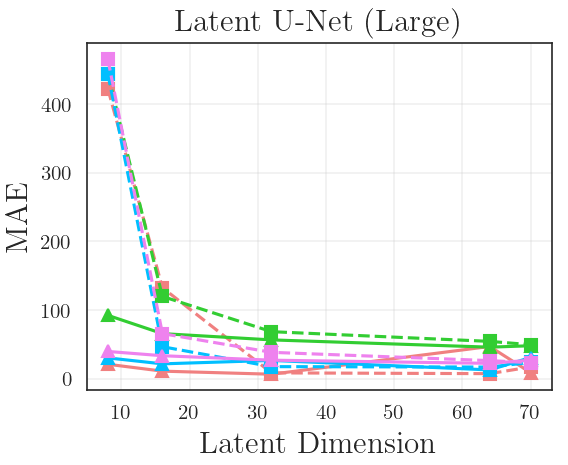}
     \includegraphics[width=0.28\textwidth]{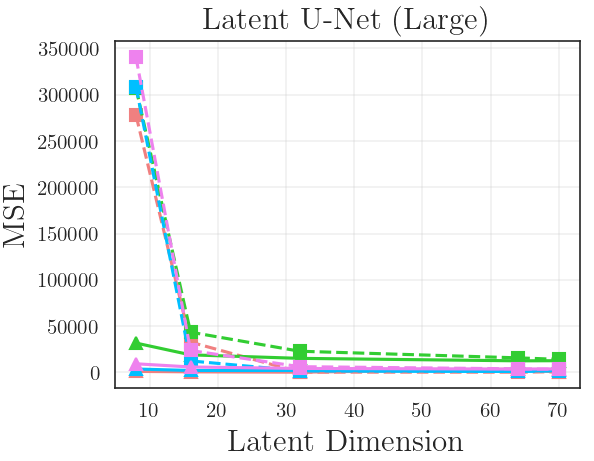}
     \includegraphics[width=0.38\textwidth]{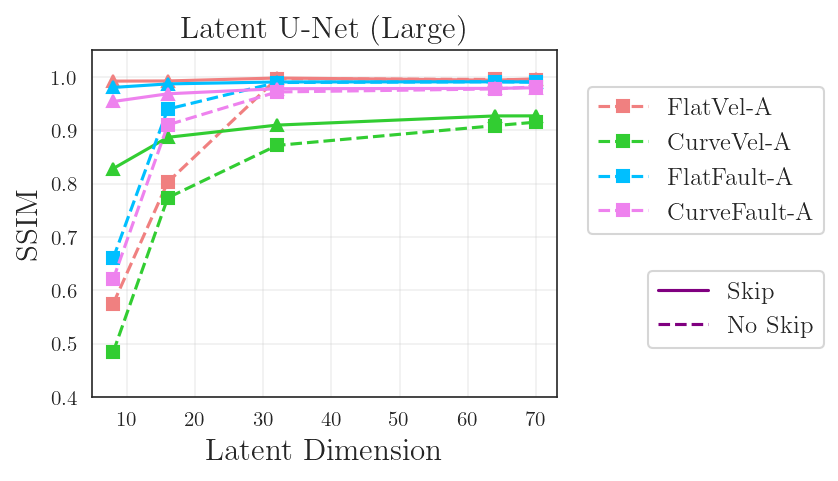}
   
    \end{subfigure}

    \vspace{1em}
    
    \begin{subfigure}{\textwidth}
        \centering
         \includegraphics[width=0.28\textwidth]{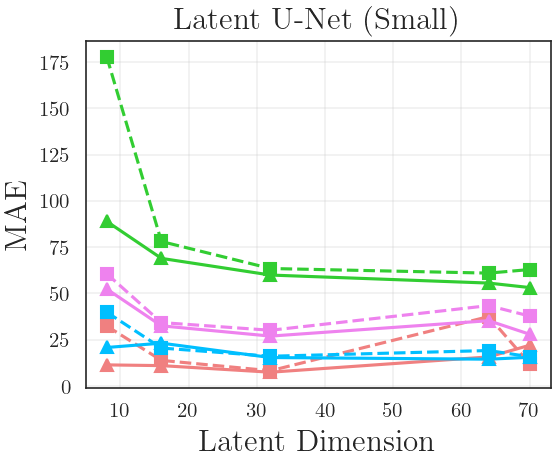}
     \includegraphics[width=0.28\textwidth]{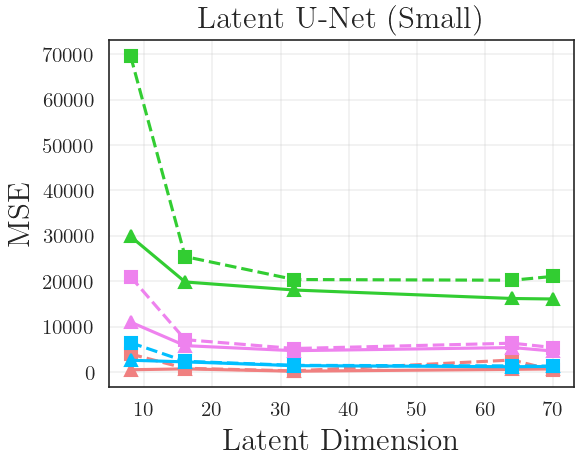}
     \includegraphics[width=0.38\textwidth]{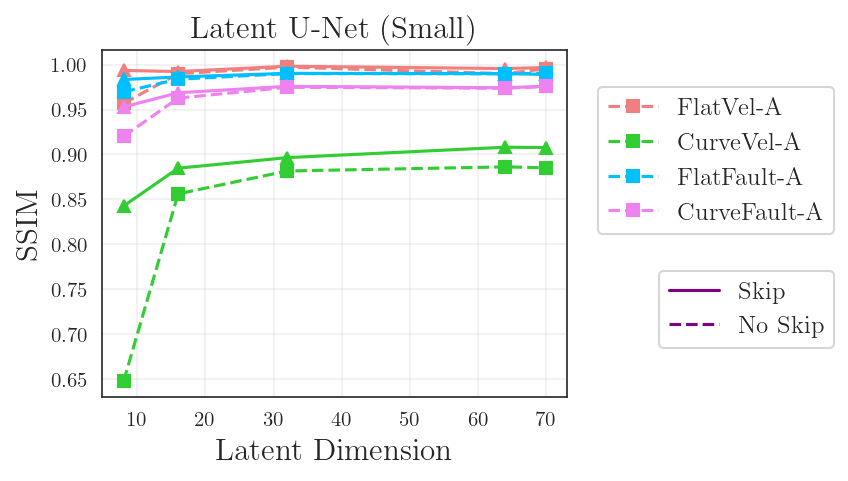}

    \end{subfigure}

     \caption{Effect of the size of latent space and skip vs no skip connections on the performance of large and small Latent U-Net models across OpenFWI datasets.}
      \label{fig:latent_dim_skip_effect_unet}
\end{figure*}

\subsection{Effect of Manifold Learning}
\label{sec:manifold_learning}
Figure \ref{fig:appendix_translation_reconstruction} for Latent U-Net Small further elaborates on our finding that direct translation learning consistently outperforms the two-stage approach, where translation follows reconstruction.


\begin{figure}[h]
    \centering
    \begin{subfigure}[b]{0.45\textwidth}
        \centering
        \includegraphics[width=\textwidth]{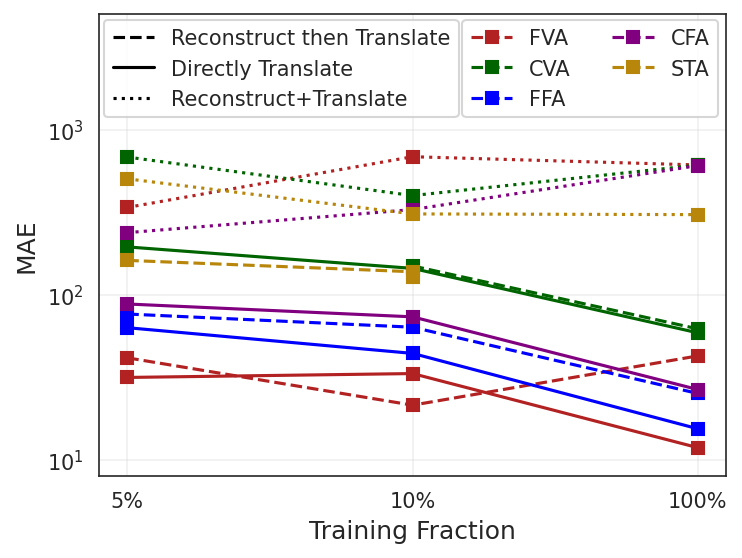}
        \caption{Latent U-Net Small}
        \label{fig:appendix_translation_unet_small}
    \end{subfigure}
    \begin{subfigure}[b]{0.45\textwidth}
        \centering
        \includegraphics[width=\textwidth]{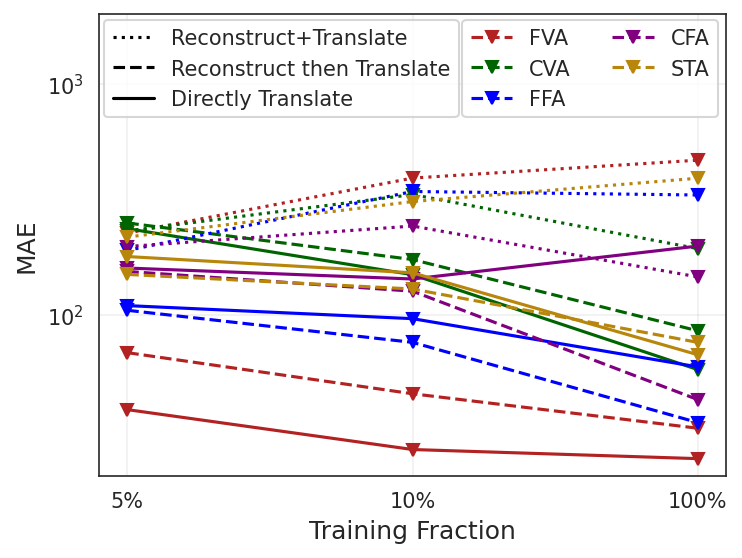} 
        \caption{Invetible X-Net}
        \label{fig:appendix_reconstruction_unet_large}
    \end{subfigure}
    \caption{\textcolor{black}{Comparison of Latent U-Net’s and Invertible X-Net’s performance across three learning objectives: translation directly, reconstruction followed by translation, and combined learning of both, evaluated at different training fractions.}}
    \label{fig:appendix_translation_reconstruction}
\end{figure}

\section{Noise Experiments}

\begin{table}[htbp]
    \centering
    \caption{\textcolor{black}{Quantitative results on CurveFault-B with Gaussian noise of varying variance $\sigma^2$ added during testing for the inverse problem.}}
    \label{tab:inverse_noise_experiment_cfb}
    \renewcommand{\arraystretch}{1.4} 
    \resizebox{\textwidth}{!}{
     \color{black}
    \begin{tabular}{l|ccc|ccc|ccc|ccc|ccc}
        \toprule
        Model & \multicolumn{3}{c|}{$\sigma^2=0$} 
              & \multicolumn{3}{c|}{$\sigma^2=1\text{e-}5$} 
              & \multicolumn{3}{c|}{$\sigma^2=5\text{e-}5$} 
              & \multicolumn{3}{c}{$\sigma^2=1\text{e-}4$}
              & \multicolumn{3}{c}{$\sigma^2=5\text{e-}4$} \\
        & \multicolumn{3}{c|}{} 
        & \multicolumn{3}{c|}{\footnotesize PSNR=84.07dB} 
        & \multicolumn{3}{c|}{\footnotesize PSNR=77..08dB} 
        & \multicolumn{3}{c|}{\footnotesize PSNR=74.07dB} 
        & \multicolumn{3}{c}{\footnotesize PSNR=67.08dB} \\
        \cmidrule(lr){2-4} \cmidrule(lr){5-7} \cmidrule(lr){8-10} \cmidrule(lr){11-13} \cmidrule(lr){14-16}
        & MAE & MSE & SSIM & MAE & MSE & SSIM & MAE & MSE & SSIM & MAE & MSE & SSIM & MAE & MSE & SSIM \\
        \hline
        Latnet U-Net
        & 185.22 & 93248.64 & 0.6930 & 185.54 & 93538.78 & 0.6926 & 187.03 & 94674.546 & 0.6917 & 188.56 & 95777.95 & 0.6906 & 195.94 & 101444.7 & 0.6844 \\
        Degradation (\%) & (-) & (-) & (-) & 0.17\% & 0.31\% & 0.05\% & 0.97\% & 1.52\% & 0.20\% & 1.80\% & 2.71\% &0.35\% & 5.78\% & 8.78\% & 1.24\% \\
        \hline
        Invertible X-Net
        & 180.97 & 86465.25 & 0.7117 & 181.32 & 86673.38 & 0.7115 & 182.83 & 87674.14 & 0.7106 & 184.17 & 88585.86 & 0.7097 & 192.22 & 94728.94 & 0.7036 \\
        Degradation (\%) & (-) & (-) & (-) & 0.19\% & 0.24\% & 0.02\% & 1.02\% &  1.39\% & 0.15\% & 1.76\% & 2.45\% & 0.27\% & 6.21\% & 9.55\% & 1.13\% \\
        \hline
        Auto-Linear
        & 268.47 & 154713.10 & 0.5695 & 270.29 & 156201.81 & 0.5682 & 277.05 & 162156.50 & 0.5632 & 285.34 & 169892.26 & 0.5572 & 325.02 & 211616.29 & 0.5190 \\
        Degradation (\%) & (-) & (-) & (-) & 0.67\% & 0.96\% & 0.21\% & 3.19\% & 4.81\% & 1.10\% & 6.28\% & 9.81\% & 2.15\% & 21.06\% & 36.77\% & 8.86\% \\
        \bottomrule
    \end{tabular}
    }
\end{table}

\begin{table}[htbp]
    \centering
    \caption{\textcolor{black}{Quantitative results on FlatFault-B with Gaussian noise of varying variance $\sigma^2$ added during testing for the inverse problem.}}
    \label{tab:inverse_noise_experiment_ffb}
    \renewcommand{\arraystretch}{1.4} 
    \resizebox{\textwidth}{!}{
    \color{black}
    \begin{tabular}{l|ccc|ccc|ccc|ccc|ccc}
        \toprule
        Model & \multicolumn{3}{c|}{$\sigma^2=0$} 
              & \multicolumn{3}{c|}{$\sigma^2=1\text{e-}5$} 
              & \multicolumn{3}{c|}{$\sigma^2=5\text{e-}5$} 
              & \multicolumn{3}{c}{$\sigma^2=1\text{e-}4$}
              & \multicolumn{3}{c}{$\sigma^2=5\text{e-}4$} \\
        & \multicolumn{3}{c|}{} 
        & \multicolumn{3}{c|}{\footnotesize PSNR=84.07dB} 
        & \multicolumn{3}{c|}{\footnotesize PSNR=77..08dB} 
        & \multicolumn{3}{c|}{\footnotesize PSNR=74.07dB} 
        & \multicolumn{3}{c}{\footnotesize PSNR=67.08dB} \\
        \cmidrule(lr){2-4} \cmidrule(lr){5-7} \cmidrule(lr){8-10} \cmidrule(lr){11-13} \cmidrule(lr){14-16}
        & MAE & MSE & SSIM & MAE & MSE & SSIM & MAE & MSE & SSIM & MAE & MSE & SSIM & MAE & MSE & SSIM \\
        \hline
        Latnet U-Net
        & 92.93 & 33925.46 & 0.8515 & 93.47 & 34137.49 & 0.8511 & 95.43 & 34907.52 & 0.8499 & 98.19 & 36002.58 & 0.8482 & 108.83 & 40770.50 & 0.8417 \\
        Degradation (\%) & (-) & (-) & (-) & 0.58\% & 0.62\% & 0.05\% & 2.68\% & 2.89\% & 0.18\% & 5.66\% & 6.12\% &  0.38\% & 17.10\% & 20.17\% & 1.15\% \\
        \hline
        Invertible X-Net
        & 94.05 & 32548.49 & 0.8532 & 96.23 & 33283.08 & 0.8529 & 104.75 & 36522.60 & 0.85 & 110.30 & 39016.04 & 0.8474 & 122.66 & 45509.88 & 0.8382 \\
        Degradation (\%) & (-) & (-) & (-) & 2.31\% & 2.25\% & 0.03\% & 11.37\% &  12.20\% & 0.37\% & 17.27\% & 19.87\% & 0.67\% & 30.41\% & 39.82\% & 1.76\% \\
        \hline
        Auto-Linear
        & 181.39 & 81789.80 & 0.6865 & 187.40 & 85327.79 & 0.6821 & 206.58 & 98616.76 & 0.6664 & 221.79 & 110396.50 & 0.6511 & 283.50 & 167482.73 & 0.5796 \\
        Degradation (\%) & (-) & (-) & (-) & 3.31\% & 4.32\% & 0.64\% & 13.88\% & 20.57\% & 2.93\% & 22.26\% & 34.97\% & 5.16\% & 56.28\% & 104.77\% & 15.57\% \\
        \bottomrule
    \end{tabular}
    }
\end{table}

\begin{table}[htbp]
    \centering
    \caption{\textcolor{black}{Quantitative results on CurveFault-B with Gaussian noise of varying variance $\sigma^2$ added during testing for the forward problem.}}
    \label{tab:forward_noise_experiment_cfb}
    \renewcommand{\arraystretch}{1.4} 
    \resizebox{\textwidth}{!}{
     \color{black}
    \begin{tabular}{l|ccc|ccc|ccc|ccc|ccc}
        \toprule
        Model & \multicolumn{3}{c|}{$\sigma^2=0$} 
              & \multicolumn{3}{c|}{$\sigma^2=1\text{e-}5$} 
              & \multicolumn{3}{c|}{$\sigma^2=5\text{e-}5$} 
              & \multicolumn{3}{c}{$\sigma^2=1\text{e-}4$}
              & \multicolumn{3}{c}{$\sigma^2=5\text{e-}4$} \\
     & \multicolumn{3}{c|}{} 
        & \multicolumn{3}{c|}{\footnotesize PSNR=56.02dB} 
        & \multicolumn{3}{c|}{\footnotesize PSNR=49.03dB} 
        & \multicolumn{3}{c|}{\footnotesize PSNR=46.02dB} 
        & \multicolumn{3}{c}{\footnotesize PSNR=39.03dB} \\
        \cmidrule(lr){2-4} \cmidrule(lr){5-7} \cmidrule(lr){8-10} \cmidrule(lr){11-13} \cmidrule(lr){14-16}
        & MAE & MSE & SSIM & MAE & MSE & SSIM & MAE & MSE & SSIM & MAE & MSE & SSIM & MAE & MSE & SSIM \\
        \hline
        Latnet U-Net
        & 0.0891 & 0.0297 & 0.8636 & 0.0891 & 0.0297 & 0.8635 & 0.0892 & 0.0297 & 0.8634 & 0.0897 & 0.0301 & 0.8630 & 0.0966 & 0.0396 & 0.8565 \\
        Degradation (\%) & (-) & (-) & (-) & 0\% & 0\% & 0.01\% & 0.13\% & 0.30\% & 0.01\% & 0.60\% & 1.40\% & 0.06\% & 8.37\% & 33.47\% & 0.82\% \\
        \hline
        Invertible X-Net
        & 0.0671 & 0.0163 & 0.9157 & 0.0672 & 0.0164 & 0.9156 & 0.0674 & 0.0164 & 0.9152 & 0.0678 & 0.0166 & 0.9147 & 0.0719 & 0.0193 & 0.9090 \\
        Degradation (\%) & (-) & (-) & (-) & 0.07\% & 0.07\% & 0.009\% & 0.42\% & 0.54\% & 0.04\% & 1.00\% & 1.68\% & 0.10\% & 7.08\% & 18.18\% & 0.73\% \\
        \hline
        Auto-Linear
        & 0.1356 & 0.0866 & 0.8188 & 0.1356 & 0.0868 & 0.8187 & 0.1360 & 0.0873 & 0.8183 & 0.1365 & 0.088 & 0.8179 & 0.139 & 0.0919 & 0.8159 \\
        Degradation (\%) & (-) & (-) & (-) & 0.06\% & 0.14\% & 0.01\% & 0.34\% & 0.78\% & 0.05\% & 0.70\% & 1.61\% & 0.09\% & 2.53\% & 6.03\% & 0.35\% \\
        \bottomrule
    \end{tabular}
    }
\end{table}

\begin{table}[htbp]
    \centering
    \caption{\textcolor{black}{Quantitative results on FlatFault-B with Gaussian noise of varying variance $\sigma^2$ added during testing for the forward problem.}}
    \label{tab:forward_noise_experiment_ffb}
    \renewcommand{\arraystretch}{1.4} 
    \resizebox{\textwidth}{!}{
     \color{black}
        \begin{tabular}{l|ccc|ccc|ccc|ccc|ccc}
        \toprule
        Model & \multicolumn{3}{c|}{$\sigma^2=0$} 
              & \multicolumn{3}{c|}{$\sigma^2=1\text{e-}5$} 
              & \multicolumn{3}{c|}{$\sigma^2=5\text{e-}5$} 
              & \multicolumn{3}{c}{$\sigma^2=1\text{e-}4$}
              & \multicolumn{3}{c}{$\sigma^2=5\text{e-}4$} \\
        & \multicolumn{3}{c|}{} 
        & \multicolumn{3}{c|}{\footnotesize PSNR=56.02dB} 
        & \multicolumn{3}{c|}{\footnotesize PSNR=49.03dB} 
        & \multicolumn{3}{c|}{\footnotesize PSNR=46.02dB} 
        & \multicolumn{3}{c}{\footnotesize PSNR=39.03dB} \\
        \cmidrule(lr){2-4} \cmidrule(lr){5-7} \cmidrule(lr){8-10} \cmidrule(lr){11-13} \cmidrule(lr){14-16}
        & MAE & MSE & SSIM & MAE & MSE & SSIM & MAE & MSE & SSIM & MAE & MSE & SSIM & MAE & MSE & SSIM \\
        \hline
        Latent U-Net
        & 0.0594 & 0.0175 & 0.9283 & 0.0594 & 0.0175 & 0.9283 & 0.0598 & 0.0179 & 0.9279 & 0.0606 & 0.0187 & 0.9271 & 0.0694 & 0.0325 & 0.9178 \\
        Degradation (\%) & (-) & (-) & (-) & 0\% & 0\% & 0\% & 0.66\% & 2.16\% & 0.04\% & 1.99\% & 6.99\% & 0.12\% & 16.87\% & 85.99\% & 1.12\% \\
        \hline
        Invertible X-Net
        & 0.0477 & 0.0117 & 0.9540 & 0.0478 & 0.0117 & 0.9540 & 0.0482 & 0.0119 & 0.9536 & 0.0487 & 0.0122 & 0.9531 & 0.0535 & 0.016 & 0.9463 \\
        Degradation (\%) & (-) & (-) & (-) & 0.17\% & 0.33\% & 0\% & 0.94\% & 1.88\% & 0.04\% & 1.93\% & 4.26\% & 0.10\% & 12.16\% & 36.39\% & 0.80\% \\
        \hline
        Auto-Linear
        & 0.1048 & 0.0625 & 0.8736 & 0.1051 & 0.0628 & 0.8734 & 0.1061 & 0.0641 & 0.8726 & 0.1072 & 0.0656 & 0.8717 & 0.112 & 0.0727 & 0.868 \\
        Degradation (\%) & (-) & (-) & (-) & 0.25\% & 0.54\% & 0.02\% & 1.19\% & 2.59\% & 0.11\% & 2.24\% & 5.06\% & 0.21\% & 6.86\% & 16.37\% & 0.64\% \\
        \bottomrule
    \end{tabular}
    }
\end{table}

\section{Additional Visualizations}
\label{sec:additional_visualizations}
Here, we provide additional visualization of the waveform and velocity predictions for baselines and our models, namely Latent-UNet (small), Latent-UNet (large), Invertible-XNet, and Invertible-XNet (cycle), for both forward and inverse problems. Please note that we show the prediction of seismic waveform and velocity in the original space by unnormalizing the predictions for every model (Figures \ref{fig:marmousi_results_appendix_style_a} - \ref{fig:appendix_vis_stb}). 

\begin{figure*}[ht]
    \centering
   \includegraphics[width=1.0\textwidth]{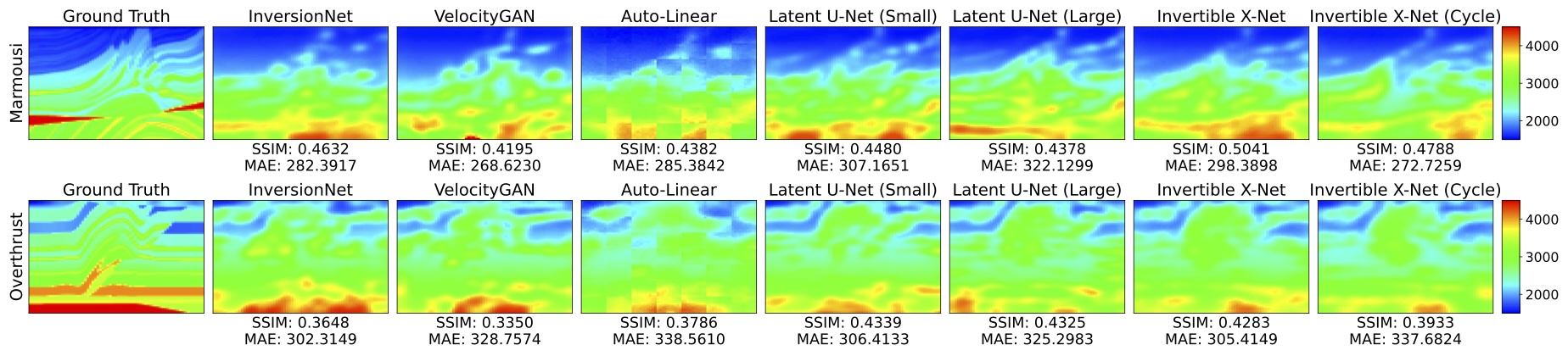}
     \vfill{}
    \vspace{1cm}
   \includegraphics[width=1.0\textwidth]{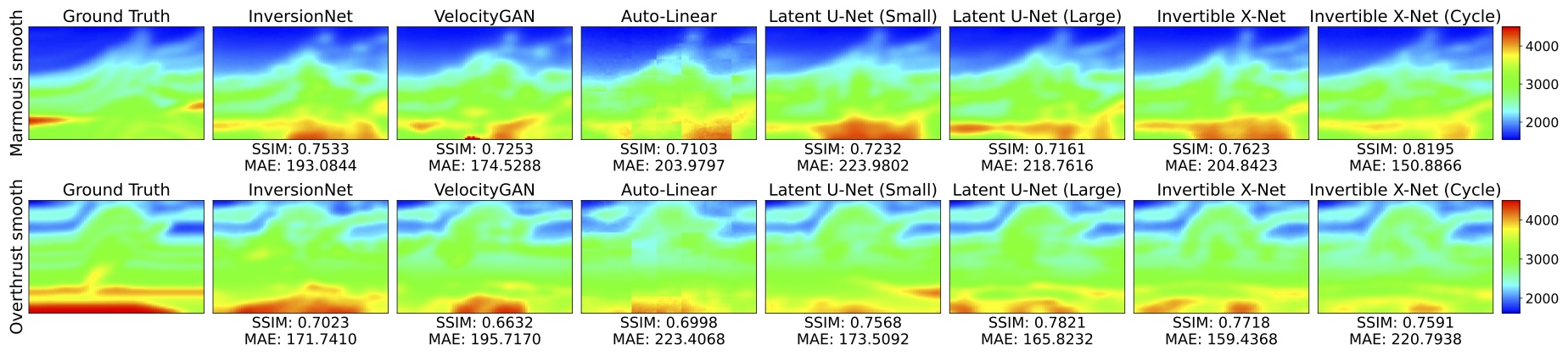}
     \vfill{}
    \vspace{1cm}
   \includegraphics[width=1.0\textwidth]{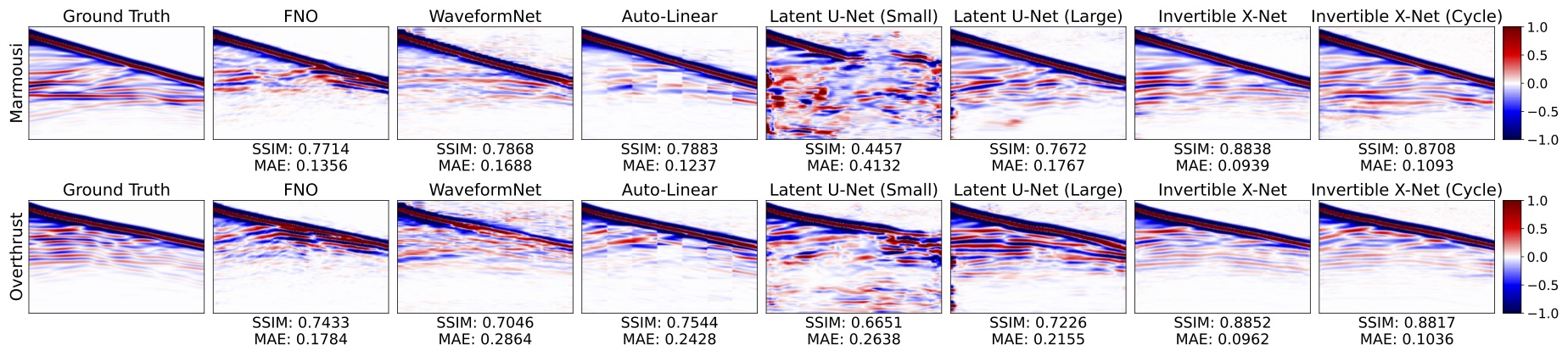}
     \vfill{}
    \vspace{1cm}
    \includegraphics[width=1.0\textwidth]{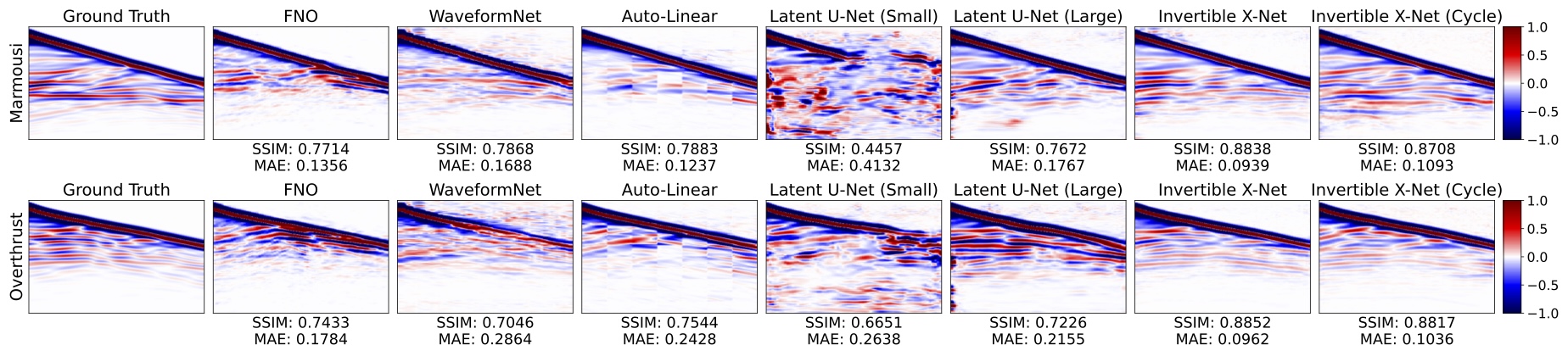}
    \vfill{}
    \vspace{1cm}
    \caption{Zero shot generalization results of model trained on Style-A dataset on Marmousi and Overthrust dataset samples and their smoothened versions.}
    \label{fig:marmousi_results_appendix_style_a}
\end{figure*}

\begin{figure*}[ht]
    \centering
    \includegraphics[width=1.0\textwidth]{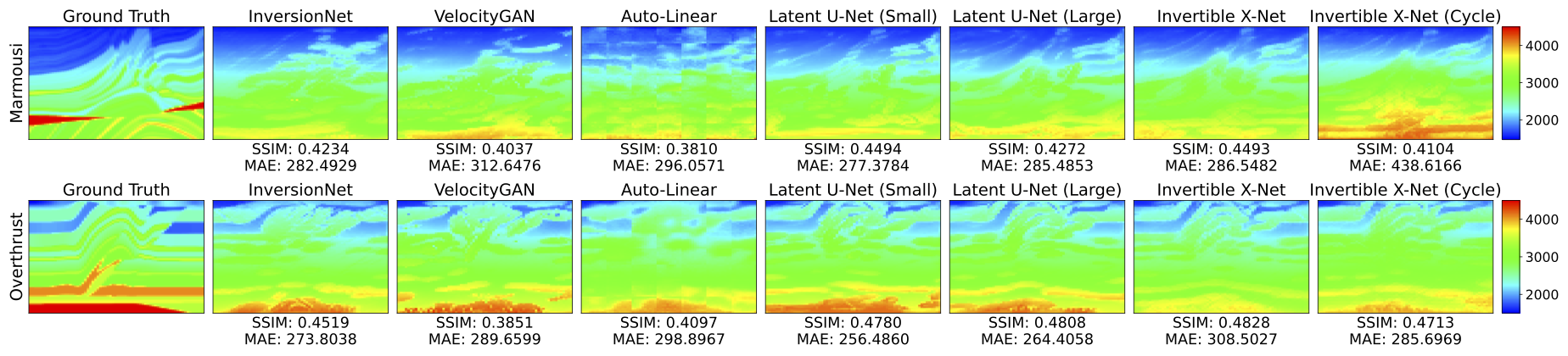}
    \vfill{}
    \vspace{1cm}
    \includegraphics[width=1.0\textwidth]{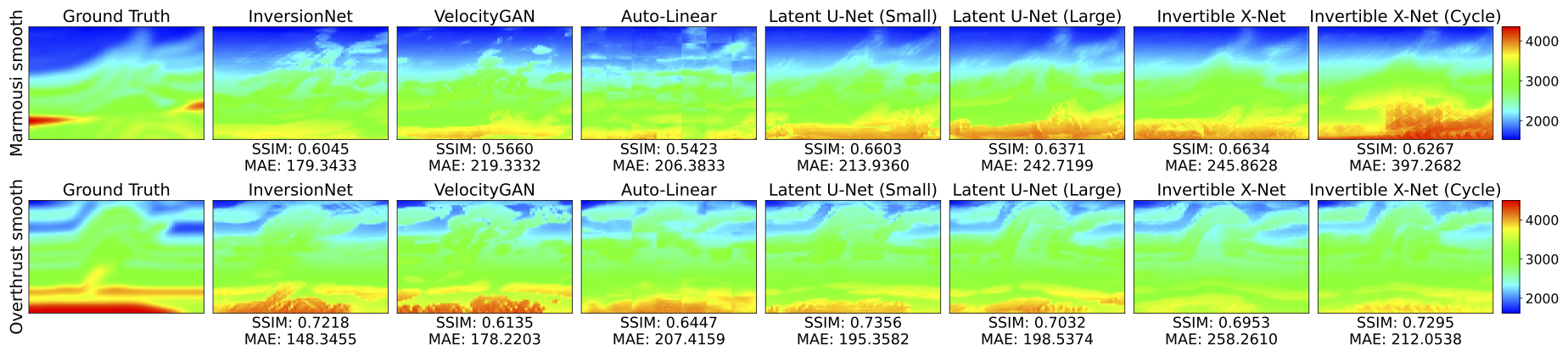}
     \vfill{}
      \vspace{1cm}
    \includegraphics[width=1.0\textwidth]{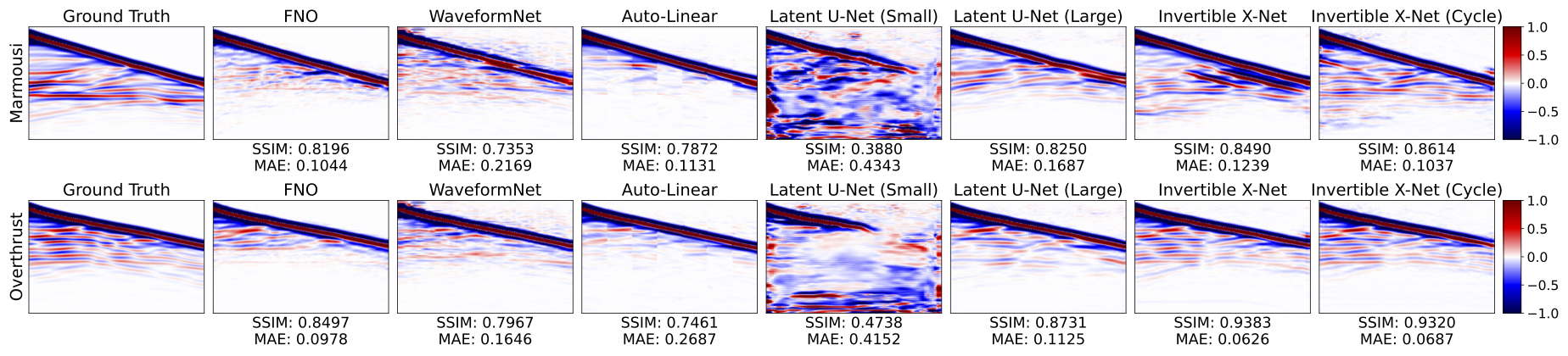}
     \vfill{}
      \vspace{1cm}
   \includegraphics[width=1.0\textwidth]{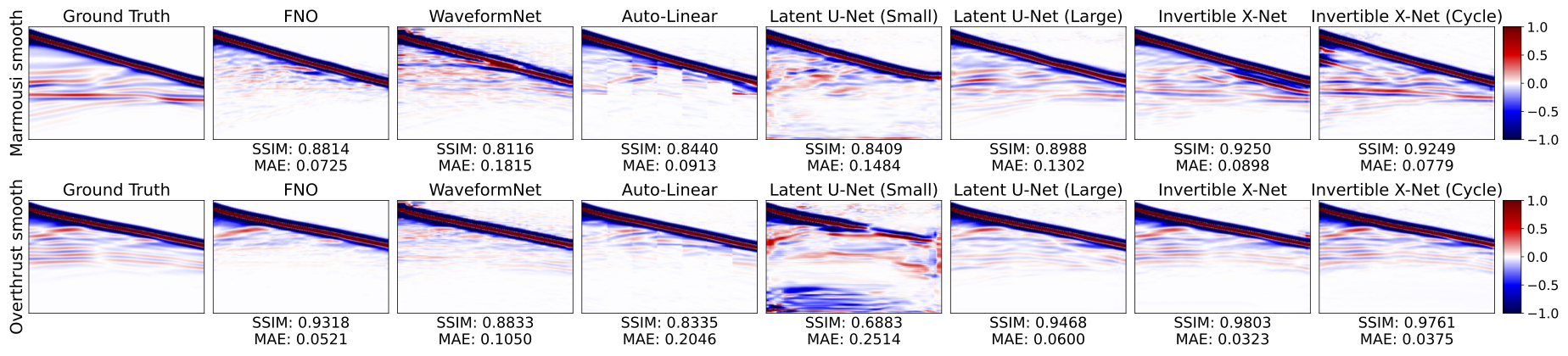}
   \vfill{}
    \vspace{1cm}
    \caption{Zero shot generalization results of model trained on Style-B dataset on Marmousi and Overthrust dataset samples and their smoothened versions.}
    \label{fig:marmousi_results_appendix_style_b}
\end{figure*}

\begin{figure*}[ht]
    \centering
    \begin{subfigure}{\textwidth}
        \centering
        \includegraphics[width=1.0\textwidth]{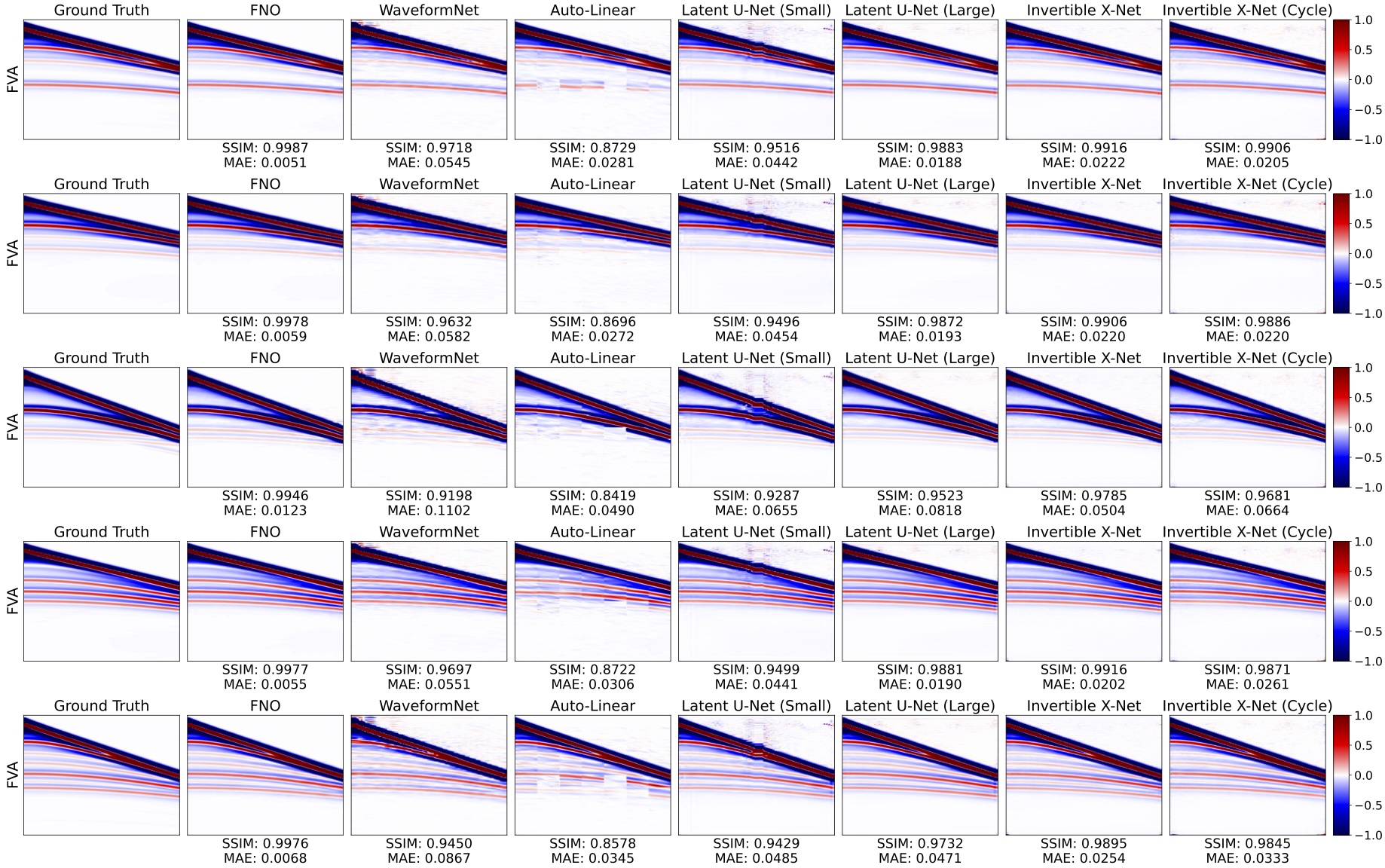}
    \end{subfigure}
    \vfill{}
    \vspace{0.5cm}
    \begin{subfigure}{\textwidth}
        \centering
        \includegraphics[width=1.0\textwidth]{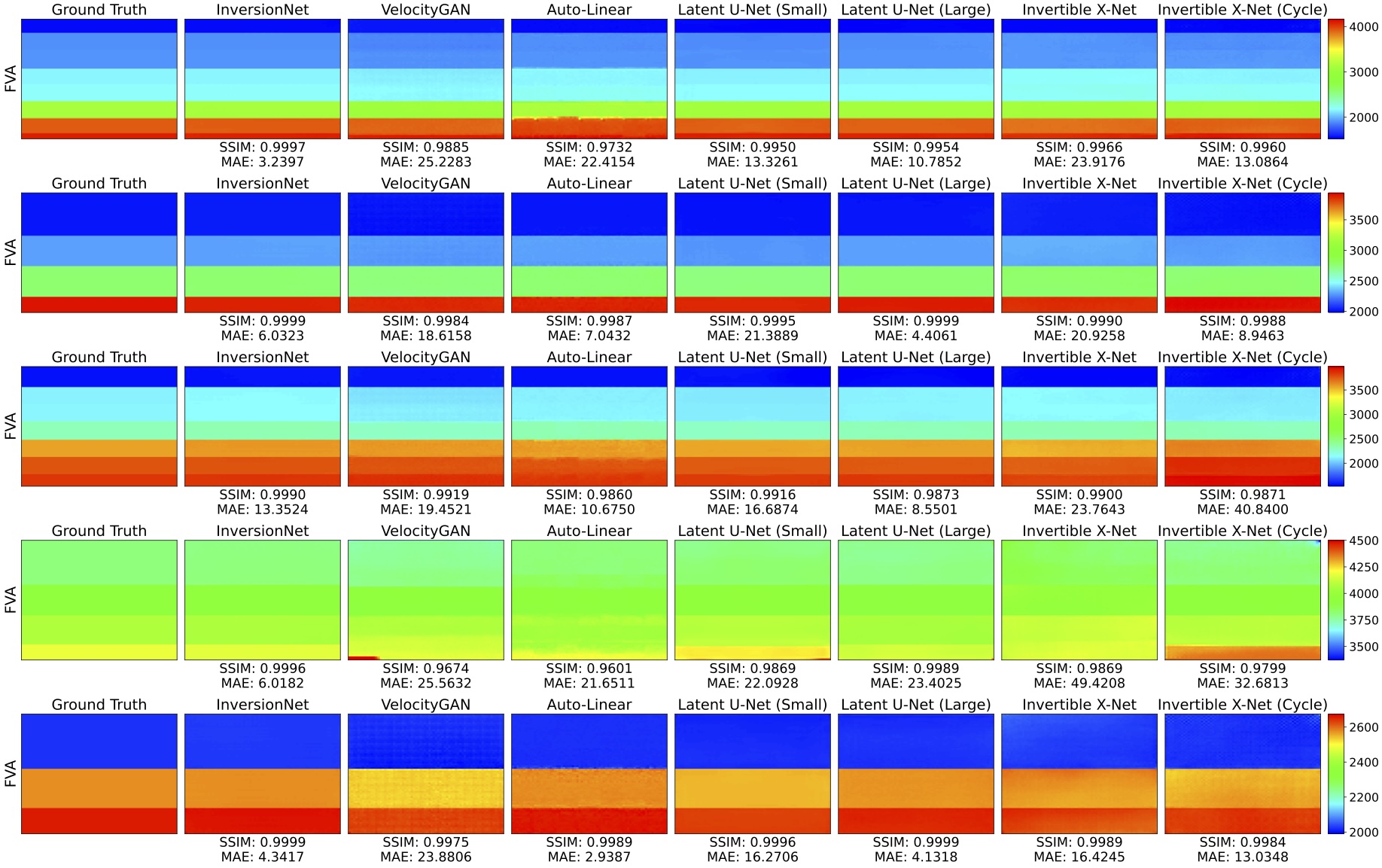}
    \end{subfigure}
    \caption{Visualization of predictions for forward and inverse problems on FVA dataset. }
    \label{fig:appendix_vis_fva}
\end{figure*}

\begin{figure*}[ht]
    \centering
    \begin{subfigure}{\textwidth}
        \centering
       \includegraphics[width=1.0\textwidth]{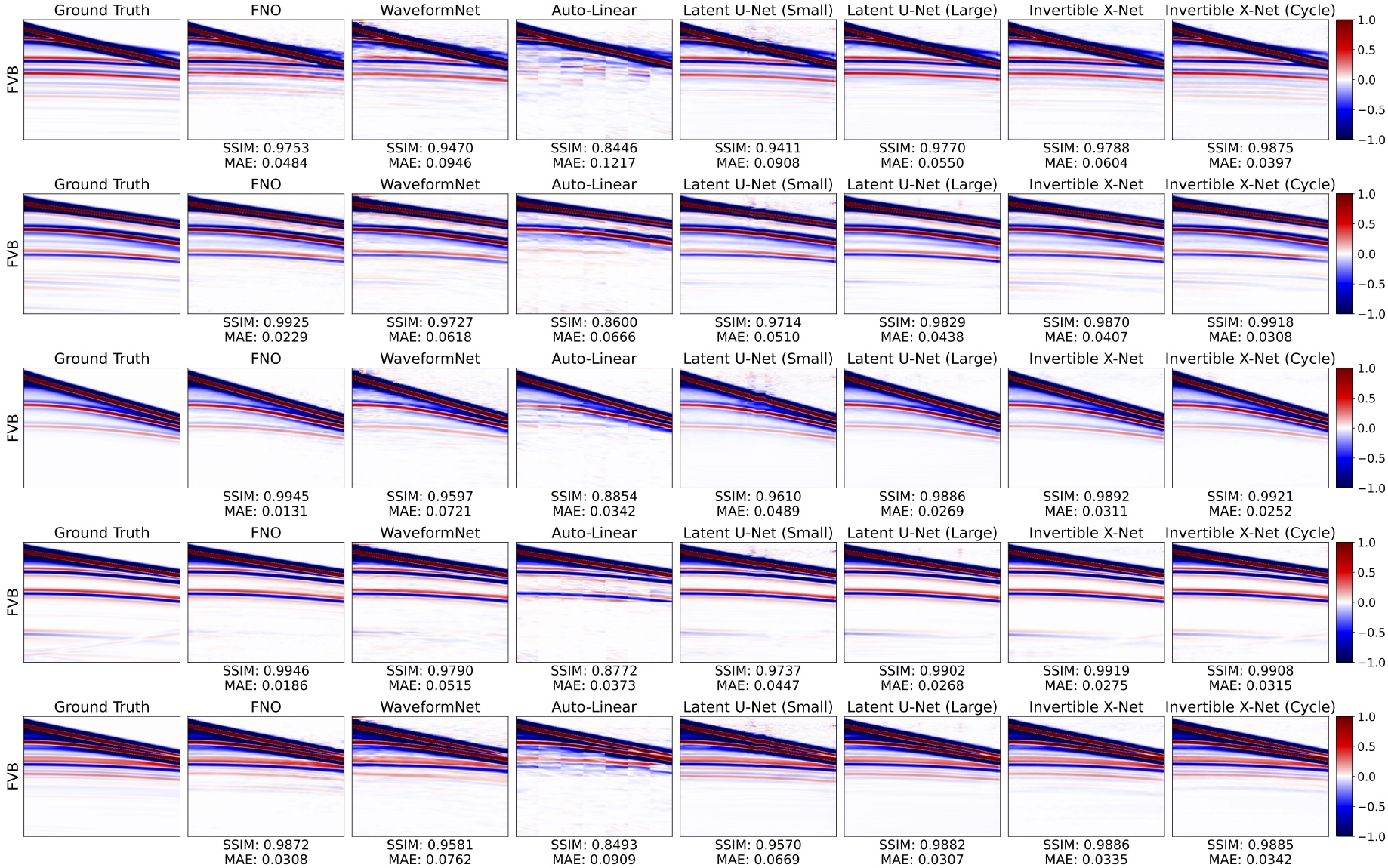}
    \end{subfigure}
    \vfill{}
    \vspace{0.5cm}
    \begin{subfigure}{\textwidth}
        \centering
       \includegraphics[width=1.0\textwidth]{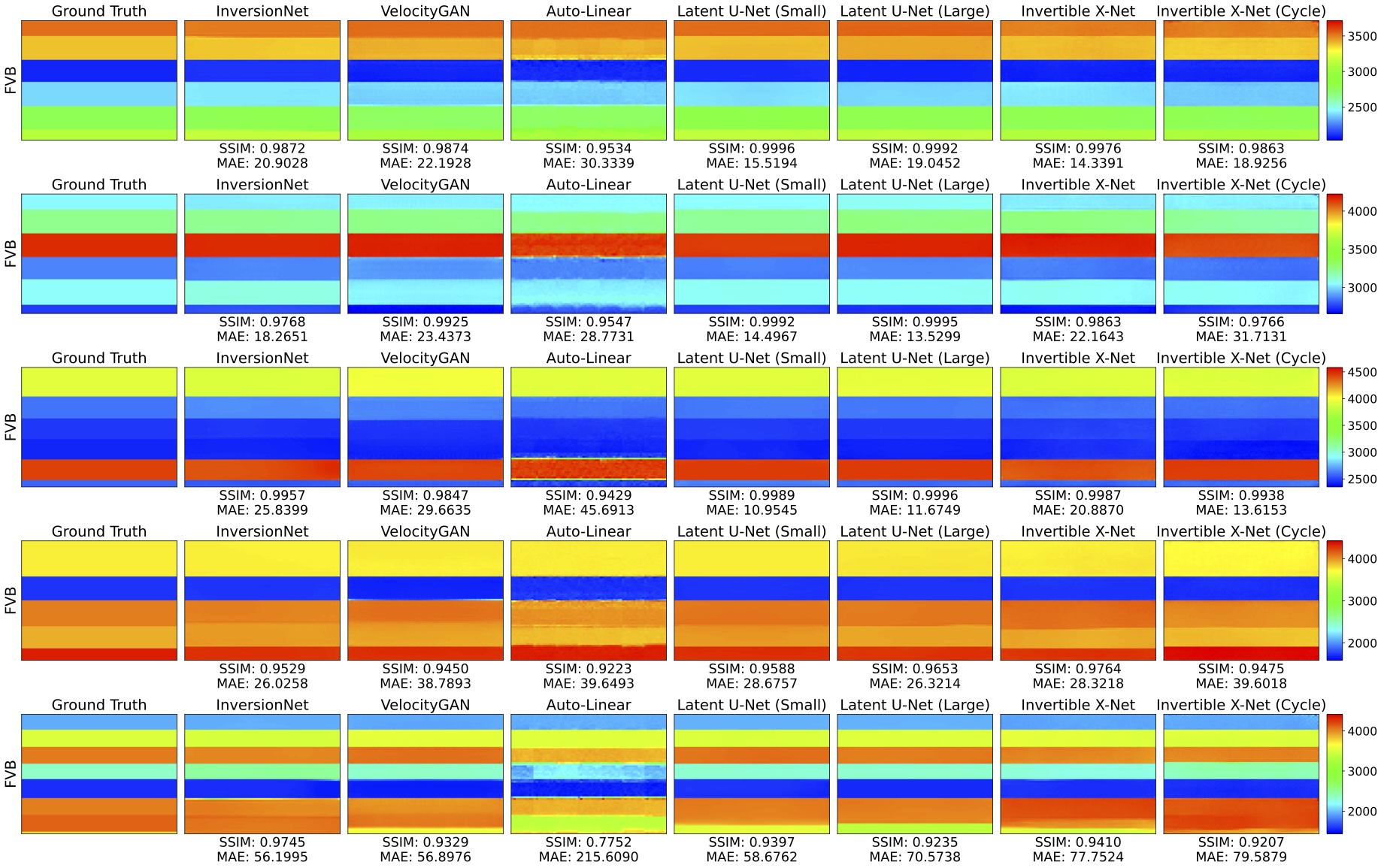}
    \end{subfigure}
    \caption{Visualization of predictions for forward and inverse problems on FVB dataset. }
    \label{fig:appendix_vis_fvb}
\end{figure*}

\begin{figure*}[ht]
    \centering
    \begin{subfigure}{\textwidth}
        \centering        \includegraphics[width=1.0\textwidth]{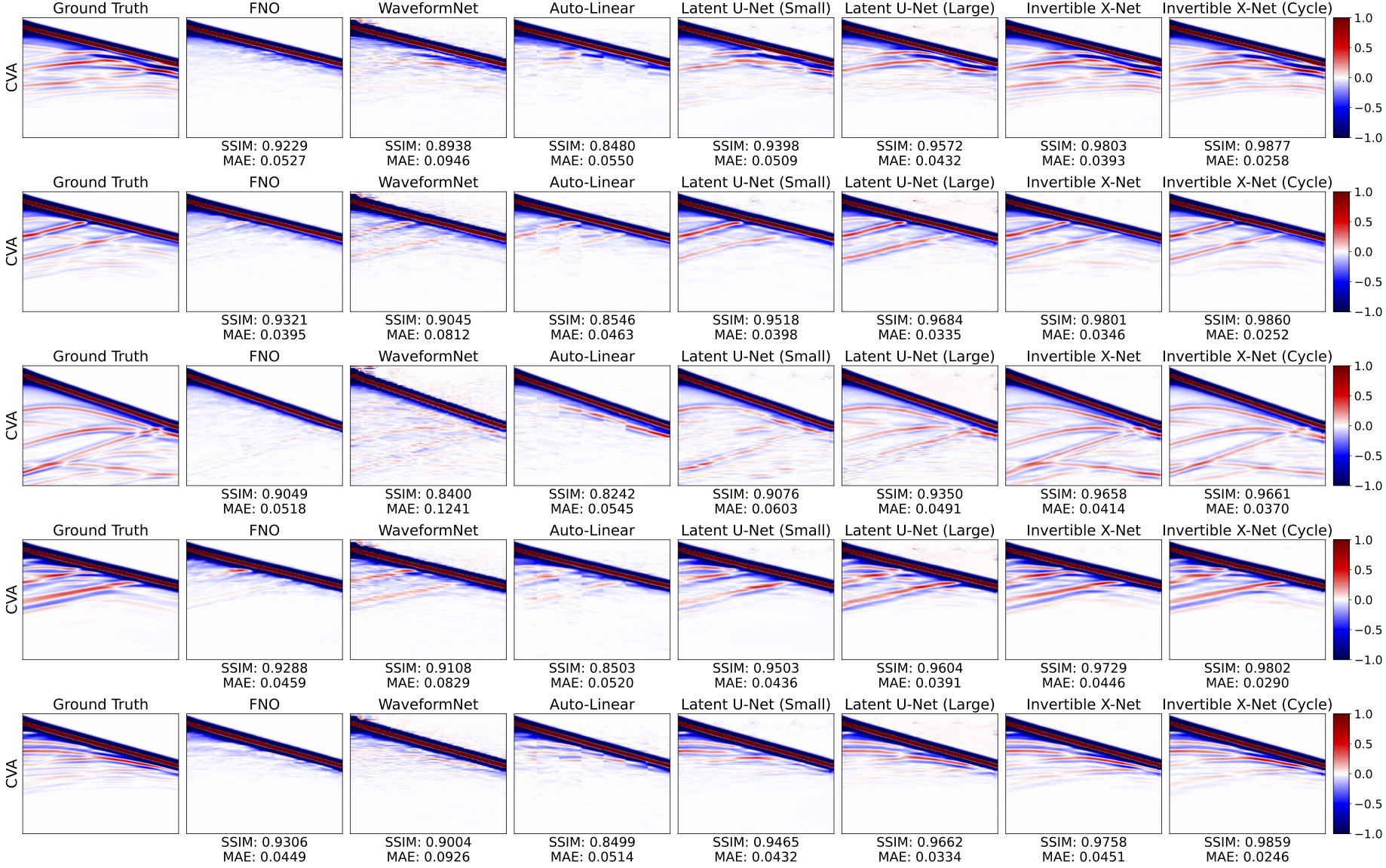}
    \end{subfigure}
    \vfill{}
    \vspace{0.5cm}
    \begin{subfigure}{\textwidth}
        \centering
        \includegraphics[width=1.0\textwidth]{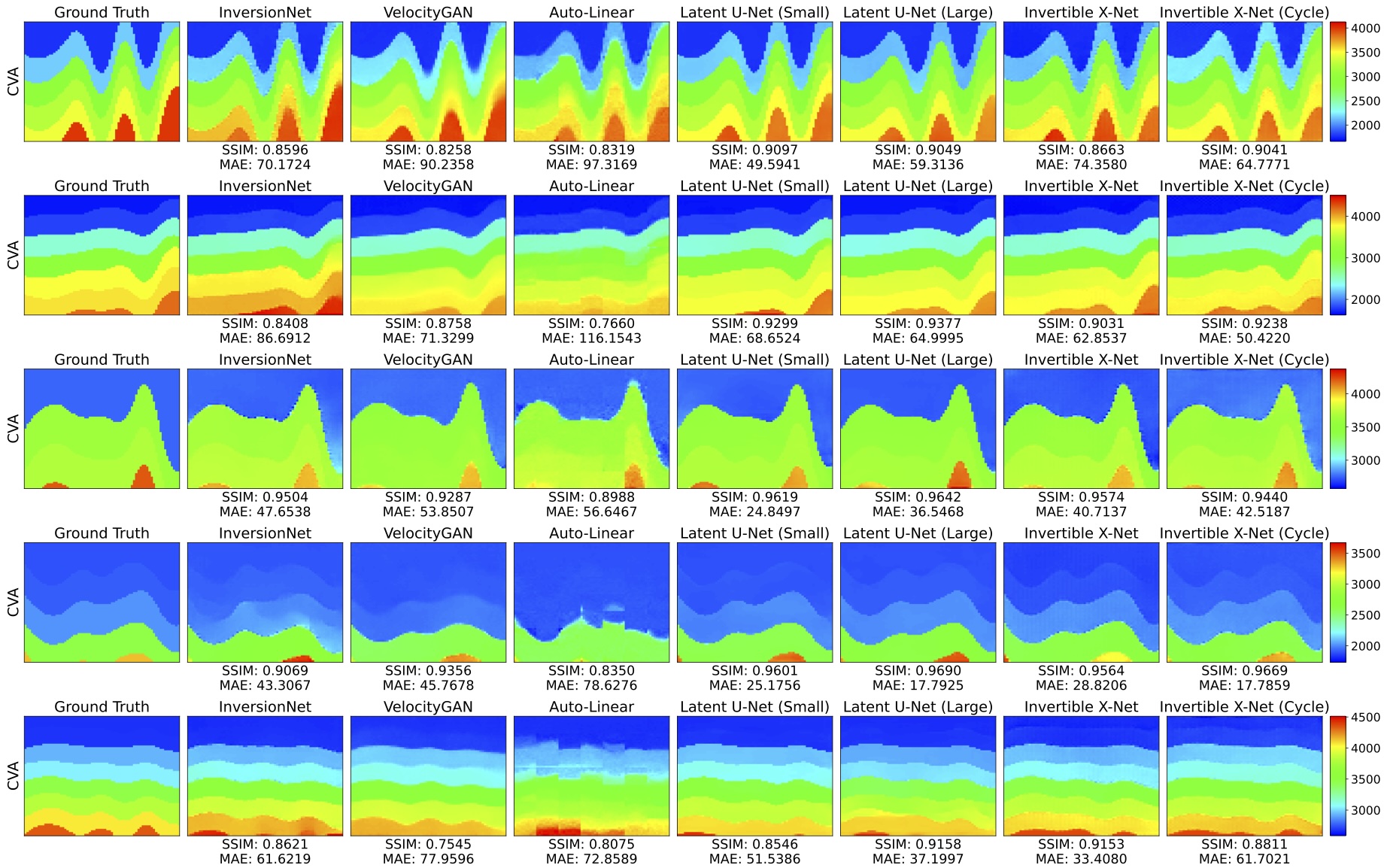}
    \end{subfigure}
    \caption{Visualization of predictions for forward and inverse problems on CVA dataset.}
    \label{fig:appendix_vis_cva}
\end{figure*}

\begin{figure*}[ht]
    \centering
    \begin{subfigure}{\textwidth}
        \centering
        \includegraphics[width=1.0\textwidth]{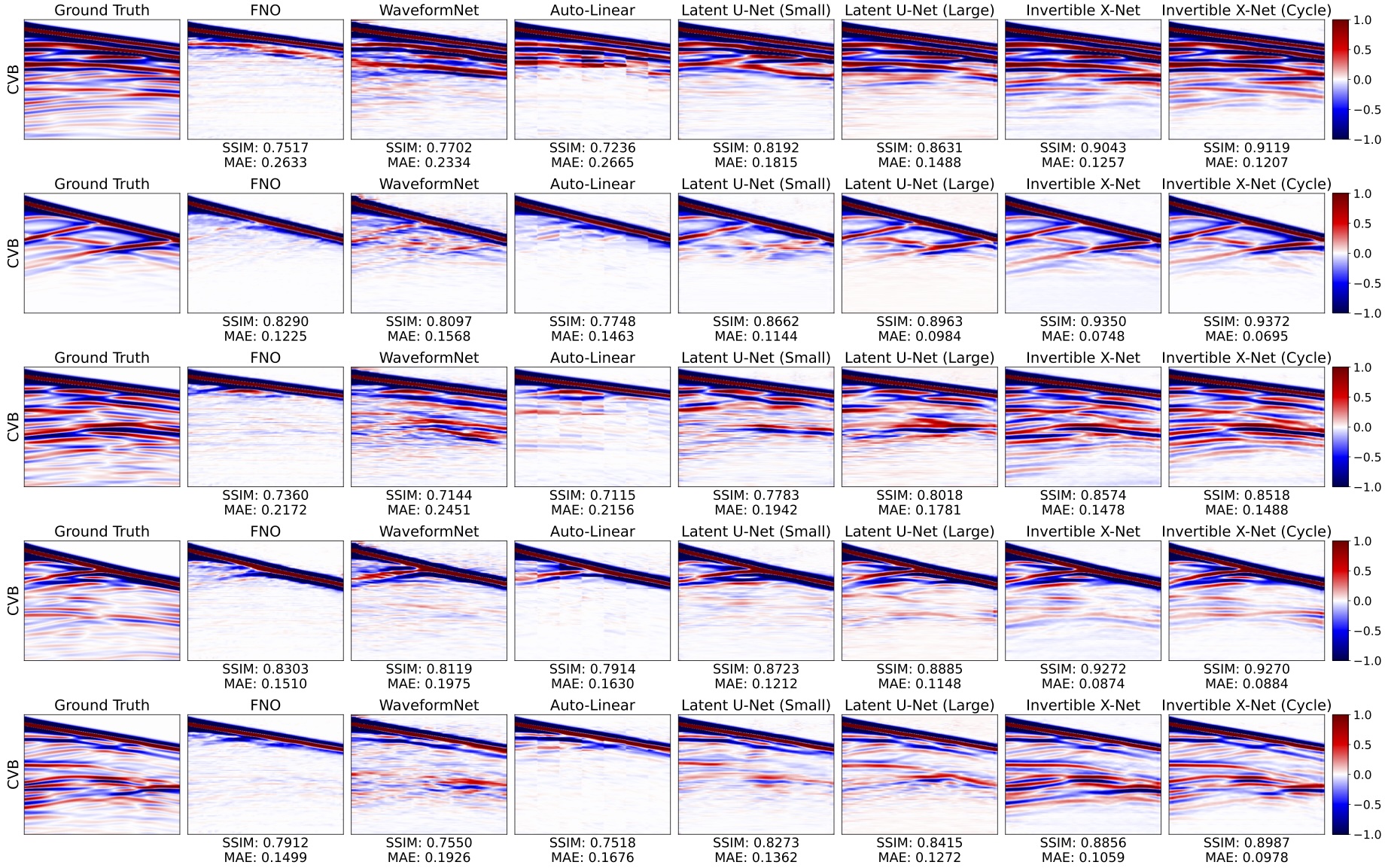}
    \end{subfigure}
    \vfill{}
    \vspace{0.5cm}
    \begin{subfigure}{\textwidth}
        \centering
        \includegraphics[width=1.0\textwidth]{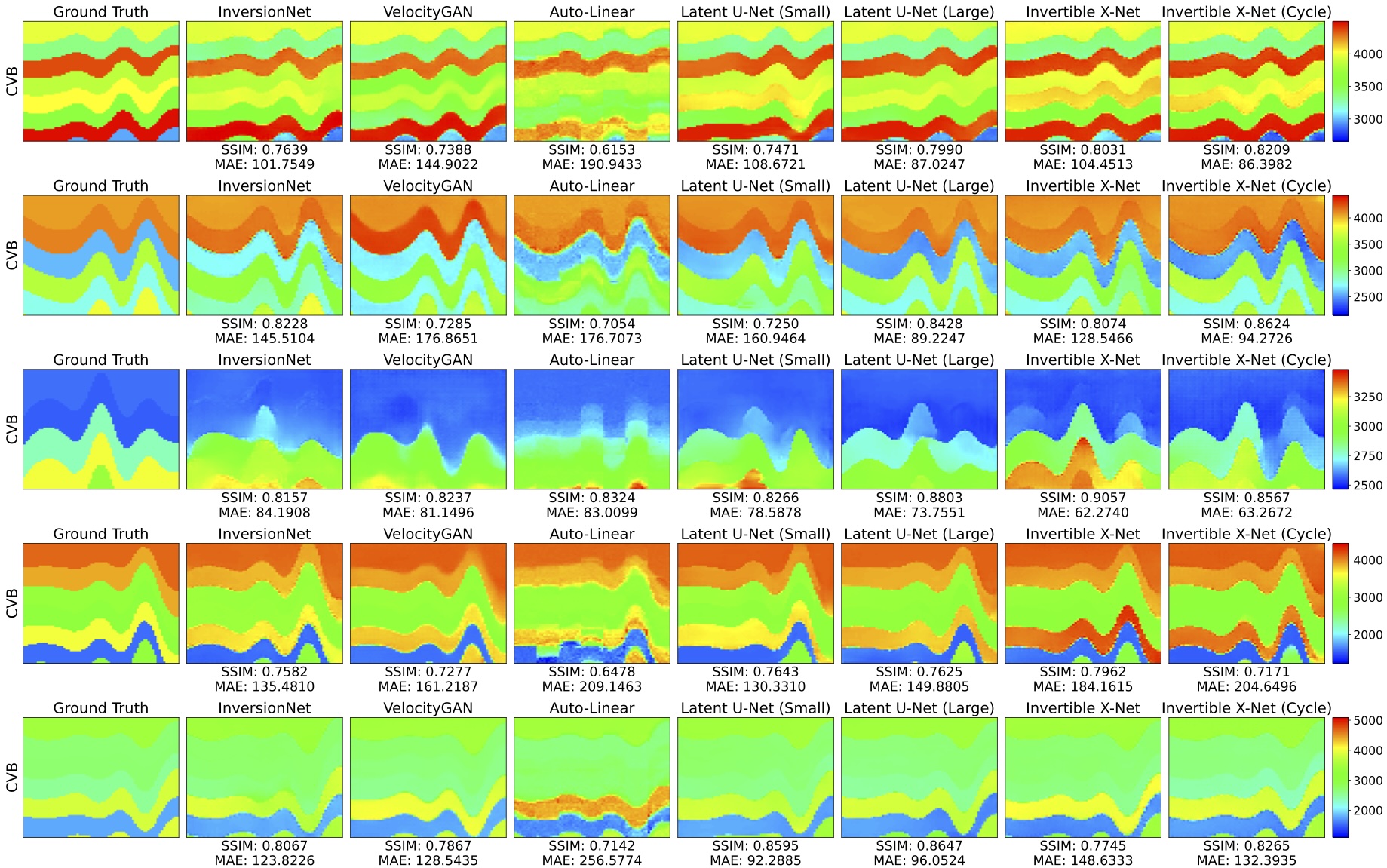}
    \end{subfigure}
    \caption{Visualization of predictions for forward and inverse problems on CVB dataset.}
    \label{fig:appendix_vis_cvb}
\end{figure*}

\begin{figure*}[ht]
    \centering
    \begin{subfigure}{\textwidth}
        \centering
        \includegraphics[width=1.0\textwidth]{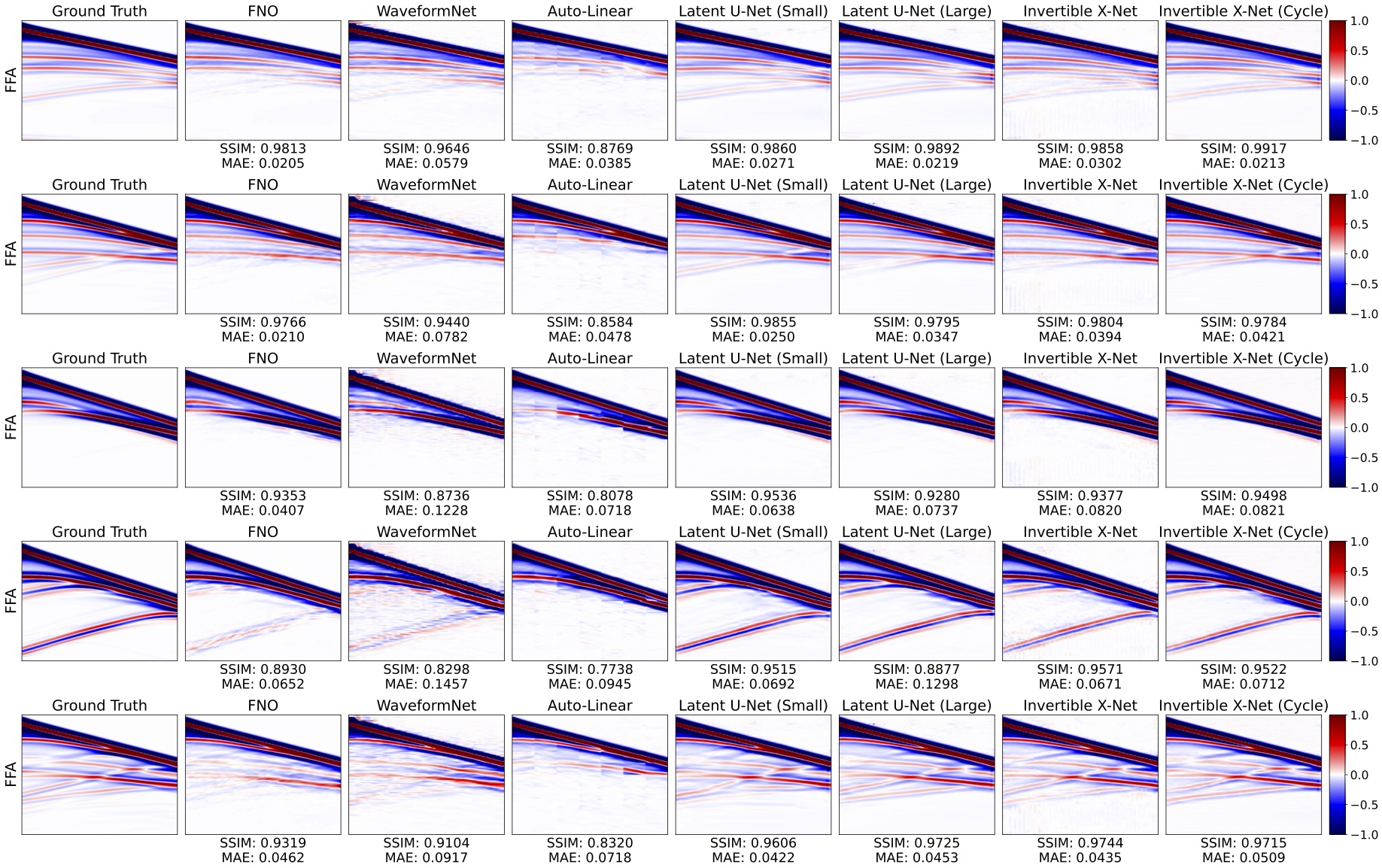}
    \end{subfigure}
    \vfill{}
    \vspace{0.5cm}
    \begin{subfigure}{\textwidth}
        \centering
        \includegraphics[width=1.0\textwidth]{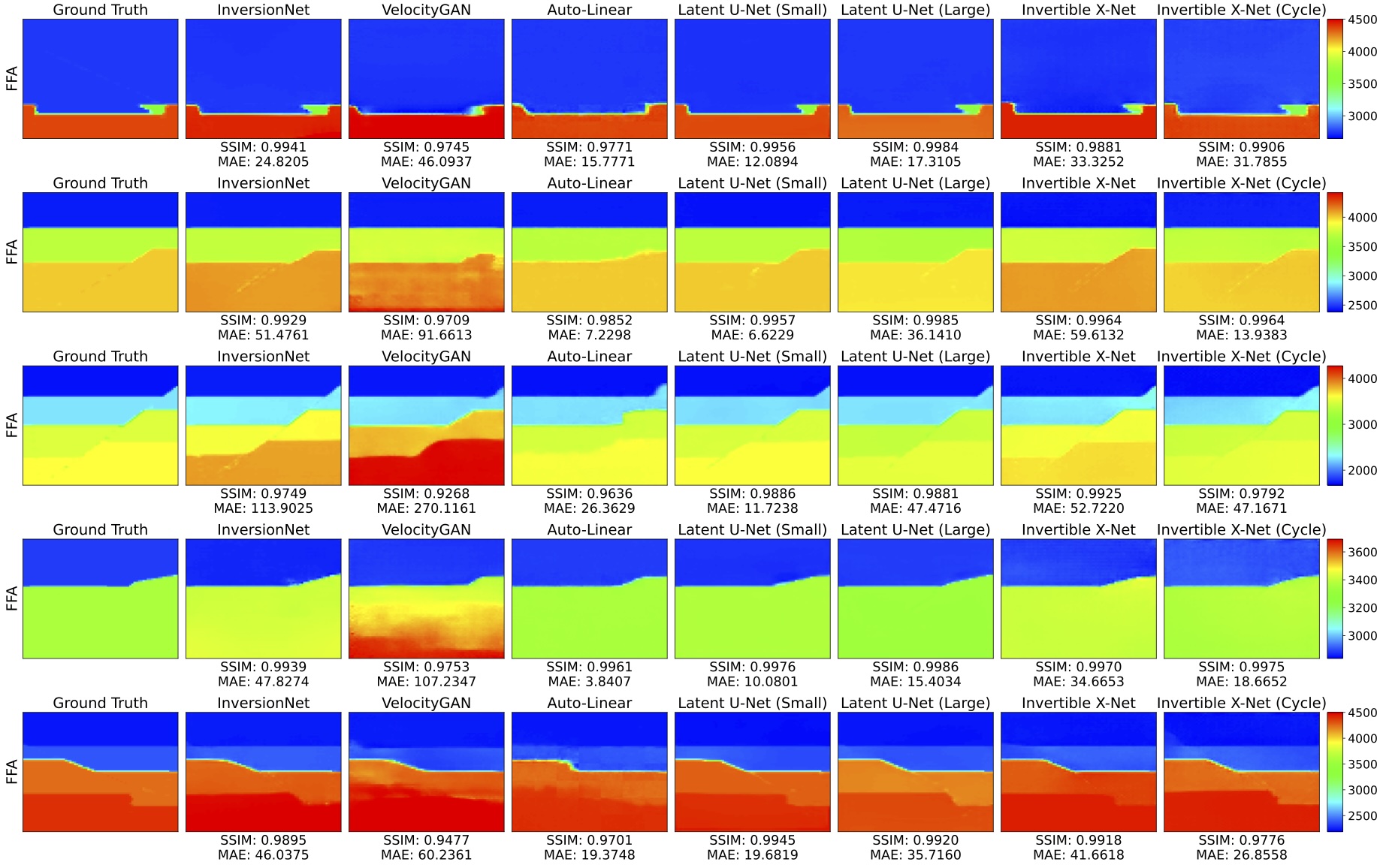}
    \end{subfigure}
    \caption{Visualization of predictions for forward and inverse problems on FFA dataset.}
    \label{fig:appendix_vis_ffa}
\end{figure*}

\begin{figure*}[ht]
    \centering
    \begin{subfigure}{\textwidth}
        \centering
        \includegraphics[width=1.0\textwidth]{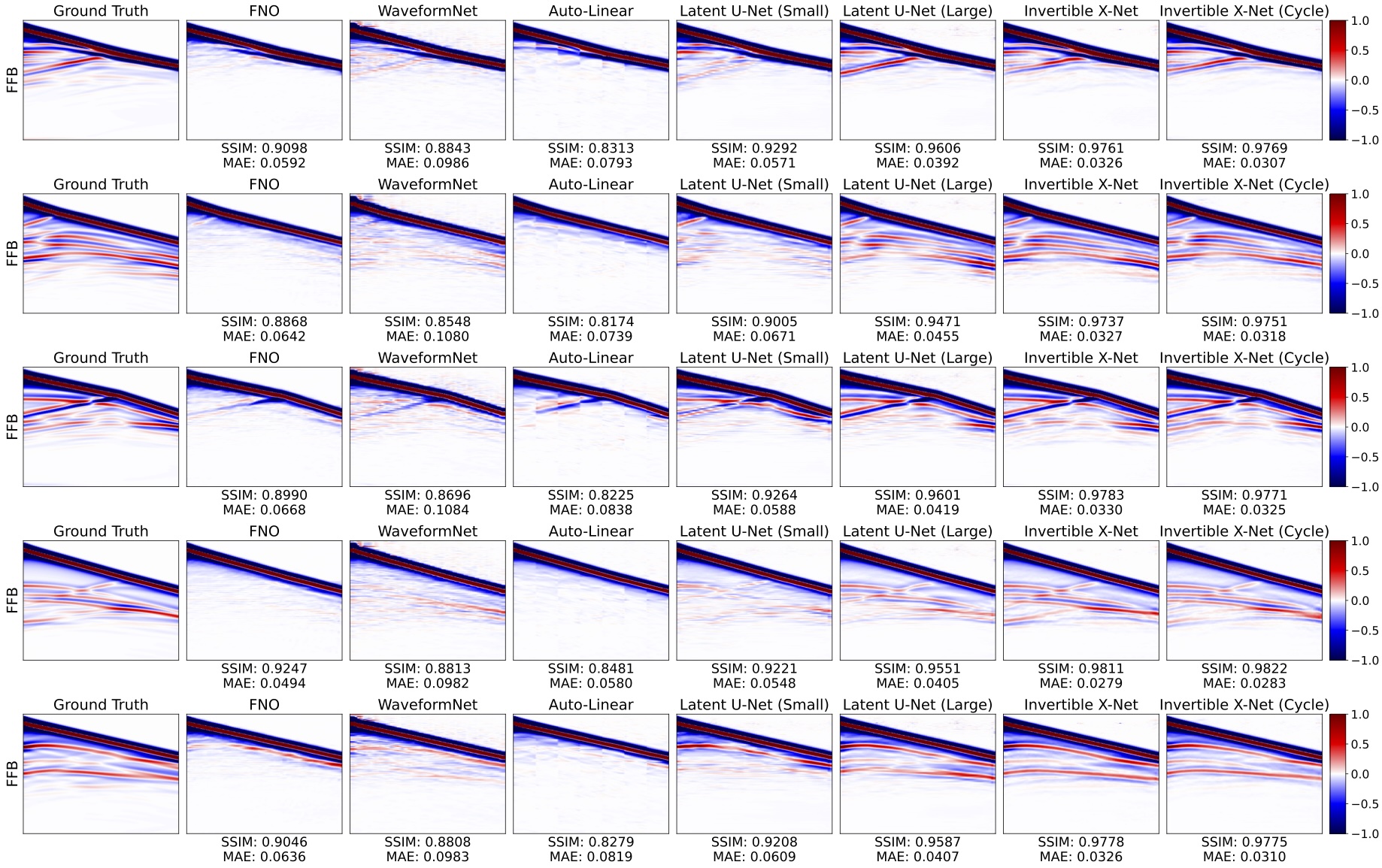}
    \end{subfigure}
    \vfill{}
    \vspace{0.5cm}
    \begin{subfigure}{\textwidth}
        \centering
        \includegraphics[width=1.0\textwidth]{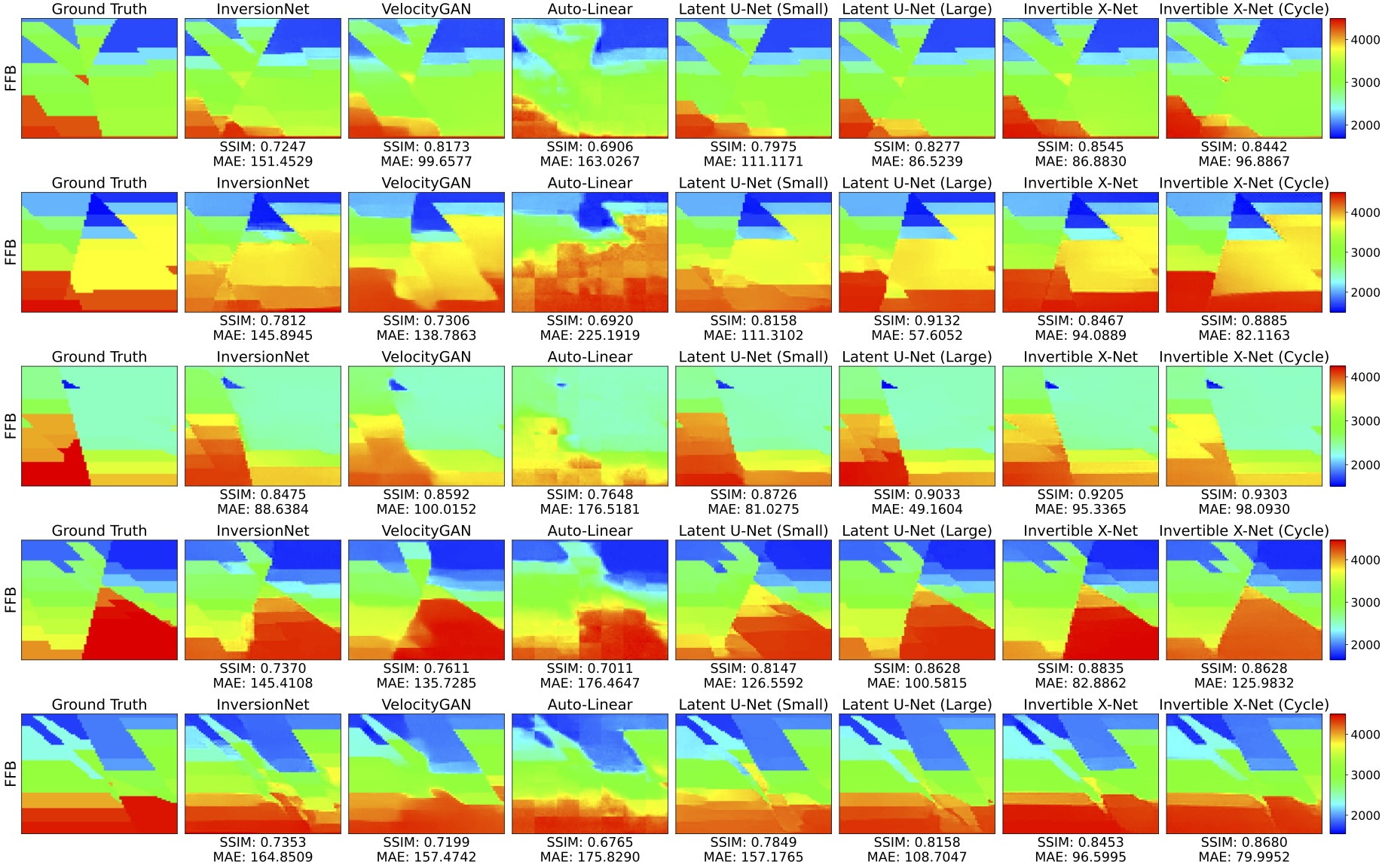}
    \end{subfigure}
    \caption{Visualization of predictions for forward and inverse problems on FFB dataset.}
    \label{fig:appendix_vis_ffb}
\end{figure*}

\begin{figure*}[ht]
    \centering
    \begin{subfigure}{\textwidth}
        \centering
       \includegraphics[width=1.0\textwidth]{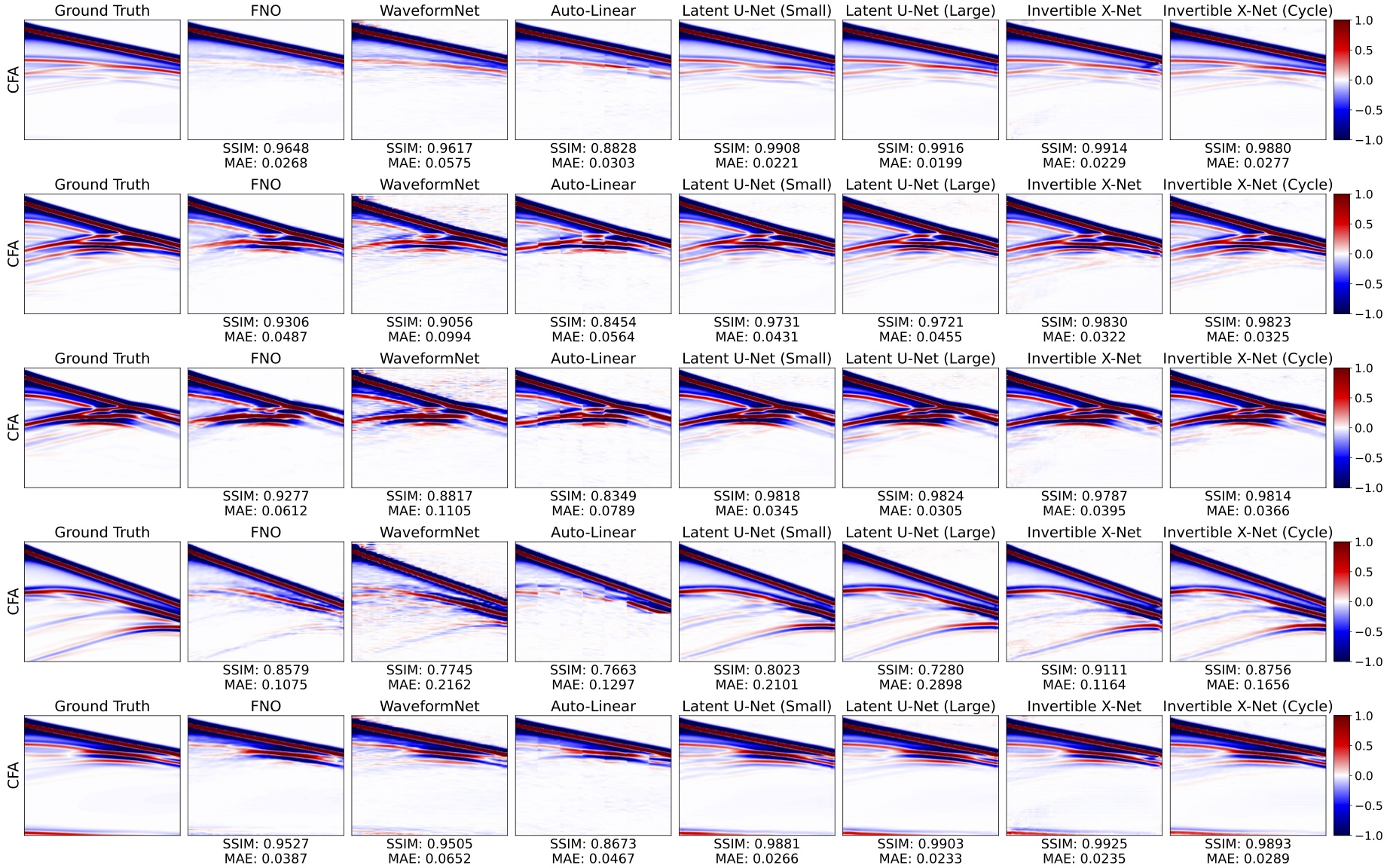}
    \end{subfigure}
    \vfill{}
    \vspace{0.5cm}
    \begin{subfigure}{\textwidth}
        \centering
        \includegraphics[width=1.0\textwidth]{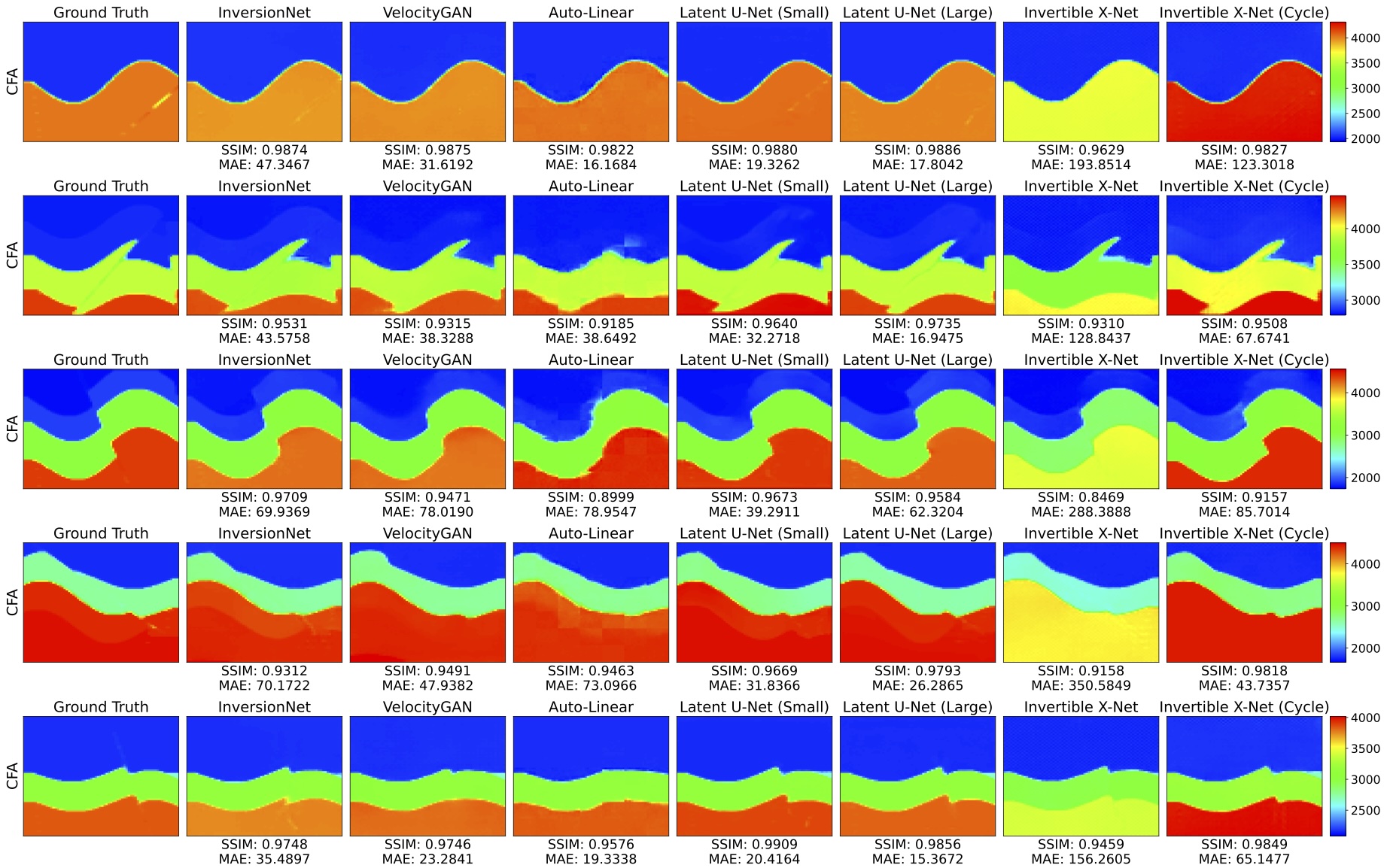}
    \end{subfigure}
    \caption{Visualization of predictions for forward and inverse problems on CFA dataset.}
    \label{fig:appendix_vis_cfa}
\end{figure*}

\begin{figure*}[ht]
    \centering
    \begin{subfigure}{\textwidth}
        \centering
        \includegraphics[width=1.0\textwidth]{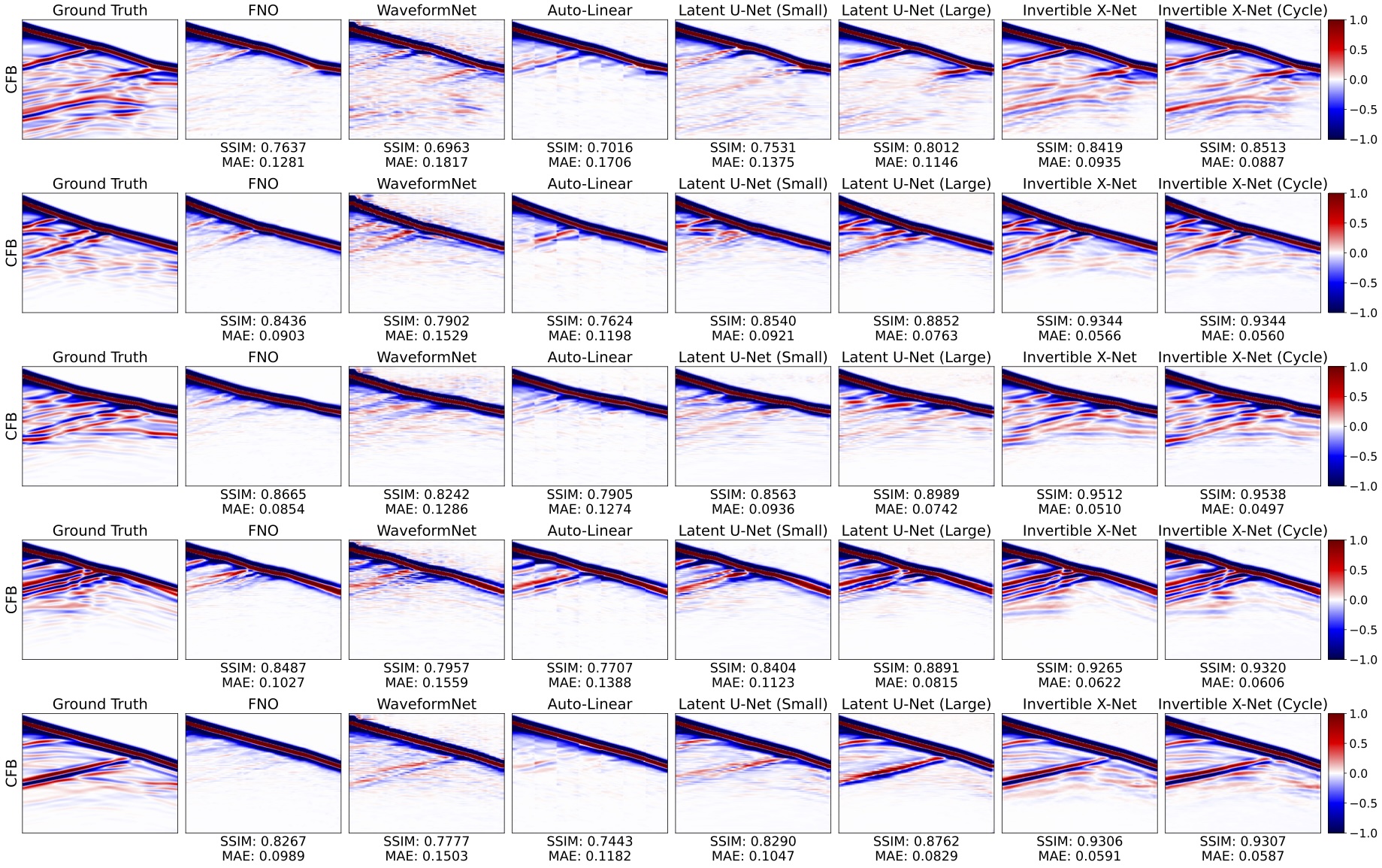}
    \end{subfigure}
    \vfill{}
    \vspace{0.5cm}
    \begin{subfigure}{\textwidth}
        \centering
        \includegraphics[width=1.0\textwidth]{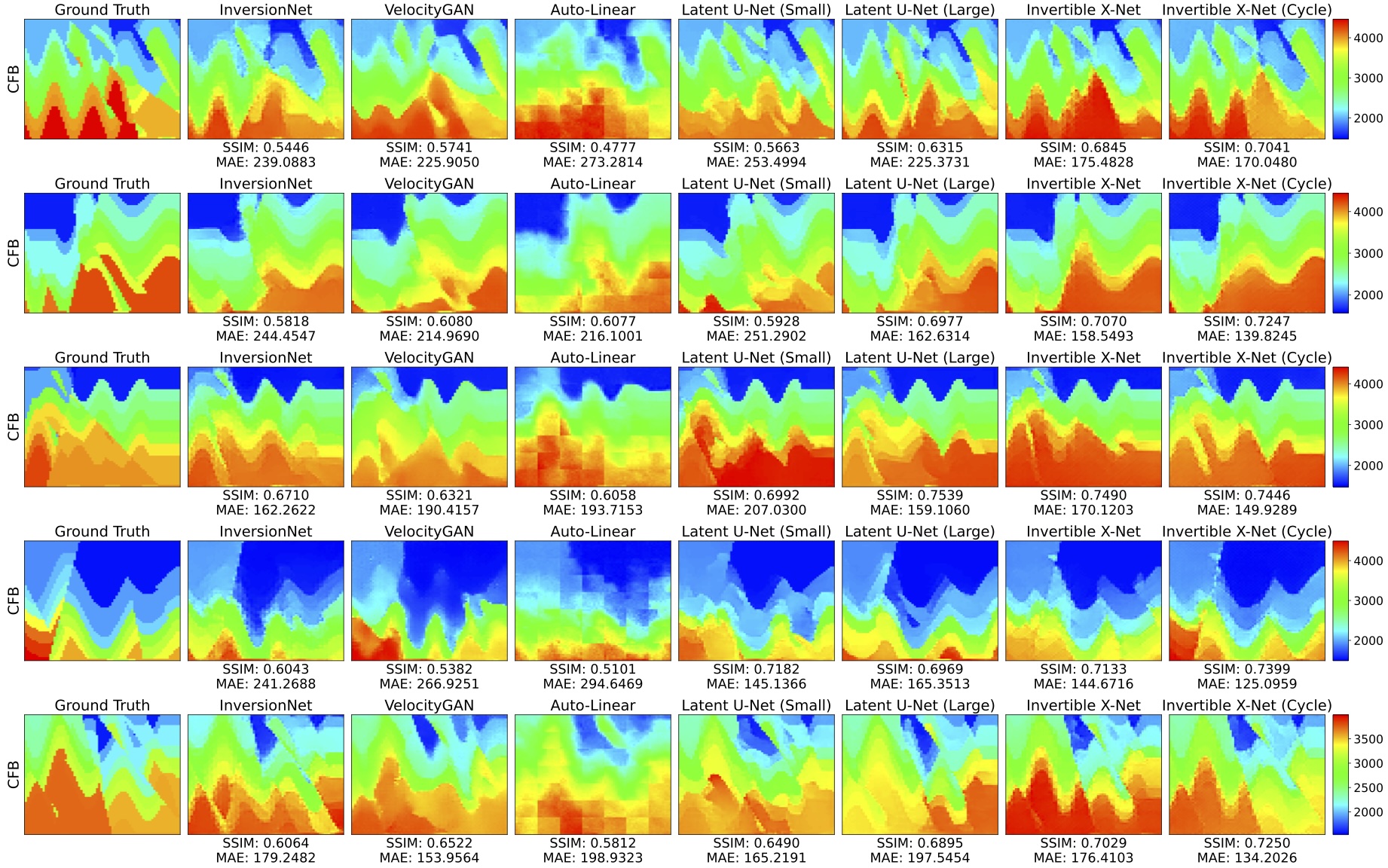}
    \end{subfigure}
    \caption{Visualization of predictions for forward and inverse problems on CFB dataset.}
    \label{fig:appendix_vis_cfb}
\end{figure*}

\begin{figure*}[ht]
    \centering
    \begin{subfigure}{\textwidth}
        \centering
       \includegraphics[width=1.0\textwidth]{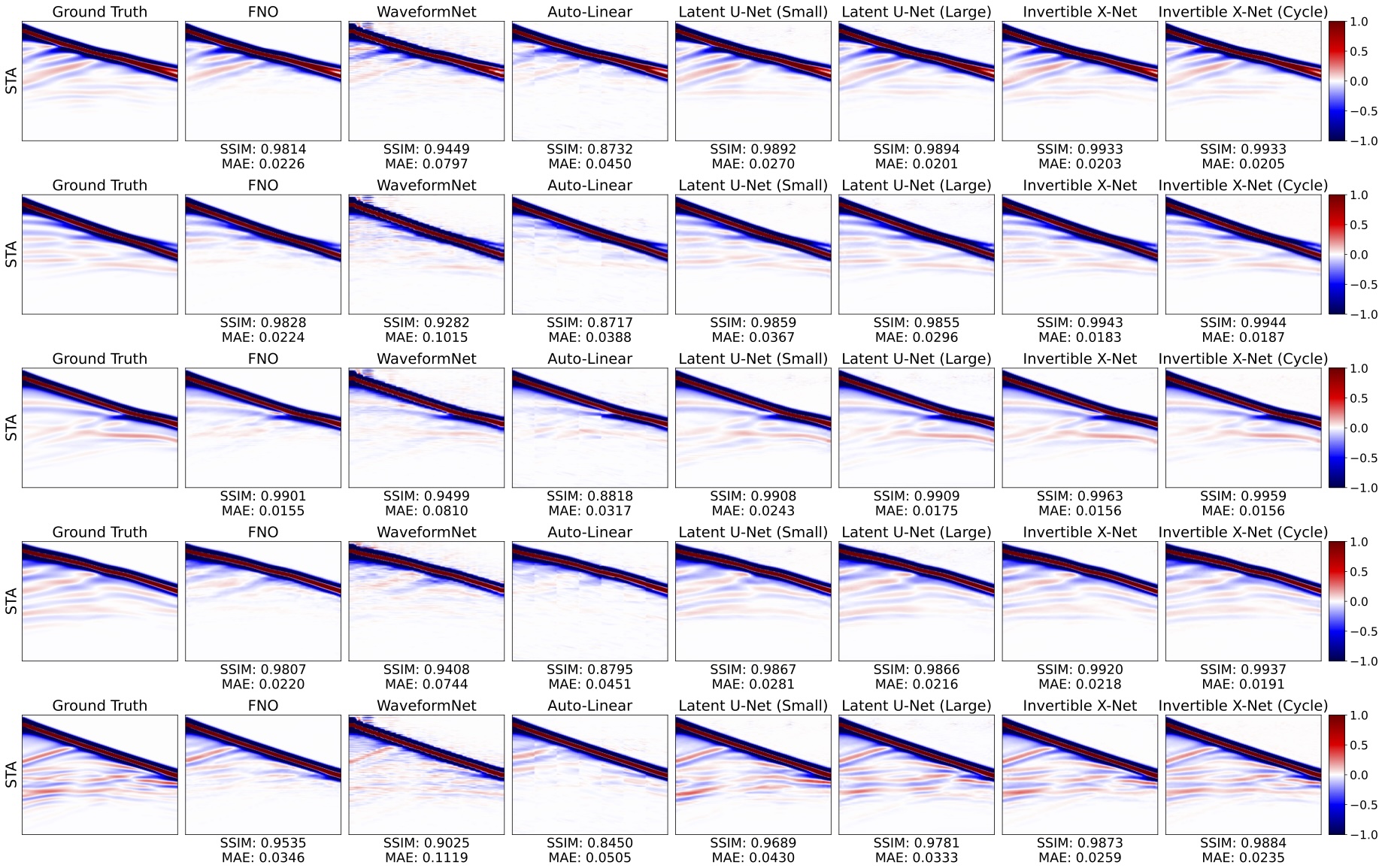}
    \end{subfigure}
    \vfill{}
    \vspace{0.5cm}
    \begin{subfigure}{\textwidth}
        \centering
        \includegraphics[width=1.0\textwidth]{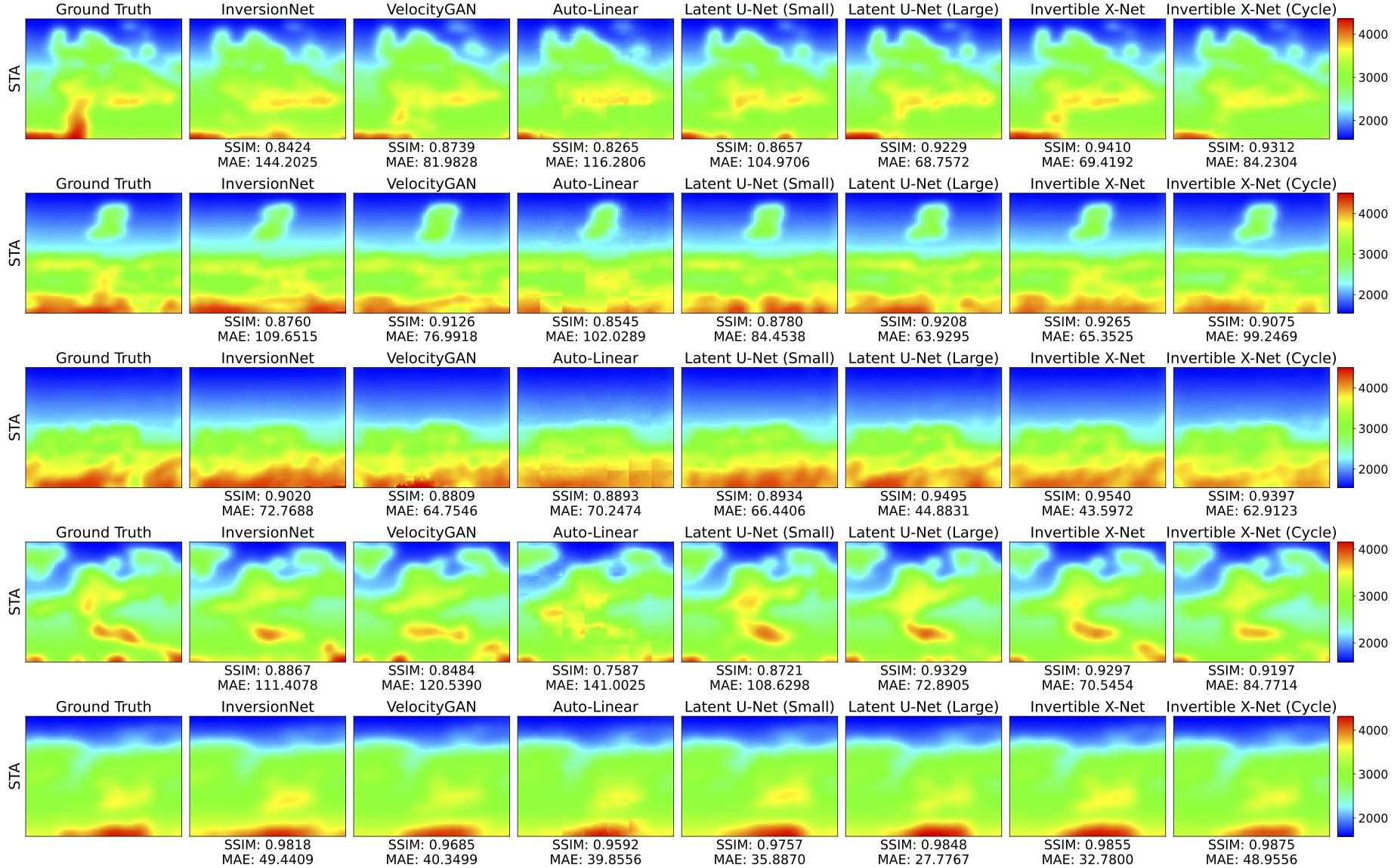}
    \end{subfigure}
    \caption{Visualization of predictions for forward and inverse problems on STA dataset.}
    \label{fig:appendix_vis_sta}
\end{figure*}

\begin{figure*}[ht]
\centering
    \begin{subfigure}{1.0\textwidth}
        \centering
        \includegraphics[width=1.0\textwidth]{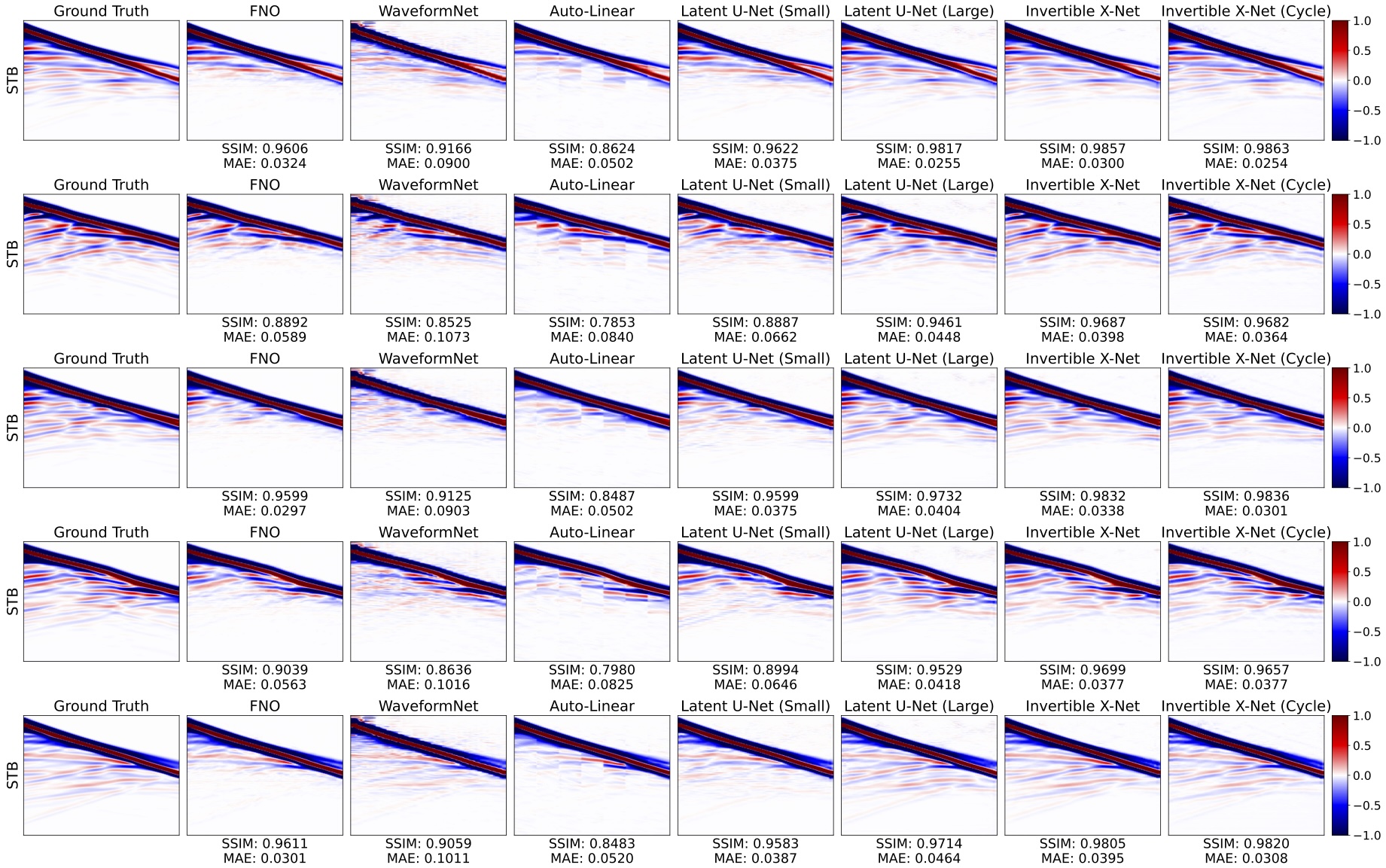}
    \end{subfigure}
    \vfill{}
    \vspace{0.5cm}
    \begin{subfigure}{1.0\textwidth}
        \centering
        \includegraphics[width=1.0\textwidth]{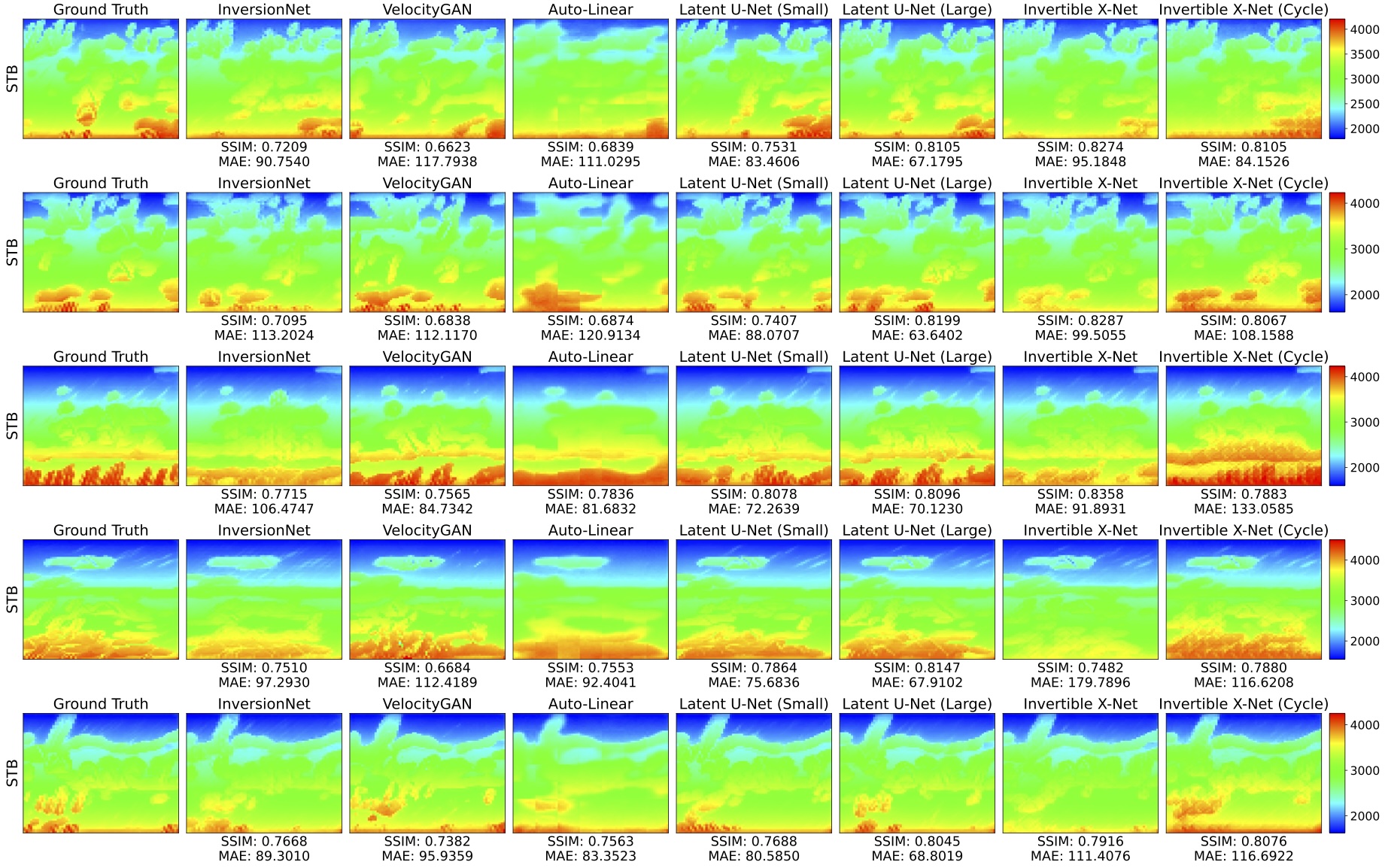}
    \end{subfigure}
    \caption{Visualization of predictions for forward and inverse problems on STB dataset.}
    \label{fig:appendix_vis_stb}
\end{figure*}
 \vspace{2cm}
\section{Code Appendix}
All the code required to train and evaluate the proposed methods, as well as the baselines, has been uploaded to an anonymous GitHub repository: \url{https://github.com/KGML-lab/Generalized-Forward-Inverse-Framework-for-DL4SI}. The data and corresponding processing code used in this work are sourced from the OpenFWI website: \url{https://openfwi-lanl.github.io/docs/data.html}.

\end{document}